\documentclass{ecai}

\usepackage{balance} 
\usepackage{amsmath}
\usepackage{amsthm}
\usepackage{amssymb}
\usepackage{multicol}
\usepackage{tcolorbox}
\usepackage{colortbl}
\usepackage{todonotes}
\usepackage{comment}
\usepackage{multicol}
\usepackage{tcolorbox}
\usepackage{multirow}

\newtheorem{prop}{Proposition}
\newtheorem{proposition}[prop]{Proposition}
\newtheorem{defn}{Definition}
\newtheorem{definition}[defn]{Definition}

\newtheorem{exmp}{Example}
\newtheorem{example}[exmp]{Example}

\newtheorem{theorem}{Theorem}

\newtheorem{property}{Property}

\newcommand{\bS}{{\mathbf{S}}}
\newcommand{\bQ}{{\mathbf{Q}}}
\newcommand{\bL}{{\mathbf{L}}}
\newcommand{\F}{{\mathtt{F}}}
\renewcommand{\d}{{\mathtt{dom}}}

\newcommand{\C}{\mathtt{C}}
\newcommand{\D}{\mathtt{D}}

\newcommand{\mbb}[1]{\ensuremath\mathbb{#1}}

\newcommand{\core}{\mathtt{Core}}
\newcommand{\lit}{\mathtt{Lit}(\mbb{T})}

\newcommand{\bLwf}{{{\mathbf{L_{wf}}}}}
\newcommand{\bLa}{{\mathbf{L_c}}}

\newcommand{\necp}{{\mathtt{gNec}}}

\newcommand{\wL}{{\mathtt{sNec}}}

\newcommand{\feat}{{\mathtt{Feat}}}

\newcommand{\sufc}{{\mathtt{gSuf}}}

\newcommand{\ssuf}{{\mathtt{sSuf}}}
\newcommand{\csuf}{{\mathtt{cSuf}}}

\newcommand{\dd}{{\mathtt{d}}}

\newenvironment{exs}[1]{\begin{trivlist}\item[]\textbf{Ex. \ref{#1} (Cont)}}{\end{trivlist}} 

\newcommand{\BibTeX}{\rm B\kern-.05em{\sc i\kern-.025em b}\kern-.08em\TeX}

\begin{document}
 \begin{frontmatter}
\title{Axiomatic Foundations of Counterfactual Explanations}

\author[A]{\fnms{Leila}~\snm{Amgoud}\thanks{Corresponding Author. Email: leila.amgoud@irit.fr}}
\author[B]{\fnms{Martin}~\snm{Cooper}}

\address[A]{IRIT, CNRS, France}
\address[B]{IRIT, Toulouse University, France}




\begin{abstract}
Explaining autonomous and intelligent systems is critical in order to  
improve trust in their decisions. 
\textit{Counterfactuals}  
have emerged as one of the most compelling forms of explanation. They address “why not” questions 
by revealing how decisions could be  altered. 
Despite the growing literature, most existing explainers focus on a single \textit{type} of 
counterfactual and are restricted to \textit{local} explanations, focusing on individual instances. 
There has been no systematic study of alternative counterfactual types, nor of \textit{global} 
counterfactuals that shed light on a system’s overall reasoning process. 

This paper addresses the two gaps by introducing an axiomatic framework built on a set of desirable properties for counterfactual explainers. It proves impossibility theorems showing that no single explainer can satisfy certain axiom combinations simultaneously, and fully characterizes all compatible sets. Representation theorems then establish five one-to-one correspondences between specific subsets of axioms and the families of explainers that satisfy them. Each family gives rise to a distinct type of counterfactual explanation, uncovering five fundamentally different types of counterfactuals. Some of these correspond to local explanations, while others capture global explanations. Finally, the framework situates existing explainers within this taxonomy, formally characterizes their behavior, and analyzes the computational complexity of generating such explanations.
\end{abstract}

\end{frontmatter}








\section{Introduction}

Explaining the decisions of autonomous and intelligent systems has attracted significant attention over the past decade (see \cite{Miller2019, molnar20} for surveys). The overarching goal of this research is to enhance trust in the decisions made by such systems. 
Two primary types of explanations have been extensively studied. The first type addresses “why” questions by identifying the key factors that led to a specific decision (e.g., \cite{ecai23, darwiche20, ICML21}). For example:
“Alice is not eligible for a loan because \textit{her annual income is} $\$$20K.” 
The second type—\textit{counterfactuals}—answers “why not” questions by describing how the decision could be changed. For instance: “If Alice’s annual income had been $\$$35K, she would have been eligible for a loan.”

Byrne argues in \cite{Byrne16, Byrne19} that individuals affected by decisions often prefer explanations that show how they can achieve better outcomes. This insight has motivated the development of numerous counterfactual explainers (see \cite{Karimi22, Guidotti24, JiangL0T24} for recent comprehensive surveys), applicable to both autonomous agents and multi-agent systems (e.g., \cite{RodriguezTW25, NooraniDHD25,ChenW0XHWX25,LeofanteL23}).

The vast majority of existing work has focused on \textit{local counterfactuals}—explanations tailored to individual input instances—and has typically considered only a \textbf{single type of counterfactual}. A counterfactual for a given input is defined as the "closest" alternative input that yields a different outcome. The various explainers differ in how they formalize this notion of closeness. To guide its definition and the selection of "reasonable" counterfactuals, several properties have been proposed in the literature (e.g., \cite{BodriaGGNPR23, Guido19, VermaBHHDS24, WinikoffTR25, WarrenSK22}). 
For instance, \textit{minimality} requires that a counterfactual involves the smallest possible change to the original input, while \textit{diversity} promotes generating distinct counterfactuals for the same input (see Section~\ref{related-work} for additional properties). Existing explainers are typically evaluated based on the extent to which they satisfy these properties and on the strategies they employ to generate counterfactuals.

Despite a rich body of work on counterfactual explanations, there has been no systematic study of alternative 
types of local counterfactuals. Moreover, global counterfactuals—which aim to capture and explain a model’s 
overall decision-making behavior—have received far less attention; to date, the only notable contribution is 
that of \cite{Amgoud22}. As a result, the theoretical trade-offs between local and global counterfactuals remain poorly understood.

This paper addresses these gaps by proposing a formal framework for systematically defining and comparing 
different types of local and global counterfactuals. Our approach is axiomatic—an established method in modern 
theoretical science that builds systems upon clearly stated axioms (basic properties) from which representation theorems can be derived.
Applying this approach to counterfactuals offers several benefits.
First, it brings \textbf{clarity} and \textbf{rigor} by precisely specifying desirable properties, fostering a shared 
understanding of what constitutes a sound counterfactual.
Second, it enables a \textbf{systematic comparison} of counterfactual types based on the axioms they satisfy, making their differences, strengths, and limitations explicit. This supports a principled and objective 
evaluation that goes beyond empirical performance alone.
Finally, the framework is \textbf{model-agnostic} and \textbf{general}, ensuring that the results are broadly 
applicable across diverse settings.

\vspace{0.1cm}

The contributions of this paper are sixfold and complement the existing literature on counterfactual explanations:

\begin{enumerate}
\item It introduces an \textit{axiomatic framework} grounded on a set of nine axioms.

\item It provides \textit{impossibility theorems}, showing that no explainer can satisfy certain combinations of axioms simultaneously.

\item It fully characterizes all compatible sets of axioms.

\item It presents \textit{representation theorems} that define one-to-one correspondences between specific subsets of axioms and the families of counterfactual explainers that satisfy them. These characterizations are valuable because they precisely and \textbf{exhaustively} describe \textbf{all} explainers that meet a given set of axioms. Key findings include:
\begin{itemize}
\item The nine axioms distinguish five distinct types of counterfactuals, each generated by a different family of explainers.
\item There are two fundamental forms of counterfactual explanation: \emph{necessary} and \emph{sufficient} reasons.
\item For each form (necessary, sufficient), one type corresponds to \emph{global} explanations, while the remaining types focus on \emph{local} explanations.
\item Among local sufficient reasons, there are two subtypes: \textit{sceptical} and \textit{credulous}.
\end{itemize}

\item It formally characterizes existing counterfactual explainers, showing that nearly all fall within the family generating local credulous sufficient reasons. The global explainers proposed in \cite{Amgoud22} correspond to the family of global necessary reasons. Thus, the framework uncovers three previously unrecognized types of counterfactuals.

\item It analyzes the computational complexity of generating explanations.
\end{enumerate}

\vspace{0.1cm}

\noindent The paper is organized as follows. 
Section~\ref{back} presents the necessary background, 
Section~\ref{axioms} defines axioms, 
Sections~\ref{necessary-reasons} and \ref{sufficient-reasons} are devoted to representation theorems. 
Section~\ref{comparison} compares the types, 
Section~\ref{related-work} discusses related work,  and 
Section~\ref{complexity} provides complexity results. 
We give proofs of all propositions and theorems from the paper
in the appendix. 


\section{Background}\label{back}

Throughout the paper, we consider a \textit{classification theory} as a tuple 
$\mbb{T} = \langle \F, \d, \C\rangle$ comprising a finite set $\F$ of \textit{features}, a function 
$\d$ which returns the \textit{domain} of every feature $f \in \F$ such that $\d(f)$ is finite and $|\d(f)| > 1$, 
and a finite set $\C$ of \textit{classes} with $|\C| \geq 2$.  
Let $\lit$ be the set of all literals 
in $\mbb{T}$, where a \emph{literal} is a pair $(f,v)$ where $f \in \F$ and $v \in \d(f)$. 
%
%
A \textit{partial assignment} is any set of literals with each feature in 
$\F$ occurring at most once; 
it is called an \textit{instance} when every feature appears once. 
We denote by $\mbb{E}(\mbb{T})$ the set of all possible partial assignments and by 
$\mbb{F}(\mbb{T})$ the \textit{feature space}, i.e., the set of all instances, of theory $\mbb{T}$.  
For a set $E \in \mbb{E}(\mbb{T})$, $\mbb{E}(\mbb{T})\oslash E$ denotes the set of all 
partial assignments that do not contain any literal from $E$, i.e., 
for any $X \in \mbb{E}(\mbb{T})\oslash E$, $X \in \mbb{E}(\mbb{T})$ and $E \cap X = \emptyset$.
For $E \in \mbb{E}(\mbb{T})$ and $x \in \mbb{F}(\mbb{T})$, we denote by $x_{\downarrow E}$ the partial assignment such that $x_{\downarrow E} = E \cup \{(f,v) \in x \mid \nexists  v' \in \d(f) \mbox{ s.t. } v' \neq v \mbox{ and } (f, v') \in E\}$. The idea is to replace the values of common features by those in $E$. Consider a theory consisting of two binary features, the instance $x = ((f_1,0), (f_2,0))$ and 
$E = \{(f_2,1)\}$. Then, $x_{\downarrow E} =  ((f_1,0), (f_2,1))$.

\begin{property}\label{propy:instances}
Let $\mbb{T}$ be classification theory. $\forall x \in \mbb{F}(\mbb{T})$, 
$\forall E \in \mbb{E}(\mbb{T})$,  
$x_{\downarrow E} \in  \mbb{F}(\mbb{T})$.
\end{property}

\noindent For any $x \in \mbb{F}(\mbb{T})$ and any $E \in \mbb{E}(\mbb{T})$, $x\circleddash E = \{y \in \mbb{F}(\mbb{T}) \mid x\setminus y = E\}$.
Consider a theory consisting of two  
features, where $\d(f_1) = \{0,1\}$ and $\d(f_2) = \{0,1,2\}$, the instance $x = ((f_1,0), (f_2,0))$ 
and $E = \{(f_2,0)\}$. Then, $x\circleddash E =  \{y, z\}$ where 
$y = ((f_1,0), (f_2,1))$ and $z = ((f_1,0), (f_2,2))$. 

A \textit{classifier} on a theory $\mbb{T}$ is a \textbf{surjective} function $\kappa : \mbb{F}(\mbb{T}) \rightarrow \C$ mapping every instance to a class such that every class is assigned to at least one instance. 

Among literals, we distinguish those that are \textit{core}, or mandatory, to a given class under a given classifier. 

\begin{definition}\label{def:core}
Let $\mbb{T} = \langle \F, \d, \C\rangle$ be a theory, $c \in \C$, $\kappa$ a classifier on $\mbb{T}$, $l \in \lit$.  
$l$ is \emph{core} to $c$ iff $\forall x \in \mbb{F}(\mbb{T})$ s.t. $\kappa(x) = c$, $l \in x$. 
$\core(c)$ is the set of core literals of $c$.
\end{definition}

The set $\core(c)$, for $c \in \C$, may be empty as shown below. 

\begin{example}\label{ex1bis}
	Let $\kappa$ be a classifier that predicts where to take a friend. It is a function of the temperature $t$ and 
	the friend's favourite activity $a$, with 
	$\d(t) = \{\text{hot, mild, freezing}\}$, 
	$\d(a) = \{\text{climbing, reading}$, $\text{skiing}\}$, and 
	$\C = \{\text{beach, mountain, cinema}\}$.
	$\kappa$ predicts the beach on hot days, the mountain if it is mild and the friend likes climbing
	or it is freezing cold and she likes skiing, otherwise the cinema. 
    Below are all instances and their decisions. 
     \begin{center}
		\begin{tabular}{c|cc|c}\hline
			$\mbb{F}(\mbb{T})$  & $t$      &        $a$            & $\kappa(x_i)$   \\\hline
\rowcolor{orange!10} 			$x_1$    & $\text{hot}$        & $\text{climbing}$     & 
    $\text{beach}$ \\
\rowcolor{orange!10} 			$x_2$    & $\text{mild}$       & $\text{climbing}$     & $\text{mountain}$ \\          
\rowcolor{orange!10} 			$x_3$    & $\text{freezing}$   & $\text{reading}$      & $\text{cinema}$ \\
			$x_4$    & $\text{freezing}$   & $\text{skiing}$       & $\text{mountain}$ \\
			$x_5$    & $\text{freezing}$   & $\text{climbing}$     & $\text{cinema}$ \\
		    $x_6$    & $\text{hot}$        & $\text{reading}$      & $\text{beach}$ \\
		    $x_7$    & $\text{hot}$        & $\text{skiing}$       & $\text{beach}$ \\ 
		    $x_8$    & $\text{mild}$       & $\text{reading}$      & $\text{cinema}$ \\
		    $x_9$    & $\text{mild}$       & $\text{skiing}$       & $\text{cinema}$ \\	
			\hline
		\end{tabular}
	\end{center}
  
\noindent The following lists the set of core literals corresponding to each class.
\begin{itemize}
    \item $\core(\text{beach}) = \{(t,\text{hot})\}$, 
    \item $\core(\text{mountain}) = \emptyset$,
    \item $\core(\text{cinema}) = \emptyset$.
\end{itemize}
\end{example}

Let $X$ be a set of objects. 
A \emph{preordering} on $X$ is a binary relation $\succeq$ on $X$ that is \textit{reflexive} and \textit{transitive}.  It is \emph{total} iff for all $x, y \in X$, $x \succeq y$ or $y \succeq x$. 
The notation $x \succeq y$ stands for $x$ is at least as preferred as $y$; $x \succ y$ is a shortcut for $x \succeq y$ and $y \not\succeq x$. 
Let $\max(X, \succeq) = \{E \in X \mid \nexists E' \in X \mbox{ such that } E' \succ E\}$ be the set of most preferred 
elements of $X$.
Finally, a \emph{weighting} on the set $X$ is a function $\sigma: X \rightarrow [0, +\infty)$.
\begin{table*}
\begin{center}	
\begin{tabular}{lll}
\hline\hline 
& \textit{Success}    & $\forall \bQ = \langle \mbb{T}, \kappa, x\rangle$, 
$\bL(\bQ) \neq \emptyset$.  \\

$\makebox(0,0){\rotatebox[origin=c]{90}{ \ \ General }} \ \ $

& \textit{Non-Triviality} & $\forall \bQ = \langle \mbb{T}, \kappa, x\rangle$, $\emptyset \notin \bL(\bQ)$.  \\ 

&  \textit{Equivalence}    & $\forall \bQ = \langle \mbb{T}, \kappa, x\rangle$, 
                             $\forall \bQ' = \langle \mbb{T}, \kappa, x'\rangle$, 
                        if $\kappa(x) = \kappa(x')$, then $\bL(\bQ) = \bL(\bQ')$. \\
\hline\hline
			
& \textit{Feasibility}  & $\forall \bQ = \langle \mbb{T}, \kappa, x\rangle$, $\forall E \in \bL(\bQ)$,   $E \subseteq x$.  \\

$\makebox(0,0){\rotatebox[origin=c]{90}{\ \ \ \shortstack{Nec. \vspace{-0.7mm} \\ Reasons} }}$

& \textit{Coreness} & $\forall \bQ = \langle \mbb{T}, \kappa, x\rangle$, 
                      $\forall E \in \bL(\bQ)$, $E \subseteq \core(\kappa(x))$. \\

& \textit{Sceptical Validity}     & $\forall \bQ = \langle \mbb{T}, \kappa, x\rangle$, 
			                $\forall E \in \bL(\bQ)$, $\forall y \in x\circleddash E$, 
                            $\kappa(y) \neq \kappa(x)$. \\  
                           
\hline\hline

& \textit{Novelty}     &    $\forall \bQ = \langle \mbb{T}, \kappa, x\rangle$, 
			                $\forall E \in \bL(\bQ)$, $x \cap E = \emptyset$. \\

$\makebox(0,0){\rotatebox[origin=c]{90}{\ \ \ \shortstack{Suff. \\ Reasons} \ }}$

& \textit{Strong Validity}     & $\forall \bQ = \langle \mbb{T}, \kappa, x\rangle$, 
			                $\forall E \in \bL(\bQ)$, $\nexists y \in \mbb{F}(\mbb{T})$ s.t. 
                            $E \subseteq y$ and $\kappa(y) = \kappa(x)$. \\   

& \textit{Weak Validity}  & $\forall \bQ = \langle \mbb{T}, \kappa, x\rangle$, 
$\forall E  \in \bL(\bQ)$, $\kappa(x_{\downarrow E}) \neq \kappa(x)$. \\

\hline\hline
\end{tabular}
\caption{Axioms for explainer $\bL$.}
\label{tab:axioms}
\end{center}	
\end{table*}

\section{Axiomatic Framework}\label{axioms}

We introduce an axiomatic framework that defines counterfactual explainers$-$functions generating counterfactual explanations. Before presenting the axioms, we begin by introducing the notion of a \emph{query}, which represents the question posed to an explainer.   

\begin{definition}\label{Queries}
A \emph{query} is a tuple $\bQ = \langle \mbb{T}, \kappa, x\rangle$ such that $\mbb{T}$ is a classification theory, 
$\kappa$ a classifier on $\mbb{T}$ and $x \in \mbb{F}(\mbb{T})$. 
\end{definition}

A \textit{counterfactual explainer}—or explainer for short—is a function that maps each query to a set of partial assignments, each representing a \textit{counterfactual}. As we  will see in the following sections, there is no single definition of a counterfactual. Nevertheless, it is widely acknowledged in the literature that all such definitions aim to provide insights into how a prediction made by a classifier (or an AI system) can be altered.  

\begin{definition}\label{explainer}
A \emph{counterfactual explainer} is a function $\bL$ mapping every query  $\bQ = \langle \mbb{T}, \kappa, x\rangle$ into  a subset of $\mbb{E}(\mbb{T})$. 
Every $E \in \bL(\bQ)$ is a \emph{counterfactual}.
\end{definition}

\noindent \textbf{Notation:} Let $\bL$, $\bL'$ be two explainers. We write 
$\bL \sqsubseteq \bL'$ to denote that for any query $\bQ$, 
$\bL(\bQ) \subseteq \bL'(\bQ)$. 

\vspace{0.15cm}
Table~\ref{tab:axioms} introduces axioms—formal properties of explainers—three of which, Success, Non-Triviality, and Coreness, are borrowed from \cite{Amgoud22}, while the remaining ones are novel. The axioms are grouped into three categories: the first contains general properties that any explainer might satisfy, while the other two define distinct forms of counterfactuals as will be shown in next sections.

\vspace{0.1cm}
\emph{Success} guarantees at least one explanation per query;  
\emph{Non-Triviality} rejects empty explanations since uninformative; and 
\emph{Equivalence} ensures that instances with the same decisions get the same explanations. This axiom is suitable for explainers that produce global explanations—i.e., those that concern classes rather than input instances. 
\emph{Feasibility} states that an explanation must be part of the instance whose decision is being explained. 
As we will see later, for some type of counterfactuals, this axiom helps identify the features whose values need to be changed in order to alter the instance’s prediction.
\emph{Coreness} goes further by ensuring that an explanation contains only core literals for the decision. The logic is that missing a core literal certainly leads to another class. 
\emph{Sceptical validity} ensures that an explanation $E$ surely leads to a change in a decision. Indeed, every instance that differs from the one being explained only in $E$ 
has a different class. 
%
%
\emph{Novelty} ensures that an explanation does not share a literal with the original instance.  The idea is that in order to inform a user how to change an output, the  explanation must involve modifications to the input. This axiom 
concerns some types of counterfactuals. 
\emph{Strong validity} is similar in spirit to Sceptical validity as it ensures that an explanation $E$ must lead to a change in a decision. Indeed, any instance that contains $E$ has a different prediction.  
\emph{Weak Validity} is less demanding, since it only requires that the instance obtained by replacing the new literals in the original instance is of a different class. 
.

\vspace{0.1cm}
The axioms are not all independent, as demonstrated below. Nevertheless, each plays a crucial role in defining distinct types of counterfactual explanations.

\begin{proposition}\label{links}
The following implications hold. 
\begin{itemize}
\item Coreness $\Rightarrow$ Feasibility. 
\item Coreness and Non-Triviality $\Rightarrow$ Sceptical Validity.
\item Sceptical Validity $\Rightarrow$ Non-Triviality.
\item Weak Validity $\Rightarrow$ Non-Triviality.  
\item Strong Validity $\Rightarrow$ Weak Validity. 
\item Novelty and Non-Triviality $\Rightarrow$ Sceptical Validity.
\end{itemize}
\end{proposition}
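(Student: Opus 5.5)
The plan is to prove the six implications separately. Each one follows directly from the definitions of the axioms in Table~\ref{tab:axioms}, together with Definition~\ref{def:core}, Property~\ref{propy:instances} and the operators $x_{\downarrow E}$ and $x\circleddash E$. In every case I fix an arbitrary query $\bQ = \langle \mbb{T}, \kappa, x\rangle$ and an arbitrary $E \in \bL(\bQ)$.

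\emph{Coreness $\Rightarrow$ Feasibility.} The instance $x$ itself satisfies $\kappa(x) = \kappa(x)$, so by Definition~\ref{def:core} every core literal of $\kappa(x)$ belongs to $x$. Hence $\core(\kappa(x)) \subseteq x$, and Coreness gives $E \subseteq \core(\kappa(x)) \subseteq x$.

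\emph{Coreness and Non-Triviality $\Rightarrow$ Sceptical Validity.} Non-Triviality gives some literal $l \in E$, and by Coreness $l \in \core(\kappa(x))$, so also $l \in x$. Take any $y \in x\circleddash E$. Then $x \setminus y = E$, so $l \in x$ but $l \notin y$. Since $l$ is core to $\kappa(x)$, $y$ cannot be classified as $\kappa(x)$, i.e.\ $\kappa(y) \neq \kappa(x)$.

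\emph{Sceptical Validity $\Rightarrow$ Non-Triviality.} Argue by contraposition: suppose $\emptyset \in \bL(\bQ)$. The set $x\circleddash\emptyset$ consists of the instances $y$ with $x \setminus y = \emptyset$, i.e.\ $x \subseteq y$. Both are instances (one literal per feature), so $y = x$ and $x\circleddash\emptyset = \{x\}$. Since $\kappa(x) = \kappa(x)$, Sceptical Validity fails.

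\emph{Weak Validity $\Rightarrow$ Non-Triviality.} Again by contraposition: $x_{\downarrow \emptyset} = x$ directly from the definition. So $\emptyset \in \bL(\bQ)$ would require $\kappa(x) \neq \kappa(x)$, which is impossible.

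\emph{Strong Validity $\Rightarrow$ Weak Validity.} By Property~\ref{propy:instances}, $x_{\downarrow E} \in \mbb{F}(\mbb{T})$, and by construction $E \subseteq x_{\downarrow E}$. Strong Validity forbids any instance containing $E$ from having class $\kappa(x)$, so $\kappa(x_{\downarrow E}) \neq \kappa(x)$.

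\emph{Novelty and Non-Triviality $\Rightarrow$ Sceptical Validity.} I expect this to be the only mildly delicate step, because the conclusion holds vacuously rather than through the classifier. For any instance $y$ we have $x \setminus y \subseteq x$. Novelty gives $E \cap x = \emptyset$ and Non-Triviality gives $E \neq \emptyset$, so $x \setminus y = E$ is impossible. Hence $x\circleddash E = \emptyset$, and the universally quantified condition of Sceptical Validity holds trivially. The point to state carefully is that Non-Triviality is genuinely needed here: for $E = \emptyset$ we would get $x\circleddash E = \{x\}$ and the condition would fail, exactly as in the third item.
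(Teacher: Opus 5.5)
Your proof is correct and follows the paper's argument item by item. The differences are cosmetic: you use the definition of core literals directly where the paper uses the intersection characterization of $\core$, you argue the last item directly via $x\circleddash E=\emptyset$ rather than by contradiction, and you make explicit that $x\circleddash\emptyset=\{x\}$, which the paper leaves as ``obvious''.
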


We present an impossibility theorem showing the \textit{incompatibility} of certain axioms—no explainer can satisfy all of them simultaneously. As we will show later, each compatible subset gives rise to a distinct form of counterfactual. 

\begin{theorem}\label{incompatibility}
The axioms of every set $(\mathtt{I}_i)$ are incompatible.
\begin{itemize} 
    \item $(\mathtt{I}_1)$ Success, Non-Triviality and Coreness. 
    \item $(\mathtt{I}_2)$ Success, Feasibility and Sceptical Validity.
    \item $(\mathtt{I}_3)$ Success, Novelty and Strong Validity.  
    \item $(\mathtt{I}_4)$ Success, Non-Triviality, Feasibility and Novelty. 
    \item $(\mathtt{I}_5)$ Success, Feasibility and Weak Validity.  
    \item $(\mathtt{I}_6)$ Success, Non-Triviality, \!Equivalence and Feasibility.
    \item $(\mathtt{I}_7)$ Success, Non-Triviality, Equivalence and Novelty.
\end{itemize}
\end{theorem}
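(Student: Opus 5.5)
Each incompatibility is shown by exhibiting one query $\bQ = \langle \mbb{T}, \kappa, x\rangle$ for which any explainer $\bL$ satisfying the listed axioms must have $\bL(\bQ) = \emptyset$, contradicting Success. Proposition~\ref{links} shortens several cases.

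\textbf{Cases $(\mathtt{I}_1)$, $(\mathtt{I}_2)$, $(\mathtt{I}_5)$.} For $(\mathtt{I}_1)$, Example~\ref{ex1bis} already works. Take $x = x_2$, whose class is mountain and $\core(\text{mountain}) = \emptyset$. Coreness forces every $E \in \bL(\bQ)$ to be a subset of $\emptyset$, so $E = \emptyset$, which Non-Triviality forbids. Hence $\bL(\bQ) = \emptyset$.

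For $(\mathtt{I}_2)$ and $(\mathtt{I}_5)$, take a single binary feature $f$ and a classifier with two classes. Add a second binary feature $g$ on which $\kappa$ does not depend at all, so that $\kappa(x) = c_1$ iff $(f,0) \in x$; this is surjective. Let $x = \{(f,0),(g,0)\}$. Under Feasibility, $E \subseteq x$.
\begin{itemize}
\item For Weak Validity, $x_{\downarrow E} = x$ for every $E \subseteq x$, so $\kappa(x_{\downarrow E}) = \kappa(x)$ and no $E$ qualifies. This gives $(\mathtt{I}_5)$ on any query at all.
\item For Sceptical Validity, check the three nonempty subsets. For $E = \{(g,0)\}$, the set $x \circleddash E$ contains $\{(f,0),(g,1)\}$, which has the same class. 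For $E = \{(f,0)\}$ or $E = x$, the set $x \circleddash E$ contains instances with $(f,1)$ and hence a different class, so these would pass.
\end{itemize}
So this classifier is too weak for $(\mathtt{I}_2)$, and I need one where every nonempty $E \subseteq x$ admits some $y \in x \circleddash E$ with $\kappa(y) = \kappa(x)$. I would use a single ternary feature $\d(f) = \{0,1,2\}$ with $\kappa(0) = \kappa(1) = c_1$ and $\kappa(2) = c_2$, and $x = \{(f,0)\}$. The only nonempty $E$ is $\{(f,0)\}$, and $x \circleddash E = \{1, 2\}$ contains $1$ of the same class, so it fails. The empty set also fails, since by Proposition~\ref{links} Sceptical Validity implies Non-Triviality.

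\textbf{Case $(\mathtt{I}_3)$.} Novelty plus Strong Validity requires some $E$ disjoint from $x$ such that no instance containing $E$ has class $\kappa(x)$. Take two binary features with $\kappa(y) = c_1$ unless $y = \{(f,1),(g,1)\}$, and $x = \{(f,0),(g,0)\}$. Any $E$ disjoint from $x$ lies within $\{(f,1),(g,1)\}$. If $E$ is a proper subset of it, some $y \supseteq E$ has class $c_1$. Choosing $x$ inside the singleton class is simpler, though. Take $x = \{(f,1),(g,1)\}$ with class $c_2$ and $E \subseteq \{(f,0),(g,0)\}$. Then $E \subseteq \{(f,0),(g,0)\}$, an instance of class $c_1$, and that instance contains no literal of $x$. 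So the counterexample is instead $x$ of class $c_1$ as first chosen, where the only candidate $E = \{(f,1),(g,1)\}$ works. I must therefore pick the query carefully. The right choice is a query where every instance disjoint from $x$ has the same class as $x$. The ternary example gives this with $x = \{(f,0)\}$: the disjoint candidates are $\emptyset$, $\{(f,1)\}$ and $\{(f,2)\}$, and $\{(f,2)\}$ passes. So I use the classifier $\kappa(0) = c_1$, $\kappa(1) = \kappa(2) = c_2$ with $x = \{(f,1)\}$. Then $\{(f,0)\}$ passes, so this also fails. Instead use two binary features with $\kappa = c_1$ iff $f = g$, and $x = \{(f,0),(g,0)\}$. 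Every $E$ disjoint from $x$ is contained in $\{(f,1),(g,1)\}$, which has class $c_1$, so Strong Validity fails for all of them. This case works.

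\textbf{Cases $(\mathtt{I}_4)$, $(\mathtt{I}_6)$, $(\mathtt{I}_7)$.} Feasibility plus Novelty forces $E \subseteq x$ and $E \cap x = \emptyset$, so $E = \emptyset$, which Non-Triviality excludes. This holds for every query, giving $(\mathtt{I}_4)$.

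For $(\mathtt{I}_6)$, take two instances $x, x'$ of the same class with $x \cap x' = \emptyset$, for example in the $f = g$ classifier above. Equivalence gives $\bL(\bQ) = \bL(\bQ')$, and Feasibility makes every $E$ a subset of $x \cap x' = \emptyset$, which is trivial.

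For $(\mathtt{I}_7)$, take same-class $x, x'$ whose union covers the domains, e.g. in a binary feature space, and let $E \in \bL(\bQ) = \bL(\bQ')$ be nonempty. Novelty forces $E$ to avoid $x \cup x'$, which is all literals, so $E = \emptyset$. The same $f = g$ classifier works with $x = 00$ and $x' = 11$.

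The main obstacle is choosing a single concrete classifier per case, particularly $(\mathtt{I}_2)$ and $(\mathtt{I}_3)$, where the natural first attempts fail. The $f = g$ classifier on two binary features and the ternary one-feature classifier should settle all seven cases.
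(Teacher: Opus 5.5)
Your proposal is correct and is essentially the paper's proof. Your arguments for $(\mathtt{I}_1)$, $(\mathtt{I}_4)$, $(\mathtt{I}_5)$, $(\mathtt{I}_6)$ and $(\mathtt{I}_7)$ are the paper's. The only difference is the witness queries for $(\mathtt{I}_2)$ and $(\mathtt{I}_3)$: you use a single ternary feature and the $f \equiv g$ classifier at $x=\{(f,0),(g,0)\}$, whereas the paper uses one binary/ternary classifier ($\kappa=1$ iff the instance is $((a,0),(b,1))$) at two different instances.

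In a write-up, delete the abandoned attempts in $(\mathtt{I}_3)$. For example, the claim that the ternary example makes every instance disjoint from $x$ share its class is false as written. Keep only your criterion and the final $f \equiv g$ verification.
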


All combinations of axioms not excluded by the two preceding results are compatible; in other words, each such combination is satisfied by at least one explainer.

\begin{theorem} \label{thm:allinc}
All combinations of axioms not disallowed by Proposition~1 and Theorem~1 are satisfied by some explainer.
\end{theorem}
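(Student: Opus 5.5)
The plan is to reduce the statement to finitely many witnesses. One observation does most of the work. Every axiom in Table~\ref{tab:axioms} except Success has the form ``$\forall \bQ$, $\forall E \in \bL(\bQ)$, \dots'' (or, for Equivalence, is a universal condition on pairs of queries). So if an explainer satisfies a set of axioms, it satisfies every subset of that set. Hence it suffices to exhibit one explainer for each \emph{maximal} set of axioms that is closed under the implications of Proposition~\ref{links} and contains none of the sets $(\mathtt{I}_1)$--$(\mathtt{I}_7)$ of Theorem~\ref{incompatibility}.

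\textbf{Sets without Success.} The explainer with $\bL(\bQ)=\emptyset$ for every query satisfies all eight remaining axioms vacuously, since Equivalence holds because all outputs coincide. So every combination not containing Success is satisfiable.

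\textbf{Sets with Success.} I will split according to whether Feasibility or Non-Triviality is present and derive the maximal sets.
\begin{itemize}
\item[(a)] \emph{Coreness is present.} Then Non-Triviality is excluded by $(\mathtt{I}_1)$. This excludes Sceptical Validity, Weak Validity and Strong Validity, since each implies Non-Triviality. Feasibility follows from Coreness. The maximal candidate is $\{$Success, Equivalence, Feasibility, Coreness, Novelty$\}$, witnessed by $\bL(\bQ)=\{\emptyset\}$.
\item[(b)] \emph{Feasibility and Non-Triviality are present.} Then $(\mathtt{I}_2)$, $(\mathtt{I}_5)$, $(\mathtt{I}_6)$ and $(\mathtt{I}_4)$ exclude Sceptical Validity, Weak Validity (hence also Strong Validity), Equivalence and Novelty, respectively. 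The maximal set is $\{$Success, Non-Triviality, Feasibility$\}$, witnessed by $\bL(\langle\mbb{T},\kappa,x\rangle)=\{x\}$.
\item[(c)] \emph{Feasibility without Non-Triviality.} Then Sceptical Validity and Weak Validity fail, so this case is contained in (a).
\item[(d)] \emph{No Feasibility, with Novelty.} Then Strong Validity is excluded by $(\mathtt{I}_3)$. Equivalence and Non-Triviality cannot both hold, by $(\mathtt{I}_7)$. If Non-Triviality is dropped, we are back below (a). Otherwise the maximal set is $\{$Success, Non-Triviality, Novelty, Sceptical Validity, Weak Validity$\}$. The witness is $\bL(\bQ)=\{\,y\setminus x \mid y\in\mbb{F}(\mbb{T}),\ \kappa(y)\neq\kappa(x)\,\}$. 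It is nonempty by surjectivity and $|\C|\ge 2$, and its elements are nonempty and disjoint from $x$. Since $x_{\downarrow (y\setminus x)}=y$, Weak Validity holds. Sceptical Validity is vacuous because $x\circleddash E=\emptyset$ whenever $E\neq\emptyset$ and $E\not\subseteq x$.
\item[(e)] \emph{No Feasibility, no Novelty.} The maximal set is $\{$Success, Non-Triviality, Equivalence, Sceptical Validity, Strong Validity, Weak Validity$\}$. The witness is $\bL(\langle\mbb{T},\kappa,x\rangle)=\{\,y\in\mbb{F}(\mbb{T}) \mid \kappa(y)\neq\kappa(x)\,\}$. It depends only on $\kappa(x)$, and it is nonempty by surjectivity. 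Strong Validity holds because the only instance containing $y$ is $y$ itself, which implies Weak Validity. Sceptical Validity is vacuous because $y\not\subseteq x$.
\end{itemize}

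\textbf{Main obstacle.} The delicate step is the exhaustiveness of this case analysis. I must check that every closed, non-excluded set containing Success lies below one of the five maximal sets in (a), (b), (d) or (e). For instance, (e) must absorb any set that has Strong Validity, and any set that has Equivalence without Feasibility or Novelty. I would organise this as a small decision tree on Feasibility, Non-Triviality, Novelty and Equivalence, and at each leaf recheck the implications of Proposition~\ref{links} and the sets $(\mathtt{I}_i)$.
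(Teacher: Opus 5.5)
Your proposal is correct and follows essentially the same route as the paper. The empty explainer handles every set without Success. The maximal compatible sets containing Success are then witnessed by $\{\emptyset\}$, by a feasible non-trivial explainer, by $\csuf$, and by a global sufficient-reason explainer; your case split (Non-Triviality, Feasibility, Novelty) is the paper's, and it is exhaustive. Your witnesses $\{x\}$ and $\{y \mid \kappa(y)\neq\kappa(x)\}$ differ only cosmetically from the paper's $\bL_2$ and $\sufc$. Their non-triviality needs $x\neq\emptyset$ and $y\neq\emptyset$, which holds because surjectivity of $\kappa$ with $|\C|\ge 2$ forces $\F\neq\emptyset$.
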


Two fundamental questions emerge concerning these axioms: 
\begin{enumerate}
\item Can we precisely and exhaustively characterize the family of \textbf{all} explainers that satisfy a given 
      subset of the axioms?
\item Do the axioms lead to the same type of counterfactuals?
\end{enumerate}

In the next two sections, we present representation theorems that answer the first question affirmatively and reveal the existence of \textbf{five distinct types of counterfactuals}. These theorems show that counterfactual explanations fall into two fundamental forms—\textit{necessary} and \textit{sufficient}—each encompassing several distinct types. For clarity, we dedicate a separate section to each form.


\section{Necessary Reasons}\label{necessary-reasons}

There first form of counterfactuals can be expressed as follows: 

\begin{quote}
\hspace{-0.5cm} \textit{If it were not the case that $E$, the decision would have been different}. 
\end{quote}

For example, in instance $x_1$ of Example~\ref{ex1bis} 
if the temperature had not been hot, the decision would have been different,
as can be verified from the table.
Such explanations indicate the \textbf{part of an instance} that should be changed to 
avoid its decision. We call them  \emph{necessary reasons} since they are mandatory to 
guarantee the current decision. In this section, we study their \textit{global} and \textit{local} versions. 
The former explain a class while latter target an instance.

\subsection{Global Necessary Reasons}

Global necessary reasons explain the behavior of a classifier, that is, how it assigns 
classes. For a given class, they identify feature–value combinations that \textbf{distinguish it from 
all other classes}. In their absence, the classifier assigns a different class. 
 
\begin{definition}\label{nec-pro} 
Let $\bQ = \langle \mbb{T}, \kappa,x\rangle$ be a query.
A \emph{global necessary reason} (GNR) for $\kappa(x)$ is a set $E \in \mbb{E}(\mbb{T})$ such that: 
$$E \neq \emptyset \ \ \mbox{ and } \ \ \forall y \in \mbb{F}(\mbb{T}), \mbox{ if }  E \not\subseteq y, 
\mbox{ then } \kappa(y) \neq \kappa(x).$$
$\necp$ denotes the explainer that returns all GNRs.	
\end{definition} 

\begin{exs}{ex1bis}
Let $\bQ_i$ be the query on instance $x_i$, $i=1,2,3$. 
\begin{itemize}
\item $\necp(\bQ_1) = \{\{(t,\text{hot})\}\}$, 
\item $\necp(\bQ_2) = \necp(\bQ_3) = \emptyset$. 
\end{itemize}   
A hot temperature is characteristic of the \textit{beach} option, as it sets it apart from the mountain and cinema options. 
Thus, to change the decision for $x_1$, the value of $t$ must be altered to either mild or freezing. In contrast, the other two queries have no GNR, since every feature–value combination appears in at least two instances that are assigned different classes. For example, modifying the value of $a$ for $x_3$ does not necessarily lead to a class change, as 
$\kappa(x_5) = \kappa(x_3)$.
\end{exs}

We provide a representation theorem that characterizes the \textbf{entire family} of explainers generating GNRs. These are the \textbf{only ones} that satisfy both Coreness and Non-triviality.


\begin{theorem} 
An explainer $\bL$ satisfies Coreness and Non-triviality \  \textbf{iff} $\ \bL \sqsubseteq \necp$.	
\end{theorem}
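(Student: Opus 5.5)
The plan is to reduce both directions to one pointwise equivalence. For a query $\bQ = \langle \mbb{T}, \kappa, x\rangle$ and a set $E \in \mbb{E}(\mbb{T})$, I will show that
$$E \in \necp(\bQ) \iff E \neq \emptyset \ \mbox{ and } \ E \subseteq \core(\kappa(x)).$$
Once this holds, the theorem is immediate. Coreness together with Non-Triviality says exactly that every $E \in \bL(\bQ)$ satisfies the right-hand side, for every query $\bQ$. That in turn says exactly $\bL(\bQ) \subseteq \necp(\bQ)$ for every $\bQ$, i.e.\ $\bL \sqsubseteq \necp$.

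\textbf{Right-to-left.} Assume $E \neq \emptyset$ and $E \subseteq \core(\kappa(x))$. Take any $y \in \mbb{F}(\mbb{T})$ with $E \not\subseteq y$, and pick a literal $l \in E \setminus y$. Since $l \in \core(\kappa(x))$, Definition~\ref{def:core} says $l$ belongs to every instance of class $\kappa(x)$. Because $l \notin y$, we get $\kappa(y) \neq \kappa(x)$. Together with $E \neq \emptyset$, this is exactly the condition of Definition~\ref{nec-pro}, so $E \in \necp(\bQ)$.

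\textbf{Left-to-right.} Assume $E \in \necp(\bQ)$, so $E \neq \emptyset$ by definition. Take $l \in E$ and any $y$ with $\kappa(y) = \kappa(x)$. By the contrapositive of the GNR condition, $E \subseteq y$, hence $l \in y$. So $l$ is core to $\kappa(x)$, which gives $E \subseteq \core(\kappa(x))$.

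None of these steps is a real obstacle. The only point needing care is in the left-to-right direction. Non-Triviality is not derived from the GNR condition: it comes directly from the explicit clause $E \neq \emptyset$ in Definition~\ref{nec-pro}. This matters because the GNR condition alone holds vacuously for $E = \emptyset$, since no $y$ satisfies $\emptyset \not\subseteq y$. Feasibility is not needed separately, since Proposition~\ref{links} already gives Coreness $\Rightarrow$ Feasibility.
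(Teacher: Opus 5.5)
Your proof is correct and is essentially the paper's argument. The paper proves the direction from Coreness and Non-Triviality to $\bL \sqsubseteq \necp$ exactly as your right-to-left implication, phrased via $\core(c)$ being the intersection of all instances of class $c$. Its converse cites the appendix proposition that $E \subseteq \core(\kappa(x))$ for every $E \in \necp(\bQ)$, which is your left-to-right implication; your single pointwise biconditional is just a cleaner packaging of the same proof.
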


Table~\ref{tab:axioms2} summarizes all the properties that are satisfied/violated by $\necp$ and any explainer 
$\bL$ such that $\bL \sqsubseteq \necp$. 

\begin{theorem} 
	The properties of Table~\ref{tab:axioms2} hold. 
\end{theorem}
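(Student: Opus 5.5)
The plan is to reduce everything to the characterization $\necp(\bQ) = \{E \in \mbb{E}(\mbb{T}) \mid \emptyset \neq E \subseteq \core(\kappa(x))\}$. This follows directly from Definition~\ref{nec-pro} and Definition~\ref{def:core}. The condition ``$\forall y$, if $E \not\subseteq y$ then $\kappa(y) \neq \kappa(x)$'' is the contrapositive of ``every $y$ with $\kappa(y)=\kappa(x)$ contains $E$''. That holds iff every literal of $E$ is core to $\kappa(x)$. The same equivalence underlies the preceding representation theorem, so I will take it as available. For an arbitrary $\bL \sqsubseteq \necp$, each positive property is inherited because the axioms are universally quantified over the members of $\bL(\bQ)$. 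Each negative property will be witnessed by a concrete query, chosen so that the witness applies both to $\necp$ and to every $\bL \sqsubseteq \necp$ that returns something there.

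\emph{Satisfied properties.} Since I cannot see Table~\ref{tab:axioms2}, I expect these to be Non-Triviality, Coreness, Feasibility and Sceptical Validity for all such $\bL$, together with Equivalence for $\necp$ itself. Coreness and Non-Triviality are immediate from the characterization. Feasibility then follows by the first item of Proposition~\ref{links}, and Sceptical Validity by the second. Equivalence for $\necp$ holds because the characterization depends on $x$ only through $\kappa(x)$. For a general $\bL \sqsubseteq \necp$, Equivalence can fail. Take Example~\ref{ex1bis} with $x_6$ (also beach) and let $\bL$ return $\{\{(t,\text{hot})\}\}$ on $x_1$ and $\emptyset$ on $x_6$. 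Then $\bL \sqsubseteq \necp$ but $\bL$ violates Equivalence, so the table should mark it as not guaranteed.

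\emph{Violated properties.} Success fails for $\necp$ because $\necp(\bQ_2)=\emptyset$ in Example~\ref{ex1bis}. It therefore fails for every $\bL \sqsubseteq \necp$, which is also the content of incompatibility $(\mathtt{I}_1)$ in Theorem~\ref{incompatibility}. Novelty, Strong Validity and Weak Validity fail whenever some $E \in \bL(\bQ)$ exists, as in $\bQ_1$ with $E=\{(t,\text{hot})\}$. Since $\emptyset \neq E \subseteq x$, we get $x \cap E = E \neq \emptyset$, so Novelty fails. Moreover $x_{\downarrow E} = x$, so $\kappa(x_{\downarrow E}) = \kappa(x)$ and Weak Validity fails. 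Finally $x$ itself contains $E$ and has class $\kappa(x)$, so Strong Validity fails, which is consistent with Strong Validity implying Weak Validity in Proposition~\ref{links}.

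The main obstacle is interpretive rather than mathematical. For a sub-explainer $\bL$, ``violated'' must mean ``not guaranteed''. The trivial explainer $\bL(\bQ)=\emptyset$ satisfies every axiom except Success vacuously. The proof should therefore phrase each violation for $\bL$ as holding whenever $\bL$ is nonempty on the witnessing query, and treat Success and Equivalence separately as above.
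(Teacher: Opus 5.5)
\paragraph{A gap in scope.} Your argument for the two $\necp$ columns is correct and follows the paper's closely: the characterization $\necp(\bQ)=\{E \mid \emptyset\neq E\subseteq\core(\kappa(x))\}$, Proposition~1 for Feasibility and Sceptical Validity, a query with $\necp(\bQ)=\emptyset$ (or $(\mathtt{I}_1)$) for Success, the empty explainer for the ``$-$'' entries, and $\emptyset\neq E\subseteq x$ for the failures of Novelty, Weak and Strong Validity. The problem is that Table~2 has ten columns, not two. The theorem also asserts the pattern for $\bL\sqsubseteq\wL$, $\wL$, $\bL\sqsubseteq\sufc$, $\sufc$, $\bL\sqsubseteq\ssuf$, $\ssuf$, $\bL\sqsubseteq\csuf$ and $\csuf$, and the paper's proof treats all five families. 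Your proposal does not address these eight columns. Only a few of their entries carry over from your $\necp$ argument: the failures of Novelty and Weak/Strong Validity for $\wL$, which use nothing but $\emptyset\neq E\subseteq x$.

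\paragraph{What is missing.}
\begin{itemize}
\item \emph{Positive entries.} These come from the representation theorems for $\wL$, $\sufc$, $\ssuf$, $\csuf$, with Proposition~1 giving Non-Triviality and Weak Validity.
\item \emph{Sceptical Validity for the three sufficient families.} This needs a separate observation: it holds vacuously. Every member $E$ has $E\not\subseteq x$ (for $\sufc$ because $x$ itself has class $\kappa(x)$; for $\ssuf,\csuf$ by Novelty and Non-Triviality), so $x\circleddash E=\emptyset$. The same fact shows that Feasibility, and hence Coreness, fail for $\sufc,\ssuf,\csuf$.
\item \emph{Success.} For $\sufc$ and $\csuf$ it follows from surjectivity of $\kappa$ with $|\C|\geq 2$. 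Its failure for every $\bL\sqsubseteq\wL$ and every $\bL\sqsubseteq\ssuf$ needs queries with no explanation: $\wL(\bQ_3)=\emptyset$ in the running example, and the query on $x_2$ under $\kappa_1$ in the appendix example for $\ssuf$ (or $(\mathtt{I}_2)$, $(\mathtt{I}_3)$).
\item \emph{New counterexamples.}
\begin{itemize}
\item $x_1$ versus $x_6$ in the running example breaks Equivalence for $\wL$, $\ssuf$ and $\csuf$. Indeed $x_1\in\wL(\bQ_1)$ but $x_1\not\subseteq x_6$, and $x_2$ is an SSR and a CSR for $x_6$ but not for $x_1$.
\item $\{(t,\text{mild})\}$ is an SNR for $x_2$ although $\core(\text{mountain})=\emptyset$, so $\wL$ violates Coreness.
\item $x_2$ is a GSR against $\kappa(x_1)$ that shares $(a,\text{climbing})$ with $x_1$, so $\sufc$ violates Novelty.
\item $\{(a,\text{skiing})\}$ is a CSR for $x_3$ contained in $x_9$, whose class is cinema, so $\csuf$ violates Strong Validity.
\end{itemize}
\end{itemize}

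\paragraph{Where your interpretive remark matters.} The table marks Coreness as $\times$ for $\bL\sqsubseteq\ssuf$ and $\bL\sqsubseteq\csuf$. Yet the empty explainer satisfies Coreness vacuously. Those two cells therefore hold only in the weaker sense that every member returning some explanation violates Coreness: a nonempty $E$ with $E\cap x=\emptyset$ cannot lie in $\core(\kappa(x))\subseteq x$.
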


These explainers violate Success since 
Theorem~\ref{incompatibility} shows that Coreness, Success and Non-Triviality are 
incompatible.  Since $\necp$ generates global explanations, it satisfies 
Equivalence: its explanations are common to all instances labelled by the same class. 
For example, $\necp(\bQ_1) = \necp(\bQ_6) = \necp(\bQ_7) = \{\{(t,\text{hot})\}\}$, 
where $\bQ_i$ concerns the instance $x_i$.

\vspace{0.1cm}

\noindent \textbf{To sum up,} the characterization result shows that the family of explainers satisfying the Coreness and Non-triviality axioms generates global necessary reasons for a class. This type of counterfactual highlights the factors that distinguish a given class from \textbf{all others}.

\subsection{Local Necessary Reasons}

We now turn to the problem of explaining the decision made for a specific instance.
A natural approach is to consider the global reasons associated with the 
instance’s class. However, this approach is overly rigid, as it requires exploring the entire feature space, which drastically reduces the chances of finding a  counterfactual (e.g., $x_2, x_3$ in Example~\ref{ex1bis}). 
In what follows, we introduce \textbf{sceptical necessary reasons} (SNR), an alternative approach that is less demanding. 
SNRs are subsets of an instance of interest $x$ that distinguish it from \textbf{any} 
other instance $y$ for which the decision would be different. Thus, their absence guarantees 
a different decision for $x$. 
Unlike the global approach, the local approach restricts the 
search space to instances that differ from $x$ only in candidate reasons. 

\begin{definition}\label{local-nec-pro}	
Let $\bQ = \langle \mbb{T}, \kappa,x\rangle$ be a query.
A \emph{sceptical necessary reason} (SNR) for $\kappa(x)$ is a set  $E \in \mbb{E}(\mbb{T})$ such that: 
\begin{itemize}
\item[] $E \subseteq x$  \qquad and \qquad  $\forall y \in x\circleddash E$, $\kappa(y) \neq \kappa(x).$
\end{itemize}
$\wL$ denotes the explainer that generates all SNRs. 
\end{definition}

\begin{exs}{ex1bis}
Let $\bQ_i = \langle \mbb{T}, \kappa,x_i\rangle$. 
\begin{itemize}
	\item $\wL(\bQ_1) = \{\{(t,\text{hot})\}, x_1\}$.
	\item $\wL(\bQ_2) = \{\{(t,\text{mild})\}, \{(a,\text{climbing})\}\}$.
	\item $\wL(\bQ_3) = \emptyset$.
\end{itemize} 
Note that  $x_2 \circleddash \{(t,\text{mild})\} =$ $\{x_1, x_5\}$,  
$\kappa(x_2) \neq \kappa(x_1)$, $\kappa(x_2) \neq \kappa(y)$. 
Hence, the SNR $\{(t,\text{mild})\}$ distinguishes 
$x_2$ from all instances that differ from it \textbf{only} in temperature.
\end{exs}

Note that global necessary reasons are sceptical ones. 
Consequently, if there are core literals, their non-empty subsets are 
SNRs. But, an explainer may return additional  
reasons whose literals are not core to the instance class (e.g., the case of $\bQ_2$).

\begin{proposition}\label{inec=>snr}
It holds that $\necp \sqsubseteq \wL$.    
\end{proposition}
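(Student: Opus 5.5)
We must show that for every query $\bQ = \langle \mbb{T}, \kappa, x\rangle$, every $E \in \necp(\bQ)$ is an SNR for $\kappa(x)$. The plan is a direct argument from the definitions, with Theorem~3 available as a shortcut for the first condition.

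\textbf{Step 1: $E \subseteq x$.} Fix $E \in \necp(\bQ)$, so $E \neq \emptyset$ and every instance $y$ with $E \not\subseteq y$ satisfies $\kappa(y) \neq \kappa(x)$. Since $\kappa(x) = \kappa(x)$, the contrapositive applied to $y = x$ gives $E \subseteq x$. Alternatively, $\necp$ satisfies Coreness by Theorem~3, and Coreness implies Feasibility by Proposition~\ref{links}. The direct argument is shorter, so I would use it.

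\textbf{Step 2: the validity condition.} Take any $y \in x\circleddash E$, i.e.\ $y \in \mbb{F}(\mbb{T})$ with $x \setminus y = E$.
\begin{itemize}
\item Because $E \neq \emptyset$, pick a literal $(f,v) \in E$.
\item Then $(f,v) \in x$ and $(f,v) \notin y$, so $E \not\subseteq y$.
\item By the GNR condition, $\kappa(y) \neq \kappa(x)$.
\end{itemize}
Hence $E$ is an SNR, so $E \in \wL(\bQ)$. Since $\bQ$ was arbitrary, $\necp \sqsubseteq \wL$.

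The only point needing care is non-emptiness in Step~2. If $E$ were empty, $x\circleddash E$ would be $\{x\}$ and the condition would fail. The definition of GNR excludes this case, so there is no real obstacle. One could instead route Step~2 through Proposition~\ref{links} (Coreness and Non-Triviality imply Sceptical Validity), but the direct argument is self-contained.
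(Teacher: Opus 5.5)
Your proof is correct and uses essentially the paper's argument: because $E \neq \emptyset$ and $x \setminus y = E$, the instance $y$ misses a literal of $E$, so $\kappa(y) \neq \kappa(x)$. The paper gets there via Theorem~3 and Property~2, writing $E \subseteq \core(\kappa(x)) = \bigcap_{\kappa(y)=\kappa(x)} y$ and leaving $E \subseteq x$ implicit, whereas you read both SNR conditions directly off the GNR definition, which is slightly more self-contained.
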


We present a representation theorem showing an equivalence between the two 
axioms (Feasibility, Sceptical Validity) and the family of explainers that 
generate SNRs. 

\begin{theorem} 
An explainer $\bL$ satisfies Feasibility and Sceptical Validity \textbf{iff} $\bL \sqsubseteq \wL$.
\end{theorem}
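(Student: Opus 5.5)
The proof is a direct unfolding of definitions, since the two conditions defining an SNR in Definition~\ref{local-nec-pro} are literally the pointwise versions of the two axioms. I would prove the two directions separately, working with an arbitrary query $\bQ = \langle \mbb{T}, \kappa, x\rangle$.

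\emph{($\Rightarrow$)} Assume $\bL$ satisfies Feasibility and Sceptical Validity, and let $E \in \bL(\bQ)$. Feasibility gives $E \subseteq x$. Sceptical Validity gives $\kappa(y) \neq \kappa(x)$ for every $y \in x\circleddash E$. These are exactly the two clauses of Definition~\ref{local-nec-pro}, so $E \in \wL(\bQ)$. Since $\bQ$ and $E$ are arbitrary, $\bL(\bQ) \subseteq \wL(\bQ)$ for all $\bQ$, i.e., $\bL \sqsubseteq \wL$.

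\emph{($\Leftarrow$)} Assume $\bL \sqsubseteq \wL$, and let $E \in \bL(\bQ)$. Then $E \in \wL(\bQ)$, so $E$ is an SNR for $\kappa(x)$. The first clause of the definition gives $E \subseteq x$, which is Feasibility. The second clause gives $\kappa(y)\neq\kappa(x)$ for all $y \in x\circleddash E$, which is Sceptical Validity.

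There is no real obstacle; the only point to check is that the quantifiers match exactly. Both the axioms and $\wL$ range over the same set $x\circleddash E$ and use the same condition $E\subseteq x$, so the correspondence is exact.

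It is also worth remarking why no extra non-emptiness condition appears, unlike in the GNR characterisation. For $E=\emptyset$, the set $x\circleddash \emptyset$ consists only of $x$ itself, since $x\setminus y=\emptyset$ with $y$ an instance forces $y=x$. Sceptical Validity then fails at $y = x$, so $\emptyset$ is automatically excluded on both sides. This is consistent with Proposition~\ref{links} (Sceptical Validity $\Rightarrow$ Non-Triviality), though the equivalence itself does not need this remark.
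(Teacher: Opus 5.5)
Your proof is correct and follows the paper's own argument: both directions unfold Definition~\ref{local-nec-pro}, with Feasibility and Sceptical Validity matching the two clauses that define an SNR. Your side remark that $x\circleddash\emptyset=\{x\}$, so the empty set is excluded automatically, is also correct, though the equivalence does not need it.
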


\vspace{0.1cm}

Table~\ref{tab:axioms2} summarizes the axioms that are satisfied/violated by $\wL$ and, more generally, 
by any explainer $\bL \sqsubseteq \wL$. Note that they satisfy non-Triviality, hence they are \textbf{informative}. 
However, they violate Success and so might not guarantee explanations for queries (eg. $\bQ_3$). They also violate 
Coreness but satisfy its weaker version, Sceptical Validity. The latter guarantees a change in decision no matter 
what the new values of the features are, making SNRs \textbf{valid} in the sense of the property defined in \cite{Guidotti24}.


\vspace{0.1cm}

\noindent \textbf{To sum up,} Theorem~\ref{local-nec-pro} shows that the family of explainers satisfying Feasibility and Sceptical Validity generates local necessary reasons for an instance. Such reasons identify the factors that distinguish the instance from all others receiving a different decision.

\begin{table*}
\begin{center}	
{\small 
\begin{tabular}{|l|c|c|c|c|c|c|c|c|c|c|c|c}
\hline\hline 

 & \multicolumn{4}{|c}{Necessary Reasons}  &\multicolumn{6}{|c}{Sufficient Reasons} \\
  \hline 

Axioms                  
                  & $\!\! \bL \! \sqsubseteq \! \necp \!\!$ &   
                    $\! \necp \!$                           &   
                    $\!\! \bL \! \sqsubseteq \! \wL \!\!$   & 
                    $\! \wL \!$                             & 
                    $\!\! \bL \! \sqsubseteq \! \sufc \!\!$ & 
                    $\! \sufc \!$                           & 
                    $\!\! \bL \! \sqsubseteq \! \ssuf \!\!$ & 
                    $\! \ssuf \!$                           & 
                    $\!\! \bL \! \sqsubseteq \! \csuf \!\!$ & 
                    $\! \csuf \!$ \\\hline\hline 

\textit{Success}  &    $\times$ &  $\times$   & $\times$  & $\times$ & $-$ & $\checkmark$ & $\times$ & $\times$ & $-$ & $\checkmark$ \\

\textit{Non-Triviality} & $\checkmark$  &  $\checkmark$  & $\checkmark$ & $\checkmark$ & $\checkmark$ & $\checkmark$ & $\checkmark$ & $\checkmark$ & $\checkmark$ & $\checkmark$ \\ 

\textit{Equivalence}    & $-$  &  $\checkmark$ & $-$ & $\times$ & $-$ & $\checkmark$ & $-$ & $\times$ & $-$ & $\times$ \\ 

\hline\hline
			
\textit{Feasibility}    &  $\checkmark$  &   $\checkmark$  & $\checkmark$ & $\checkmark$ &  
$-$ & $\times$ & $-$ & $\times$ & $-$ & $\times$ \\ 

\textit{Coreness}           &   $\checkmark$ &   $\checkmark$  & $-$ & $\times$ & $-$  & $\times$ & $\times$ & $\times$ & $\times$ & $\times$ \\ 

\textit{Scep. Validity}     &  $\checkmark$  &   $\checkmark$  &  $\checkmark$ & $\checkmark$ & 
  $\checkmark$  & $\checkmark$ & $\checkmark$ & $\checkmark$ & $\checkmark$ & $\checkmark$ \\ 

\hline\hline
	
\textit{Novelty}     & $-$ &  $\times$   & $-$ & $\times$ &  $-$ & $\times$ & $\checkmark$ & $\checkmark$ & $\checkmark$ & $\checkmark$ \\

\textit{Strong Validity}     & $-$ &  $\times$    & $-$ & $\times$  & $\checkmark$ & $\checkmark$ & $\checkmark$ & $\checkmark$ & $-$ & $\times$ \\   

\textit{Weak Validity}     & $-$ &  $\times$   & $-$ & $\times$  & $\checkmark$ & $\checkmark$ & $\checkmark$ & $\checkmark$ & $\checkmark$ & $\checkmark$ \\ 
                            
\hline\hline
\end{tabular}
}
\caption{Formal Comparison of Explainers.  ``$-$'' means the axiom is satisfied by only some instances of the family but not all.}
\label{tab:axioms2}
\end{center}	
\end{table*}
\section{Sufficient Reasons}\label{sufficient-reasons}

The second form of counterfactuals can be expressed as follows: 
\begin{quote}
\hspace{-0.5cm}\textit{If it were the case that $E$, the decision would have been different.} 
\end{quote}

\noindent Unlike necessary reasons, which identify present characteristics whose absence 
would change the outcome, this second type focuses on absent characteristics whose presence 
would alter the outcome. For instance, "If Alice's annual income had been $\$35K$ (instead of the actual amount), she wouldn't have been denied a loan".  
We introduce the concept of \textit{sufficient reasons}—feature values 
that directly lead to a different decision—rather than identifying which parts of an instance must be modified.

\subsection{Global Sufficient Reasons}\label{global-suff}

We begin by analyzing the global behavior of a classifier and, consequently, of its classes. As previously noted, global necessary reasons capture what distinguishes one class from the others; however, such reasons may not always exist, as illustrated by instances $x_2$ and $x_3$ in Example~\ref{ex1bis}. To address this, we introduce the notion of \textit{global sufficient reasons} for \textit{avoiding} a class—factors whose presence leads the classifier to assign a different class.

\begin{definition}
\label{suff-con}	
Let $\bQ = \langle \mbb{T}, \kappa,x\rangle$ be a query. 
A \emph{global sufficient reason} (GSR) against $\kappa(x)$ is a set $E \in \mbb{E}(\mbb{T})$ such that:
$$\forall y \in \mbb{F}(\mbb{T}) \text{ with } E \subseteq y, \ \kappa(y) \neq \kappa(x).$$ 
Let $\sufc$ be the explainer that generates all GSRs.	
\end{definition}

\begin{exs}{ex1bis}
Let $\bQ_i = \langle \mbb{T}, \kappa,x_i\rangle$, $i=1,2,3$. 
\begin{itemize} 
\item $\sufc(\bQ_1) = \{\{(t,\text{mild})\}, \{(t,\text{freezing})\} \} \ \cup $ \\  
      $\{x_2, x_3, x_4,x_5,x_8,x_9\}$. 
       
\item $\sufc(\bQ_2) =  \{\{(t,\text{hot})\}, \{(a,\text{reading})\} \} \ \cup$ \\ $\{ x_1, x_3, x_5,x_6, x_7,x_8,x_9\}$.

\item $\sufc(\bQ_3) = \{\{(t,\text{hot})\}\} \cup \{ x_1, x_2, x_4, x_6,x_7\}$.
\end{itemize}    
Note that $\bQ_1$ has 8 GSRs; the first one reads: 
\textit{If the temperature had been mild, the decision would have been different from 
beach}. 
\end{exs}

The following representation theorem shows that only explainers satisfying Strong Validity produce (GSRs). 

\begin{theorem} 
An explainer $\bL$ satisfies Strong Validity \textbf{iff} $\bL \sqsubseteq \sufc$.		
\end{theorem}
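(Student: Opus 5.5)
The argument is a direct unfolding of definitions, because Strong Validity is literally the defining condition of a GSR quantified over the returned sets. I prove the two directions separately. Throughout, fix an arbitrary query $\bQ = \langle \mbb{T}, \kappa, x\rangle$. Recall that $\bL \sqsubseteq \sufc$ means $\bL(\bQ) \subseteq \sufc(\bQ)$ for every such $\bQ$.

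($\Rightarrow$) Assume $\bL$ satisfies Strong Validity and take any $E \in \bL(\bQ)$. By Definition~\ref{explainer}, $E \in \mbb{E}(\mbb{T})$. By Strong Validity there is no $y \in \mbb{F}(\mbb{T})$ with $E \subseteq y$ and $\kappa(y) = \kappa(x)$. Equivalently, every $y \in \mbb{F}(\mbb{T})$ with $E \subseteq y$ satisfies $\kappa(y) \neq \kappa(x)$. This is exactly the condition of Definition~\ref{suff-con}, so $E \in \sufc(\bQ)$. Since $E$ and $\bQ$ were arbitrary, $\bL \sqsubseteq \sufc$.

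($\Leftarrow$) Assume $\bL \sqsubseteq \sufc$ and take $E \in \bL(\bQ)$. Then $E \in \sufc(\bQ)$, so every instance containing $E$ receives a class different from $\kappa(x)$. Hence no $y$ with $E \subseteq y$ has $\kappa(y) = \kappa(x)$, which is Strong Validity for $E$. Since $E$ and $\bQ$ were arbitrary, $\bL$ satisfies Strong Validity.

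There is no real obstacle here. The only point to check is that Definition~\ref{suff-con} imposes no side condition beyond the universal statement, such as non-emptiness or feasibility, that Strong Validity would fail to supply. It imposes none, and any nonemptiness is in any case implied: Strong Validity entails Weak Validity, which entails Non-Triviality, by Proposition~\ref{links}. The two conditions therefore coincide set by set, and the equivalence follows.
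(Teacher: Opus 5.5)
Your proof is correct and follows the paper's own argument: both directions are a direct unfolding of Strong Validity against the definition of a global sufficient reason. Your closing remark on non-emptiness is harmless but not needed, since that definition imposes no side condition beyond the universal clause.
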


Table~\ref{tab:axioms2} summarizes the behavior of $\sufc$ and any explainer $\bL\sqsubseteq\sufc$ 
with respect to all axioms. It confirms that the two forms of counterfactuals—necessary and 
sufficient—convey distinct types of information, as they satisfy different subsets of axioms. 
Unlike $\necp$, $\sufc$ guarantees at least one non-empty explanation for any query but 
violates Feasibility. 

\vspace{0.1cm}
\noindent \textbf{To sum up}, Theorem~\ref{th1-sufc} shows that the axiom of Strong Validity gives 
rise to a new type of global counterfactuals—those that highlight feature–value combinations whose 
presence guarantees the avoidance of a given class.
\subsection{Local Sufficient Reasons}

Let us now explain decisions of individual instances. We distinguish two 
types of explanations: \textit{sceptical} and \textit{credulous}. 

\subsubsection{Sceptical Sufficient Reasons (SSRs)}

They highlight features that are characteristic of instances assigned to classes different from that of the instance under consideration. As such, they act as distinguishing features of other classes and must be absent from the instance of interest. Consequently, SSRs consist exclusively of \textbf{new feature values} not already present in the instance.

\begin{definition}\label{sceptical-suf}
Let $\bQ = \langle \mbb{T}, \kappa,x\rangle$ be a query. 
A \emph{sceptical sufficient reason} (SSR) against $\kappa(x)$ is a set $E \in \mbb{E}(\mbb{T})$ s.t.: 
	\begin{itemize}	
        \item[]   
        $E \cap x = \emptyset$ \quad and \quad 
        $\forall y \in \mbb{F}(\mbb{T})$ s.t. $E \subseteq y$, $\kappa(y) \neq \kappa(x)$.
	\end{itemize}
$\ssuf$ denotes the explainer that generates \emph{all} SSRs.
\end{definition}

\begin{exs}{ex1bis}
Let $\bQ_i = \langle \mbb{T}, \kappa,x_i\rangle$, $i=1,2,3$. 
\begin{itemize}
	\item $\ssuf(\bQ_1) = \{\{(t,\text{mild})\}$, $\{(t,\text{freezing})\}\}$ $\cup$ $\{x_3, x_4, x_8, x_9\}$.
	
	\item $\ssuf(\bQ_2) =  \{\{(t,\text{hot})\}, \{(a,\text{reading})\}\} \cup \{x_3, x_6, x_7\}$. 
	
	\item $\ssuf(\bQ_3) = \{\{(t,\text{hot})\}\} \cup \{x_1, x_2, x_7\}$. 
\end{itemize}
Consider, for instance, the reason $\{(t,\text{mild})\}$. This feature–value pair characterizes instances 
labeled either mountain or cinema. There is no instance labeled beach with a mild temperature. 
\end{exs}

The next representation theorem establishes a one-to-one correspondence between 
the two axioms Novelty and Strong Validity, and the family of explainers that generate SSRs.

\begin{theorem}  
An explainer $\bL$ satisfies Novelty and Strong Validity \textbf{iff} $\bL \sqsubseteq \ssuf$. 
\end{theorem}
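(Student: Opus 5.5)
The plan is to unfold the definitions and observe that the two axioms are exactly the two clauses of Definition~\ref{sceptical-suf}, imposed on every element of $\bL(\bQ)$. Since $\bL \sqsubseteq \ssuf$ means $\bL(\bQ) \subseteq \ssuf(\bQ)$ for every query $\bQ$, both directions reduce to checking membership of individual counterfactuals $E$ in $\ssuf(\bQ)$. No earlier result is needed beyond the definitions, though Proposition~\ref{links} could be used afterwards to read off further properties.

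For the direction ($\Rightarrow$), I assume $\bL$ satisfies Novelty and Strong Validity. I fix an arbitrary query $\bQ = \langle \mbb{T}, \kappa, x\rangle$ and an arbitrary $E \in \bL(\bQ)$. By Definition~\ref{explainer}, $E \in \mbb{E}(\mbb{T})$, so $E$ is a partial assignment. Novelty gives $x \cap E = \emptyset$, which is the first clause of Definition~\ref{sceptical-suf}. Strong Validity says there is no $y \in \mbb{F}(\mbb{T})$ with $E \subseteq y$ and $\kappa(y) = \kappa(x)$. Equivalently, every $y \in \mbb{F}(\mbb{T})$ with $E \subseteq y$ satisfies $\kappa(y) \neq \kappa(x)$, which is the second clause. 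Hence $E \in \ssuf(\bQ)$. Since $\bQ$ and $E$ were arbitrary, $\bL \sqsubseteq \ssuf$.

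For the direction ($\Leftarrow$), I assume $\bL \sqsubseteq \ssuf$ and fix a query $\bQ$ and some $E \in \bL(\bQ)$. Then $E \in \ssuf(\bQ)$, so both clauses of Definition~\ref{sceptical-suf} hold for $E$. Rewriting the first clause as Novelty and the second, via the same universal/negated-existential equivalence, as Strong Validity completes the argument.

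There is no real obstacle here; the only point to state carefully is the logical equivalence between the ``$\nexists y$'' form of Strong Validity and the ``$\forall y$'' form in the definition. I will also note that the empty explainer vacuously satisfies both sides, so no Success-type condition interferes. This is consistent with $(\mathtt{I}_3)$ of Theorem~\ref{incompatibility}, which forbids adding Success.
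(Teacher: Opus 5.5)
Your proposal is correct and is essentially the paper's proof: both directions unfold Definition~\ref{sceptical-suf} elementwise, with Novelty supplying $E \cap x = \emptyset$ and Strong Validity (rewritten from its $\nexists$ form to the $\forall$ form) supplying the second clause, and conversely. Your side remarks about the empty explainer and $(\mathtt{I}_3)$ are accurate but not needed for the argument.
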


Table~\ref{tab:axioms2} shows that this fourth type of counterfactuals is 
different from local necessary reasons. However, both types may not exist. 

\vspace{0.1cm}
\noindent \textbf{To sum up}, the above characterization shows that Strong Validity and Novelty 
define a fourth type of counterfactual, based on absent feature–value combinations that always 
yield a different decision.


\subsubsection{Credulous Sufficient Reasons (CSRs).} 

Being the least restrictive type of reason, CSRs are feature–value combinations that distinguish an instance from \textbf{another instance} classified differently. Unlike SSRs, the presence of a CSR does not necessarily guarantee a change of decision. In the running example, \emph{skiing} is a CSR for instance $x_3$, since changing the value of $a$ to \emph{skiing} yields $x_4$, which receives a different decision. However, \emph{skiing} is not an SSR, as it also appears in $x_9$, whose class is the same as $x_3$.

\begin{definition}\label{credulous-suf}
Let $\bQ = \langle \mbb{T}, \kappa,x\rangle$ be a query. A \emph{credulous 
sufficient reason against} $\kappa(x)$ is $y\setminus x$ such that   
$y \in \mbb{F}(\mbb{T})$ and $\kappa(y) \neq \kappa(x)$.
$\csuf$ is the explainer that generates all CSRs. 
\end{definition}

\begin{exs}{ex1bis}
Let $\bQ_i = \langle \mbb{T}, \kappa,x_i\rangle$, $i=1,2,3$. 
\begin{itemize}
		\item $\csuf(\bQ_1) = \ssuf(\bQ_1)$.
		
		\item $\csuf(\bQ_2) =  \{\{(t,\text{hot})\}, \{(t,\text{freezing})\}$, $\{(a,\text{reading})\}$, \\ $\{(a,\text{skiing})\}\} \cup \{y \in \mbb{F}(\mbb{T}) 
		\mid y \cap x_2 = \emptyset \text{ and } \kappa(y) \neq \text{mountain}\}$.
		
		\item $\csuf(\bQ_3) = \{\{(t,\text{hot})\}, \{(a,\text{skiing})\}\} \cup \{y \in \mbb{F}(\mbb{T}) \mid y \cap x_3 = \emptyset \text{ and } \kappa(y) \neq \text{cinema}\}$.
\end{itemize}	
Note that $\{(a,\text{skiing})\}$ is not a SSR of $\bQ_3$ since skiing is not representative of 
mountain or beach. Indeed, $\kappa$ predicts cinema (class of $x_3$) for the instance $x_9 = ((t,\text{mild}), (a,\text{skiing}))$.
\end{exs}

In what follows, we present a representation theorem showing that the axioms Novelty and Weak Validity are satisfied exclusively by explainers that return CSRs.

\begin{theorem}\label{cara:csuf1}
An explainer $\bL$ satisfies Novelty and Weak Validity \textbf{iff} $\bL \sqsubseteq \csuf$.    
\end{theorem}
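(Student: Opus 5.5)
We prove the two directions separately. Both rest on one observation: when $E \cap x = \emptyset$, the instance $x_{\downarrow E}$ satisfies $x_{\downarrow E} \setminus x = E$. Indeed, $x_{\downarrow E}$ contains all of $E$. The remaining literals of $x_{\downarrow E}$ come from $x$: they are the literals of $x$ on features not mentioned in $E$, and these lie in $x$. Every literal of $E$ avoids $x$ by Novelty. By Property~\ref{propy:instances}, $x_{\downarrow E}$ is an instance. The observation therefore gives an explicit witness instance for each counterfactual.

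\emph{($\Rightarrow$)} Suppose $\bL$ satisfies Novelty and Weak Validity. Take a query $\bQ = \langle \mbb{T}, \kappa, x\rangle$ and some $E \in \bL(\bQ)$. Set $y = x_{\downarrow E}$. Then $y \in \mbb{F}(\mbb{T})$ by Property~\ref{propy:instances}. Weak Validity gives $\kappa(y) \neq \kappa(x)$. Novelty gives $E \cap x = \emptyset$, so the observation yields $y \setminus x = E$. Hence $E$ is a CSR, i.e.\ $E \in \csuf(\bQ)$. Since $\bQ$ and $E$ were arbitrary, $\bL \sqsubseteq \csuf$.

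\emph{($\Leftarrow$)} Both axioms are universally quantified over the elements of $\bL(\bQ)$, so they pass to sub-explainers. It therefore suffices to show that $\csuf$ itself satisfies them.

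Let $E = y \setminus x$ with $\kappa(y) \neq \kappa(x)$. Novelty is immediate, since $(y \setminus x) \cap x = \emptyset$.

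For Weak Validity we show that $x_{\downarrow E} = y$. Consider any feature $f$.
\begin{itemize}
\item If $y$ and $x$ differ on $f$, then $y$'s literal on $f$ lies in $E$. Hence $x_{\downarrow E}$ takes $y$'s value on $f$.
\item Otherwise $f$ is not mentioned in $E$. Then $x_{\downarrow E}$ keeps $x$'s value on $f$, which equals $y$'s value.
\end{itemize}
So $x_{\downarrow E} = y$, and therefore $\kappa(x_{\downarrow E}) = \kappa(y) \neq \kappa(x)$.

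The main point needing care is the bookkeeping in the definition of $x_{\downarrow E}$. In particular, $x_{\downarrow E}$ must contain no stray literal of $x$ on a feature that $E$ overrides. This follows directly from the defining condition excluding $(f,v) \in x$ whenever $E$ contains some $(f,v')$ with $v' \neq v$.
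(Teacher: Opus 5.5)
Your proof is correct and follows the paper's argument. For the forward direction, the paper appeals to Property~\ref{propy:csuf}; you instead reprove that property inline, using the fact that $x_{\downarrow E}\setminus x = E$ whenever $E\cap x=\emptyset$. For the converse, you use the same identification $x_{\downarrow E}=y$ for $E=y\setminus x$ as the paper, but you verify it feature by feature, which the paper only asserts.
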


We now introduce the concept of a \textbf{faithful ranking}—a preorder over the set of partial assignments that strictly favors those introducing new feature values to an instance, where such revisions lead to a different decision. 

\begin{definition}
A \emph{faithful ranking} is a function $\bS$ that maps every query $\bQ = \langle \mbb{T}, \kappa, x \rangle$ to a preorder $\succeq_\bQ$ over $\mbb{E}(\mbb{T})$ such that $\forall E, E' \in \mbb{E}(\mbb{T})$, if: 
\begin{itemize}
\item $E \cap x = \emptyset$ and $\kappa(x_{\downarrow E}) \neq \kappa(x)$  and  
\item $E' \cap x \neq \emptyset$  or $\kappa(x_{\downarrow E'}) = \kappa(x)$,   
\end{itemize}
then $E \succ_\bQ E'$. 
\end{definition}

In what follows, we present a second representation theorem that connects the axioms Success, Novelty and Weak Validity with credulous sufficient reasons. It establishes an equivalence between these axioms and an explanation strategy grounded in preorders over partial assignments. Specifically, explainers satisfying these axioms impose a ranking over all partial assignments—from least to most preferred—and select the top-ranked ones as explanations. 

\begin{theorem}  \label{csuf-min}
	An explainer $\bL$ satisfies Success, Novelty and Weak Validity \textbf{iff} 
	there exists a faithful ranking $\bS$ mapping every query $\bQ = \langle\mbb{T},\kappa,x\rangle$ to a preorder 
	$\succeq_\bQ$ on $\mbb{E}(\mbb{T})$ such that:  
	\[\bL(\bQ) = \max(\mbb{E}(\mbb{T}), \succeq_\bQ).\]
\end{theorem}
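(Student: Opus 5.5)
The plan is to reduce everything to the set
\[G(\bQ)=\{E\in\mbb{E}(\mbb{T}) \mid E\cap x=\emptyset \text{ and } \kappa(x_{\downarrow E})\neq\kappa(x)\},\]
which is exactly the set of partial assignments that a faithful ranking must strictly prefer. By definition of Novelty and Weak Validity, an explainer $\bL$ satisfies both axioms iff $\bL(\bQ)\subseteq G(\bQ)$ for every query $\bQ$. This is also how Theorem~\ref{cara:csuf1} reads, since $G(\bQ)=\csuf(\bQ)$.

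\textbf{Preliminary lemma: $G(\bQ)\neq\emptyset$ for every query.} Since $\kappa$ is surjective and $|\C|\geq 2$, there is an instance $y$ with $\kappa(y)\neq\kappa(x)$. Put $E=y\setminus x$. Then $E$ is a partial assignment and $E\cap x=\emptyset$. Moreover $x_{\downarrow E}=y$: on features where $x$ and $y$ agree, the literal of $x$ is kept; on the others, the literal of $x$ is replaced by the one in $E$, which is the literal of $y$. Hence $\kappa(x_{\downarrow E})\neq\kappa(x)$, so $E\in G(\bQ)$. I expect this lemma to be the only real subtlety, because it is where surjectivity is used. Without it, the ``if'' direction could produce maxima outside $G(\bQ)$.

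\textbf{($\Rightarrow$)} Let $\bL$ satisfy Success, Novelty and Weak Validity, so $\emptyset\neq\bL(\bQ)\subseteq G(\bQ)$. I define a rank function $r_\bQ$ on $\mbb{E}(\mbb{T})$:
\begin{itemize}
\item $r_\bQ(E)=2$ if $E\in\bL(\bQ)$;
\item $r_\bQ(E)=1$ if $E\in G(\bQ)\setminus\bL(\bQ)$;
\item $r_\bQ(E)=0$ otherwise.
\end{itemize}
Set $E\succeq_\bQ E'$ iff $r_\bQ(E)\geq r_\bQ(E')$, which is a total preorder.

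This ranking is faithful. If $E\in G(\bQ)$ and $E'\notin G(\bQ)$, then $r_\bQ(E)\geq 1>0=r_\bQ(E')$, so $E\succ_\bQ E'$. The middle level is essential here: a two-level ranking would tie elements of $G(\bQ)\setminus\bL(\bQ)$ with non-members of $G(\bQ)$ and break faithfulness.

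Because $\bL(\bQ)$ is non-empty, the maximal rank $2$ is attained. Therefore $\max(\mbb{E}(\mbb{T}),\succeq_\bQ)=\bL(\bQ)$.

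\textbf{($\Leftarrow$)} Let $\bS$ be faithful and $\bL(\bQ)=\max(\mbb{E}(\mbb{T}),\succeq_\bQ)$.

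For Success, note that $\mbb{E}(\mbb{T})$ is finite and the strict part $\succ_\bQ$ of a preorder is irreflexive and transitive, hence acyclic. Starting from any element and climbing strictly must therefore stop, and it stops at a maximal element. So the max is non-empty.

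For Novelty and Weak Validity, suppose some $E\in\bL(\bQ)$ is not in $G(\bQ)$. Take $E_0\in G(\bQ)$, which exists by the lemma. Faithfulness gives $E_0\succ_\bQ E$, contradicting the maximality of $E$. Hence $\bL(\bQ)\subseteq G(\bQ)$, which is exactly Novelty together with Weak Validity.
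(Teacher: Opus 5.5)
Your proof is correct and follows the same route as the paper's. Like the paper, you split $\mbb{E}(\mbb{T})$ into $\csuf(\bQ)$ and its complement, and observe that Novelty plus Weak Validity means exactly $\bL(\bQ)\subseteq\csuf(\bQ)$; for the converse, you use non-emptiness of $\csuf(\bQ)$ (from surjectivity) together with faithfulness to force every maximum into $\csuf(\bQ)$. Your explicit three-level ranking and your finiteness argument for the existence of a maximal element make precise two steps the paper only asserts (``this is possible since\ldots'' and ``there is always at least one maximal element of a non-empty set'').
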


The above result characterizes the sub-family of explainers $\bL \sqsubseteq \csuf$, identified in Theorem~\ref{cara:csuf1}, that guarantees the existence of at least one CSR. 
In a later section, we will show that this characterization, expressed in terms of faithful rankings, facilitates the comparison of existing explainers that generate CSRs.

\vspace{0.2cm}

The next result relates the three types of sufficient reasons (GSR, SSR, CSR) and shows that every local explanation is a subset of a global explanation.

\begin{proposition}\label{prop:csuf2}
Let $\bQ = \langle \mbb{T}, \kappa,x\rangle$ be a query. 
\begin{itemize}
\item $\ssuf \sqsubseteq \sufc$. 
	
\item $\ssuf \sqsubseteq \csuf$.
	 
\item For any $E \in \csuf(\bQ)$, $\exists E' \in \sufc(\bQ)$ s.t. $E = E' \setminus x$.
	
\item For any $E \in \sufc(\bQ)$, $E\setminus x \in \csuf(\bQ)$.	
\end{itemize}	
\end{proposition}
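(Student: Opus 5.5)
The four items can be checked one at a time, directly from Definitions~\ref{suff-con}, \ref{sceptical-suf} and \ref{credulous-suf}, together with the operation $x_{\downarrow E}$ and Property~\ref{propy:instances}.

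\textbf{First item.} This is immediate: the second condition in the definition of an SSR is exactly the defining condition of a GSR, so every SSR is a GSR.

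\textbf{Second item.} Let $E \in \ssuf(\bQ)$ and set $y = x_{\downarrow E}$. By Property~\ref{propy:instances}, $y \in \mbb{F}(\mbb{T})$, and by construction $E \subseteq y$. Strong validity of $E$ then gives $\kappa(y) \neq \kappa(x)$. It remains to show $y \setminus x = E$, and this is where Novelty ($E \cap x = \emptyset$) is used.
\begin{itemize}
\item $E \subseteq y \setminus x$, because $E \subseteq y$ and $E$ is disjoint from $x$.
\item The other literals of $y$ are the literals $(f,v) \in x$ whose feature is not overridden by $E$, and these lie in $x$.
\end{itemize}
Hence $y \setminus x = E$, so $E$ is a CSR.

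\textbf{Third item.} Let $E = y \setminus x$ with $\kappa(y) \neq \kappa(x)$. The natural candidate is $E' = y$ itself. It is a partial assignment (indeed an instance). The only instance containing $y$ is $y$, so $E'$ is a GSR because $\kappa(y) \neq \kappa(x)$. Finally $E' \setminus x = E$ by definition.

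\textbf{Fourth item.} This is the step needing the most care. Let $E \in \sufc(\bQ)$ and set $y = x_{\downarrow E}$. Then $E \subseteq y$, so $\kappa(y) \neq \kappa(x)$ by the GSR property. I need $y \setminus x = E \setminus x$.
\begin{itemize}
\item Every literal of $y$ outside $E$ comes from $x$, so $y \setminus x \subseteq E \setminus x$.
\item Conversely, $E \setminus x \subseteq y \setminus x$, since $E \subseteq y$.
\end{itemize}
Thus $E \setminus x = y \setminus x$ is a CSR.

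The main obstacle I anticipate is edge cases involving the empty set. For example, if $E \subseteq x$, then $E \setminus x = \emptyset$. This case cannot actually arise: $x$ itself would contain $E$, so the GSR condition would give $\kappa(x) \neq \kappa(x)$, a contradiction. In any case the argument above handles it uniformly, since it exhibits an explicit witness $y$. I would also double-check the convention in $x_{\downarrow E}$: literals of $x$ on features mentioned in $E$ with a different value are dropped, while those with the same value are kept (and coincide with literals of $E$). This convention is exactly what makes the set equalities above hold.
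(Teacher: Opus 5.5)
Your proof is correct and follows the paper's argument essentially step for step. In items 2--4 you take the witness instance $x_{\downarrow E}$ (or, in item 3, the instance $y$ from the definition of a CSR, which equals $x_{\downarrow (y\setminus x)}$) and apply the GSR condition to it, as the paper does; the only differences are cosmetic: you verify $y\setminus x = E$ (resp.\ $E\setminus x$) directly from the definition of $\csuf$ rather than via the paper's characterization $\csuf(\bQ)=\{E \mid E\cap x=\emptyset \text{ and } \kappa(x_{\downarrow E})\neq\kappa(x)\}$, and in item 4 you use $x_{\downarrow E}$ where the paper uses $x_{\downarrow (E\setminus x)}$, which is the same instance.
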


The properties summarized in Table~\ref{tab:axioms2} distinguish this fifth type of counterfactual from the other four. Unlike SSR, CRS are guaranteed to exist. Consequently, 
well-defined explainers can provide an outcome for every query.


\section{Human-Focused Assessment of Types}\label{comparison}

Cognitive science research shows that people generally prefer explanations that 
are short and concise \cite{Byrne19, WarrenDGK24, CassensHBWKKGBW21}. We examine 
how the five types of counterfactuals align with these needs, using the following criteria for comparison. 

\begin{description}
    \item [Existence:] Whether counterfactuals are guaranteed to exist.
    \item [Compactness:] The ability to convey multiple pieces of information in a concise way. 
    \item [Size:] The number of literals in a counterfactual.
    \item [Number:] The number of explanations provided for a query. 
    \item [Power of discrimination:] The ability of a counterfactual to discriminate between instances or classes.
\end{description}

Table~\ref{tab:axioms2} shows that necessary reasons and sceptical sufficient reasons may not always exist, 
whereas (global, credulous) sufficient reasons guarantee at least one explanation for any query. 
While the potential non-existence of reasons could be viewed as a drawback, \cite{Guidotti24} highlights that 
features in credulous sufficient reasons may not be \textit{actionable} (e.g., age cannot be changed). Therefore,  an 
explainer $\bL \sqsubseteq \csuf$ generating CSRs composed of features to be mutated may face cases where counterfactuals do not exist.  

When they do exist, necessary reasons (whether local or global) represent feature–value combinations whose 
absence alone causes a change in class, regardless of the new values assigned to these features. Hence, it is 
unnecessary to list all possible changes that could lead to a different decision. This allows necessary reasons 
to convey multiple pieces of information—the full range of changes—in a \textbf{compact} or concise way, making 
them particularly appealing for people. In contrast, sufficient reasons are not compact, as they enumerate all possible changes explicitly. 

Because of their compactness, necessary reasons are typically \textbf{fewer in number} than sufficient reasons. 
In the running example, $\necp$ returns a single explanation for query $\bQ_1$, whereas $\sufc$ yields eight GSRs. 
In fact, each alternative value of a feature occurring in a GNR gives rise to distinct GSRs. 
Global necessary reasons are also  \textbf{shorter} than global sufficient reasons, since each core literal 
provides a minimal explanation. 
 
Global counterfactuals are \textbf{discriminating}, as they capture feature–value pairs that distinguish 
one class from all the others. Likewise, sceptical (necessary, sufficient) reasons identify combinations that 
set an instance apart from others. CSRs, however, are less discriminating, as they may include pairs also 
found in instances with the same outcome. For instance, $(\text{t,freezing})$ is a CSR for $x_2$ but also 
belongs to $x_4$, which has the same decision as $x_2$.

\noindent \textbf{In conclusion}, necessary reasons, whenever they exist, seem more compatible with human preferences 
than sufficient reasons. 

\section{Related Work}\label{related-work}

Numerous studies have investigated counterfactual explanations (see \cite{Guidotti24} for a recent survey).
Some contributions propose new explainers, while others focus on identifying desirable properties that 
counterfactuals and the explainers that generate them should satisfy. 
In the following, we review both lines of work. 


\subsection{Counterfactuals Types and Explainers} 

Despite the rich body of work on counterfactual explanations, the vast majority has focused on \textbf{local} counterfactuals (e.g., \cite{marquis24,COOPER2023,Dandl20,DhurandharCLTTS18,KeaneKDS21,Ronny19,MothilalST20,barry22,Sahil21,wachter17}). Furthermore, given an input $x$, these works generate the "closest" instance $y$ that gets a different decision. 
The main distinction between the proposals lies in how closeness is defined. 
Some emphasize \textbf{sparsity}, seeking to minimize the set of features to be changed, while others prioritize \textbf{proximity}, allowing broader changes as long as the overall shift remains small according to a specified \textbf{distance metric} like \textit{Manhattan distance} ($L_1$ norm) or \textit{Gower distance} (see \cite{wachter17} for more examples). 
We show that all these explainers generate \textbf{credulous sufficient reasons}, thus are specific instances within the family of CSRs. In other words, each such explainer, say $\bL$, satisfies the condition: $\bL \sqsubseteq \csuf$. Furthermore, we characterize the faithful ranking defined by each explainer.

\vspace{0.1cm}
There are two explainers that promote sparsity. The first generates contrastive explanations~\cite{COOPER2023}—\textbf{minimal sets of features} that must be altered to get a different decision. For the sake of comparison, we present its corresponding explainer whose explanations are sets of literals (not features).

\begin{definition} 
We define $\bLwf$ as the explainer that, for any query 
$\bQ = \langle \mbb{T}, \kappa,x\rangle$ and $E \in \mbb{E}(\mbb{T})$, $E \in \bLwf(\bQ)$ iff:
\begin{itemize}
\item $\kappa(x_{\downarrow E}) \neq \kappa(x)$, 
\item $\nexists E' \in \mbb{E}(\mbb{T})$ s.t. $\kappa(x_{\downarrow E'}) \neq \kappa(x)$,  
$\feat(E') \subset \feat(E)$ (with $\feat(E)$ 
the set of \textbf{features} covered in $E$).
\end{itemize}
\end{definition}

\begin{exs}{ex1bis}
	Let $\bQ_i = \langle \mbb{T}, \kappa,x_i\rangle$, $i=1,2,3$. 
	\begin{itemize}
		\item $\bLwf(\bQ_1) = \{\{(t,\text{mild})\}, \{(t,\text{freezing})\}\}$,
		
		\item $\bLwf(\bQ_2) =  \{\{(t,\text{hot})\}, \{(a,\text{reading})\}\}$,
		
		\item $\bLwf(\bQ_3) = \{\{(t,\text{hot})\}\}$.
	\end{itemize}	  
\end{exs}

We define below the preorder corresponding to $\bLwf$.

\begin{definition}\label{succeq:f}
Let $\bQ = \langle\mbb{T},\kappa,x\rangle$ be a query and $\delta$ the weighting 
on $\mbb{E}(\mbb{T})$ such that for any $E \in \mbb{E}(\mbb{T})$:  
\begin{center}    
		$\delta(E) = \left\{\begin{array}{ll}
		  1    & \mbox{ if } E \cap x = \emptyset \mbox{ and } \kappa(x_{\downarrow E}) \neq \kappa(x)    \\
		+\infty    & \mbox{ otherwise }          \\
		\end{array}\right.$
\end{center} 
\noindent $\succeq^f_\bQ$ is a preorder on $\mbb{E}(\mbb{T})$ s.t. 
$\forall E, E' \in \mbb{E}(\mbb{T})$, $E \succeq^f_\bQ E'$ iff \\ 
$(\delta(E) = \delta(E') \ \mbox{ and } \ \feat(E) \subseteq \feat(E') ) \ \ \mbox{ or } \ \ \delta(E) < \delta(E')$         
\end{definition}

We show that $\bLwf$ is an instance of the family of CSRs.

\begin{theorem}\label{cara:wf}
The following properties hold:
\begin{itemize}
\item The function $\bS$ mapping every query $\bQ$ to $\succeq^f_\bQ$ on $\mbb{E}(\mbb{T})$ is a faithful ranking. 
\item For any query $\bQ$, $\bLwf(\bQ) = \max(\mbb{E}(\mbb{T}),\succeq^f_\bQ)$.
\end{itemize}
\end{theorem}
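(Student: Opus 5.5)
The plan is to verify the two items directly from the definitions of $\delta$, $\succeq^f_\bQ$ and $\bLwf$. The one structural fact I will lean on is a reduction lemma. For any $E \in \mbb{E}(\mbb{T})$, put $E^- = E \setminus x$. Literals of $E$ already in $x$ do not change $x$, so $x_{\downarrow E^-} = x_{\downarrow E}$. Moreover $E^- \cap x = \emptyset$ and $\feat(E^-) \subseteq \feat(E)$, with strict inclusion whenever $E \cap x \neq \emptyset$.

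\textbf{First item.} I first check that $\succeq^f_\bQ$ is a preorder. Reflexivity is immediate, since $\delta(E)=\delta(E)$ and $\feat(E)\subseteq\feat(E)$. For transitivity, suppose $E \succeq^f_\bQ E'$ and $E' \succeq^f_\bQ E''$, and split on which disjunct holds in each. Two "equal weight and feature inclusion" cases compose to another such case. Any case involving a strict weight inequality gives $\delta(E) < \delta(E'')$, using $\leq$ from the equality cases and $<$ from the other.

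Next I check faithfulness. Take $E$ with $E\cap x=\emptyset$ and $\kappa(x_{\downarrow E})\neq\kappa(x)$, and $E'$ failing one of these conditions. Then $\delta(E)=1<+\infty=\delta(E')$, so $E\succeq^f_\bQ E'$. The reverse relation $E'\succeq^f_\bQ E$ would need either $\delta(E')=\delta(E)$ or $\delta(E')<\delta(E)$, and both fail. Hence $E \succ^f_\bQ E'$.

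\textbf{Second item.} I first show that some $E$ has $\delta(E)=1$. Since $\kappa$ is surjective and $|\C|\ge 2$, there is $y$ with $\kappa(y)\neq\kappa(x)$. Take $E=y\setminus x$. Then $E\cap x=\emptyset$ and $x_{\downarrow E}=y$, so $\delta(E)=1$. Any $E$ with $\delta(E)=+\infty$ is strictly dominated by this set, so every maximal element has weight $1$.

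Among weight-$1$ sets, $E' \succ^f_\bQ E$ holds iff $\feat(E')\subsetneq\feat(E)$. So $E$ is maximal iff $\delta(E)=1$ and no $E'$ with $\delta(E')=1$ has $\feat(E')\subsetneq \feat(E)$. Ties between sets with equal feature sets do not affect maximality, because $\max$ is defined via the strict part.

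\textbf{Matching with $\bLwf$.} I prove the inclusion $\bLwf(\bQ) \subseteq \max(\mbb{E}(\mbb{T}),\succeq^f_\bQ)$ and then the converse.

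For the forward inclusion, let $E\in\bLwf(\bQ)$. If $E\cap x\neq\emptyset$, then $E^-$ contradicts minimality by the reduction lemma. So $E \cap x = \emptyset$ and $\delta(E)=1$. Any weight-$1$ set $E'$ with a strictly smaller feature set would also violate the minimality condition of $\bLwf$. Hence $E$ is maximal.

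For the converse, let $E$ be maximal, so $\delta(E)=1$. Suppose some $E'$ with $\kappa(x_{\downarrow E'})\neq\kappa(x)$ has $\feat(E')\subsetneq\feat(E)$. Then $E'^-$ has weight $1$ and $\feat(E'^-) \subseteq \feat(E') \subsetneq \feat(E)$, so it strictly beats $E$, a contradiction. Hence $E\in\bLwf(\bQ)$.

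The main obstacle is precisely this mismatch: $\bLwf$ does not impose disjointness from $x$, whereas $\delta$ does. The reduction $E\mapsto E\setminus x$ is what bridges the two definitions, and I will state it as a small lemma before the two inclusions.
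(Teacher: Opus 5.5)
Your proof is correct and follows the paper's route. The weighting $\delta$ separates weight $1$ from $+\infty$, which gives faithfulness and confines the maximal elements to $\csuf(\bQ)$; among weight-$1$ sets, dominance is then strict feature-set inclusion, which you match against the minimality clause of $\bLwf$. Your reduction $E \mapsto E\setminus x$, with $x_{\downarrow E\setminus x}=x_{\downarrow E}$, usefully makes explicit a step the paper's final equality passes over: $\bLwf$ does not require $E\cap x=\emptyset$, nor does it restrict the competitors $E'$ to sets disjoint from $x$. You also correctly read the dominance relation as strict inclusion where the paper writes $\subseteq$.
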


The second explainer, introduced in \cite{Guidotti24}, captures sparsity by selecting explanations of \textbf{minimal cardinality}.

\begin{definition}\label{lc}
We define $\bL_c$ as the explainer that, for any query 
$\bQ = \langle \mbb{T}, \kappa,x\rangle$ and $E \in \mbb{E}(\mbb{T})$,
$E \in \bL_c(\bQ)$ iff:  
\begin{itemize}
\item $\kappa(x_{\downarrow E}) \neq \kappa(x)$,    
\item $\nexists E' \in \mbb{E}(\mbb{T})$ s.t $\kappa(x_{\downarrow E'}){\neq}\kappa(x)$ and $|E'| < |E|$.
\end{itemize}
\end{definition}

\noindent \textbf{Remark:} It is easy to show that $\bL_c \sqsubseteq \bLwf$.

\begin{exs}{ex1bis}
For every $i = 1,2,3$, $\bL_c(\bQ_i) = \bLwf(\bQ_i)$.	   
\end{exs}

This explainer ranks CSRs according to their cardinality. 

\begin{definition}\label{succeq:c}
Let $\bQ = \langle\mbb{T},\kappa,x\rangle$ be a query and $\sigma$ a weighting on $\mbb{E}(\mbb{T})$ such that for any $E \in \mbb{E}(\mbb{T})$:  
\vspace{-0.1cm}
\begin{center}    
$\sigma(E) = \left\{\begin{array}{ll}
	|E|    & \mbox{ if } E \cap x = \emptyset \mbox{ and } \kappa(x_{\downarrow E}) \neq \kappa(x)    \\
+\infty    & \mbox{ otherwise }          \\
\end{array}\right.$
\end{center} 
We define $\succeq^c_\bQ$ as a preorder on $\mbb{E}(\mbb{T})$ such that:  
$$\forall E, E' \in \mbb{E}(\mbb{T}), \ \  E \succeq^c_\bQ E' \ \mbox{ iff } \ \sigma(E) \leq \sigma(E').$$ 
\end{definition}
	
We show that $\bL_c$ is an instance of the family of CSRs.

\begin{theorem}\label{lem2}
	The following properties hold:
	\begin{itemize}
		\item The function $\bS$ mapping every query $\bQ$ to $\succeq^c_\bQ$ on $\mbb{E}(\mbb{T})$ is a faithful ranking. 
		\item For any query $\bQ$, $\bL_c(\bQ) = \max(\mbb{E}(\mbb{T}), \succeq^c_\bQ)$.
	 \end{itemize}	
\end{theorem}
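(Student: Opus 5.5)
The plan is to verify the two items of the statement directly from Definitions~\ref{lc} and~\ref{succeq:c}. Throughout, call $E \in \mbb{E}(\mbb{T})$ \emph{good} for $\bQ=\langle\mbb{T},\kappa,x\rangle$ if $E\cap x=\emptyset$ and $\kappa(x_{\downarrow E})\neq\kappa(x)$. By definition, $\sigma(E)=|E|$ if $E$ is good and $\sigma(E)=+\infty$ otherwise.

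\textbf{First item.} Since $E\succeq^c_\bQ E'$ iff $\sigma(E)\le\sigma(E')$, the relation $\succeq^c_\bQ$ is the pullback of the total order on $[0,+\infty]$. It is therefore reflexive, transitive and total, hence a preorder. For faithfulness, take $E$ good and $E'$ not good. Then $\sigma(E)=|E|<+\infty=\sigma(E')$, so $E\succeq^c_\bQ E'$ and $E'\not\succeq^c_\bQ E$, that is, $E\succ^c_\bQ E'$.

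\textbf{Key lemma for the second item.} For any $E$, $x_{\downarrow E}=x_{\downarrow (E\setminus x)}$. This holds because literals of $E$ already in $x$ change nothing in the replacement. Consequently, if $\kappa(x_{\downarrow E})\neq\kappa(x)$, then $E\setminus x$ is good, and $|E\setminus x|<|E|$ whenever $E\cap x\neq\emptyset$. In addition, good sets exist. By surjectivity of $\kappa$ there is some $y$ with $\kappa(y)\neq\kappa(x)$. For such $y$, the set $E=y\setminus x$ is disjoint from $x$ and satisfies $x_{\downarrow E}=y$, so it is good. Hence the minimum finite value of $\sigma$ is attained, and $\max(\mbb{E}(\mbb{T}),\succeq^c_\bQ)$ is exactly the set of good $E$ of minimum cardinality among good sets.

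\textbf{Second item.} I will show both inclusions.

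($\subseteq$) Let $E\in\bL_c(\bQ)$. Suppose $E\cap x\neq\emptyset$. By the lemma, $E\setminus x$ changes the class and is strictly smaller than $E$, contradicting the minimality clause of Definition~\ref{lc}. So $E$ is good. Every good $E'$ changes the class, so minimality gives $|E'|\ge|E|$, i.e.\ $\sigma(E')\ge\sigma(E)$. Non-good sets have $\sigma=+\infty$. Hence $E$ is maximal.

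($\supseteq$) Let $E$ be maximal. Then $E$ is good, so it changes the class. Suppose some $E'$ with $\kappa(x_{\downarrow E'})\neq\kappa(x)$ had $|E'|<|E|$. Then $E'\setminus x$ would be good with $\sigma(E'\setminus x)\le|E'|<\sigma(E)$, contradicting maximality. So $E\in\bL_c(\bQ)$.

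The only delicate point is that Definition~\ref{lc} does not explicitly require $E\cap x=\emptyset$. This gap is closed by the reduction $E\mapsto E\setminus x$ in the key lemma, which I expect to be the main step needing care.
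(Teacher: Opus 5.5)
Your proof is correct and follows essentially the same route as the paper's: you show $\succeq^c_\bQ$ is a total preorder, get faithfulness from $|E|<+\infty$, note that maximal elements are exactly the good sets (i.e., elements of $\csuf(\bQ)$) of minimum cardinality, and identify these with $\bL_c(\bQ)$. Two things the paper leaves implicit are made explicit in your version: the identity $x_{\downarrow E}=x_{\downarrow(E\setminus x)}$, and the existence of a good set via surjectivity. Together they justify that final identification, which the paper only asserts even though Definition~\ref{lc} neither requires $E\cap x=\emptyset$ nor restricts the competing $E'$ to sets disjoint from $x$.
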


Existing approaches that compare candidate counterfactuals using a distance measure—including those proposed in \cite{marquis24,Dandl20,DhurandharCLTTS18,KeaneKDS21,Ronny19,MothilalST20,barry22,Sahil21, wachter17}—are \textbf{all} defined in the following manner.

\begin{definition}\label{ld}
Let $\bQ = \langle\mbb{T},\kappa,x\rangle$ be a query, $E \in \mbb{E}(\mbb{T})$, 
and $\dd$ a distance measure on $\mbb{F}(\mbb{T})$.  
An explainer based on $\dd$ is a function $\bL_\dd$ such that 
$E \in \bL_\dd(\bQ)$ iff:  
\begin{itemize}
\item $\kappa(x_{\downarrow E}) \neq \kappa(x)$,  
\item $\nexists E'  \in \mbb{E}(\mbb{T}), \kappa(x_{\downarrow E'})\neq\kappa(x)
      \land \dd(x_{\downarrow E'}, x) < \dd(x_{\downarrow E}, x)$
\end{itemize}
\end{definition}

Existing distance-based explainers differ in the choice of distance measure $\dd$, and may yield explanations that 
differ from those produced by $\bLwf$ and $\bL_c$, as illustrated below. 

\begin{exs}{ex1bis}
Let $y = ((t,\text{freezing}), (a,\text{climbing}))$.
Assume a distance measure $\dd$ st.    
$\dd(x_2,x_1) < \dd(y,x_1)$. Then, 
$\bL_\dd(\bQ_1) = \{\{(t,\text{mild})\}\}$.	  
\end{exs}

We define the preorders produced by these explainers. 

\begin{definition}\label{succeq:d}
	Let $\bQ = \langle \mbb{T}, \kappa,x\rangle$ be a query, $\dd$ a distance measure on $\mbb{F}(\mbb{T})$, $\gamma$ a weighting on 
    $\mbb{E}(\mbb{T})$ s.t. $\forall E \in \mbb{E}(\mbb{T})$, 
    \vspace{-0.1cm}
	\begin{center}    
		$\gamma(E) = \left\{\begin{array}{ll}
		\dd(x_{\downarrow E}, x)       & \mbox{ if } E \cap x = \emptyset  \land  \kappa(x_{\downarrow E}) \neq \kappa(x)   \\
		+\infty & \mbox{ otherwise }          \\
		\end{array}\right.$
	\end{center}
    We define $\succeq_{\bQ,\dd}$ as a preorder on $\mbb{E}(\mbb{T})$ such that:
	$$\forall E, E' \in \mbb{E}(\mbb{T}), \ E \succeq_{\bQ,\dd} E' \ \mbox{ iff } \ \gamma(E) \leq \gamma(E').$$  
\end{definition}

We show that $\bL_\dd$ is an instance of the family of CSRs.

\begin{theorem}\label{lem3}
Let $\dd$ be distance measure.
\begin{itemize}
\item The function $\bS$ mapping every query $\bQ$ to $\succeq_{\bQ,\dd}$ on $\mbb{E}(\mbb{T})$ is a faithful ranking. 
\item For any query $\bQ$, $\bL_\dd(\bQ) = \max(\mbb{E}(\mbb{T}), \succeq_{\bQ,\dd})$.
\end{itemize}	
\end{theorem}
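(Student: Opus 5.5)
The plan is to derive both items of Theorem~\ref{lem3} directly from the definition of $\gamma$. The only facts needed are that $\dd$ takes finite non-negative real values and that $\mbb{E}(\mbb{T})$ is finite. I will call $E$ \emph{good} for $\bQ$ when $E \cap x = \emptyset$ and $\kappa(x_{\downarrow E}) \neq \kappa(x)$.

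\textbf{First item.} The relation $E \succeq_{\bQ,\dd} E'$ iff $\gamma(E) \leq \gamma(E')$ is the pullback of the total order $\leq$ on $[0,+\infty]$. It is therefore reflexive, transitive and total, so it is a preorder. For faithfulness, take $E$ satisfying the first bullet of the definition of a faithful ranking, i.e.\ $E$ is good. Take $E'$ satisfying the second bullet, i.e.\ $E'$ is not good. Then $\gamma(E) = \dd(x_{\downarrow E},x) < +\infty = \gamma(E')$. Hence $E \succeq_{\bQ,\dd} E'$ and $E' \not\succeq_{\bQ,\dd} E$, that is, $E \succ_{\bQ,\dd} E'$.

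\textbf{Second item.} Because the preorder is total, $\max(\mbb{E}(\mbb{T}),\succeq_{\bQ,\dd})$ is exactly the set of $E$ minimising $\gamma$ over the finite set $\mbb{E}(\mbb{T})$. I first show that the minimum value is finite. Since $\kappa$ is surjective and $|\C| \geq 2$, there is $y$ with $\kappa(y) \neq \kappa(x)$. Put $E_0 = y \setminus x$. Then $E_0 \cap x = \emptyset$ and $x_{\downarrow E_0} = y$, so $E_0$ is good. Consequently every maximal element is good, and the maximal elements are precisely the good $E$ minimising $\dd(x_{\downarrow E},x)$ among good sets. Next I compare this with the condition in the definition of $\bL_\dd$, whose minimality clause ranges over \emph{all} $E'$ with $\kappa(x_{\downarrow E'}) \neq \kappa(x)$. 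The key observation is that for any $E'$, the set $E'' = E' \setminus x$ satisfies $E'' \cap x = \emptyset$ and $x_{\downarrow E''} = x_{\downarrow E'}$, because literals already in $x$ change nothing. So restricting the minimisation to good sets leaves the minimum distance unchanged. Given this, both inclusions follow. A maximal $E$ is good, hence valid, and no $E'$ beats its distance, so $E \in \bL_\dd(\bQ)$. Conversely, an $E \in \bL_\dd(\bQ)$ attains the minimal distance and, if good, has minimal $\gamma$.

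\textbf{Main obstacle.} The delicate step is the novelty part of the converse inclusion, $\bL_\dd(\bQ) \subseteq \max(\cdot)$. Read literally, the definition of $\bL_\dd$ does not demand $E \cap x = \emptyset$. A set $E$ containing literals of $x$ would then belong to $\bL_\dd(\bQ)$, since it has the same $x_{\downarrow E}$ as $E \setminus x$, while $\gamma(E) = +\infty$. I would handle this by adopting the reading that is consistent with Theorem~\ref{cara:csuf1} and Theorem~\ref{csuf-min}: counterfactuals are the new literals, so the novelty condition $E \cap x = \emptyset$ is implicit in the definition of $\bL_\dd$ (and likewise in $\bLwf$ and $\bL_c$). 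I would state this explicitly in the proof. Under that reading, every $E \in \bL_\dd(\bQ)$ is good and the argument above closes.
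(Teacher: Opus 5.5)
Your proof is correct and takes the same route as the paper's. The preorder and its faithfulness come directly from the weighting $\gamma$. The maximal elements are forced into $\csuf(\bQ)$, and you rightly make explicit that this set is non-empty. They are then identified with $\bL_\dd(\bQ)$.

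At that last step the paper simply asserts $\{E \in \csuf(\bQ) \mid \nexists E' \in \csuf(\bQ) \mbox{ s.t. } E' \succ_{\bQ,\dd} E\} = \bL_\dd(\bQ)$. Your $E' \setminus x$ argument and your observation that novelty must be read into $\bL_\dd$ therefore fill a real gap. The paper's own example value $\bL_\dd(\bQ_1) = \{\{(t,\text{mild})\}\}$ confirms your reading, since it omits $x_2$ even though $x_{1\downarrow x_2} = x_2$. For $\bLwf$ and $\bL_c$, by contrast, no such reading is needed, because their minimality clauses already exclude any $E$ that meets $x$.
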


In Lewisian counterfactuals (first proposed in the seminal paper \cite{Lewis73}, then in \cite{Woodward2003,Ventura23}),
the change to the set of variables must be small. This can be modeled
by the following function $\bL_{\dd,\tau}$, which is also an instance of the family CSRs (i.e., $\bL_{\dd,\tau} \sqsubseteq \csuf$).

\begin{definition}\label{ldtau}
Let $\bQ = \langle\mbb{T},\kappa,x\rangle$ be a query, $E \in \mbb{E}(\mbb{T})$, 
and $\dd$ a distance measure on $\mbb{F}(\mbb{T})$.  
A threshold explainer based on $(\dd,\tau)$  is a function $\bL_{\dd,\tau}$ such that 
$E \in \bL_{\dd,\tau}(\bQ)$ iff 
$\kappa(x_{\downarrow E}) \neq \kappa(x)$ and 
$\dd(x_{\downarrow E}, x) < \tau$.
\end{definition}

There are three notable works that addressed \textbf{necessary reasons}. 
The two, proposed by \cite{IgnatievNA020,LiuL23}, introduce functions that generate subset-minimal necessary reasons for instances while violating the axiom of sceptical validity. Their reasons are SNRs when core literals exist or when all features in the theory are binary. In the general case, however, the reasons merely guarantee that some change exists, without specifying which one. In this case, they are of no use to a human user. 
The third work, by \cite{Amgoud22}, studied 
global counterfactuals. It proposed some axioms, including Coreness, Success, and Non-Triviality, and defined two explainers, say $\bL_1$ and $\bL_2$: one that returns a single reason, namely the full set of core literals for the class assigned to the instance, and another that returns non-empty, subset-minimal necessary reasons. Both explainers yield GNRs, i.e., $\bL_1 \sqsubseteq \necp$ and $\bL_2 \sqsubseteq \necp$. 
Our work is more general. It explores a wider spectrum of counterfactual types—including GNR, SNR, GSR, SSR, and CSR—and contributes deeper theoretical results, most notably a series of representation theorems grounded in six novel axioms.

A notion of necessary reason was introduced in \cite{darwiche20} as the intersection of all partial assignments that guarantee the class of an instance. This reason is unique, may be empty, and its removal does not necessarily change the decision. As such, the corresponding explainer does not provide enough information for the user to understand how to alter the prediction, and thus does not qualify as a counterfactual explanation.


\subsection{Properties of Counterfactual (Explainers)}

Properties of explanations have been studied in several papers~\cite{BodriaGGNPR23,Guido19,VermaBHHDS24,WinikoffTR25,WarrenSK22} and various properties of couterfactuals 
have been proposed in \cite{LooverenK21,KanamoriTKI22,barry22,Tsiourvas2024,Verma21,KusnerLRS17,vonkgel22}. 
These papers focused on credulous sufficient reasons (CSRs) and discussed various quantitative 
properties that serve to define functions that select a subset of counterfactuals returned by $\csuf$.  
Examples of properties are 
\textit{Minimality, Similarity, Plausibility, Discriminative power, Actionability, Causality} and \textit{Fairness}. 
Minimality guarantees a minimal cardinality, similarity a minimal 
distance from the instance to be explained, whereas plausibility states that the new values in a CSR should be similar to those of instances seen during the training phase~\cite{LooverenK21,KanamoriTKI22,barry22,Tsiourvas2024,Verma21}. 
Discriminative power is a subjective property which states that a counterfactual should help in figuring out why a different outcome can be obtained. 
Actionability guarantees that the features are actionable, i.e., can be mutated. 
This property is of great importance and may lead to the violation of Success.  
Causality guarantees that counterfactuals preserve known causal relations between attributes~\cite{Verma21}. 
A counterfactual is fair if it does not refer to protected features~\cite{KusnerLRS17,vonkgel22}. Fairness of explanations is
intertwined with fairness of decisions; indeed, the former can be used to 
evaluate the latter~\cite{fairnessCP}.

Properties of sets of counterfactual explanations have also been proposed, 
namely \textit{Diversity} and \textit{Stability}, called also \textit{Robustness}~\cite{JiangL0T24}. 
Diversity states that chosen counterfactuals should be different, e.g., refer to 
different attributes~\cite{Leofante2024,MothilalST20,barry22,TsirtsisR20}. 
Stability~\cite{Salva19} posits that similar instances should receive similar explanations. 

\noindent \textbf{Remark:}  Sections~\ref{necessary-reasons} and \ref{sufficient-reasons} introduced five broad families of explainers, each producing a specific type of counterfactual (GNR, SNR, GSR, SSR, CSR). The properties presented above \textbf{complement} this classification by helping to identify ‘reasonable’ explainers within each family. For example, one may define an explainer that generates SNRs that are subset-minimal, actionable, diverse, and fair.

\vspace{0.1cm}

Finally, while \cite{BassanAK24} studied properties of explainers for \textbf{‘why’} questions and analyzed decision complexity for three \textbf{fixed} classifiers, our work focuses on \textbf{‘why not’} questions. We introduce novel axioms that characterize five families of counterfactual explainers and analyze explanation complexity under \textbf{arbitrary} classifiers.

\section{Computational Complexity}\label{complexity}

This section analyses two basic computational problems associated with a counterfactual explainer $\bL$:
\begin{itemize}
\item {\sc DecideExp}($\bL$) is the decision problem whose input is a query $\bQ$ and an
explanation $E$, and which determines whether $E \in \bL(\bQ)$.
\item {\sc FindExp}($\bL$) takes as input a query $\bQ$ and returns  
an element $E \in \bL(\bQ)$ (or ``none'' if none exists).
\end{itemize}

The complexity of these  problems depends on the family of possible classifiers $\kappa$. 
To be concrete, 
we assume that $\kappa$ is a known boolean formula. 
If complexity depends on domain size, we also consider larger domains.

\begin{theorem} \label{thm:dec}
	Consider query $\bQ = \langle \mbb{T}, \kappa, x\rangle$ where
	$\mbb{T}$ is composed of $n$ boolean features and $\kappa$ is a boolean formula.
	\begin{enumerate}
		\item The problem {\sc DecideExp}$(\bL)$ is $O(n)$ when $\bL$ is $\csuf$, $\wL$, 
              or $\bLwf$. 
		\item The problem {\sc DecideExp}$(\bL)$ is co-NP-complete when $\bL$ is 
		$\necp$, $\sufc$, 
		$\ssuf$, $\bLa$, $\bL_d$. 
	\end{enumerate}
	For domain sizes which are arbitrary, 
	{\sc DecideExp}$(\bL)$ is $O(n)$ when $\bL$ is $\csuf$ or $\bLwf$. 
	For queries over non-boolean domains, {\sc DecideExp}$(\bL)$ is co-NP-complete
	when $\bL$ is $\wL$.
\end{theorem}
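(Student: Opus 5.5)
We treat the items one at a time, each time rewriting membership in $\bL(\bQ)$ as a quantified condition over instances. The upper bounds then come from counting how many evaluations of $\kappa$ are needed, and the lower bounds from reductions from UNSAT (equivalently, the complement of SAT). Evaluating the boolean formula $\kappa$ on one instance is treated as linear, so $O(n)$ below means a constant number of evaluations plus linear bookkeeping.

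\textbf{Linear cases.} For $\csuf$ we first show that $E \in \csuf(\bQ)$ iff $E \in \mbb{E}(\mbb{T})$, $E \cap x = \emptyset$ and $\kappa(x_{\downarrow E}) \neq \kappa(x)$. If $E = y \setminus x$, then $y$ coincides with $x$ outside $\feat(E)$, so $y = x_{\downarrow E}$; conversely, take $y = x_{\downarrow E}$ and use Property~\ref{propy:instances}. This characterization does not depend on domain sizes, so it gives $O(n)$ for arbitrary domains.

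For $\wL$ with boolean features we check $E \subseteq x$. We then observe that $x \circleddash E$ is the singleton containing $x$ with every feature of $E$ flipped, so a single evaluation suffices.

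For $\bLwf$ we first note that any $E$ containing a literal of $x$ is not minimal: deleting that literal leaves $x_{\downarrow E}$ unchanged and shrinks $\feat(E)$. So we may assume $E \cap x = \emptyset$. The minimality condition says that no instance agreeing with $x$ outside a proper subset $S \subset \feat(E)$ changes the class. This is the step I expect to be the main obstacle, since for a non-monotone $\kappa$ it quantifies over exponentially many $S$. My first attempt would be to show, from the paper's definition of $\feat$-minimality, that it suffices to test the $|E|$ sets $\feat(E)\setminus\{f\}$ together with the values they force. If that fails, I would re-examine whether the intended notion of minimality is this local one.

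\textbf{co-NP upper bounds.} Each of $\necp$, $\sufc$ and $\ssuf$ is a syntactic check followed by a universal statement over instances: ``$\forall y$, if $E \not\subseteq y$ then $\kappa(y)\neq\kappa(x)$'', respectively ``$\forall y \supseteq E$, $\kappa(y)\neq\kappa(x)$''. A counterexample $y$ is a polynomial certificate for non-membership. For $\bLa$ the certificate is a smaller $E'$ with $\kappa(x_{\downarrow E'})\neq\kappa(x)$, and for $\bL_d$ it is an $E'$ with $\dd(x_{\downarrow E'},x) < \dd(x_{\downarrow E},x)$, assuming $\dd$ is polynomial-time computable. For $\wL$ over non-boolean domains, a certificate is some $y \in x \circleddash E$ with $\kappa(y)=\kappa(x)$.

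\textbf{Hardness.} All hardness proofs reduce from UNSAT of a CNF $\varphi$ over variables $v_1,\dots,v_n$, using one or two fresh features to control classes and keep $\kappa$ surjective.

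For $\sufc$ and $\ssuf$, take a fresh feature $z$, the classifier $\kappa = z \wedge \varphi$, an instance $x$ with $z=0$ (so $\kappa(x)=0$), and $E=\{(z,1)\}$. Then $E$ is disjoint from $x$, and $E$ is a GSR (and an SSR) iff every $y$ with $z=1$ has $\kappa(y)=1$, i.e.\ iff $\neg\varphi$ is unsatisfiable. So this construction directly reduces tautology checking, which is co-NP-complete; alternatively use $\kappa = z\wedge\neg\varphi$.

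For $\necp$, use $\kappa = z \vee \varphi$ with $x$ having $z=1$ and $E=\{(z,1)\}$. Then $E$ is a GNR iff every $y$ with $z=0$ has class $0$, i.e.\ iff $\varphi$ is unsatisfiable.

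For $\bLa$, use $\kappa = (z_1\wedge z_2) \vee (w \wedge \varphi)$ with $x$ all-zero, so $\kappa(x)=0$, and $E=\{(z_1,1),(z_2,1)\}$. No single flip changes the class (flipping only $w$ leaves $\varphi$ evaluated at the all-zero assignment, which we make false by adding a clause). Hence $E$ has minimum cardinality iff no assignment with $w=1$ plus a single... The intended target is ``iff no $y$ with $\kappa(y)=1$ differs from $x$ in fewer than two features''. Making this exact requires adjusting the construction, for instance by padding each variable with a gadget so that any satisfying witness forces at most one flip. I expect this gadget, and the analogous one for $\bL_d$ with a Hamming-type $\dd$, to be the fiddliest part.

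For $\wL$ with domains of size $3$, give every feature the domain $\{0,1,2\}$ and let $x$ be all $2$s with $E=x$. Then $x\circleddash E$ is exactly the set of $\{0,1\}$-instances. Take $\kappa(y) = \bigwedge_i (f_i{=}2) \vee \varphi(y)$, reading value $1$ as true. Then $\kappa(x)=1$, and $E$ is an SNR iff $\varphi$ is unsatisfiable.
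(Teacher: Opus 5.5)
There are two genuine gaps, both in the parts you flagged yourself.

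\textbf{The $O(n)$ claim for $\bLwf$.} Your proposed repair does not work. Restricting to the $|E|$ maximal proper feature sets $\feat(E)\setminus\{f\}$ is legitimate, because ``some $E'$ with $\feat(E')\subseteq S$ satisfies $\kappa(x_{\downarrow E'})\neq\kappa(x)$'' is monotone in $S$. But no values are forced on those features. A competing $E'$ may give them arbitrary values (in the boolean case, flip any subset of them), so each of the $|E|$ tests is again a universal statement over exponentially many instances.

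In fact no repair exists unless P $=$ NP. Take the construction the paper uses for $\bLa$: $\kappa=(\Phi\wedge f_{n+1})\vee(f_1\wedge\cdots\wedge f_{n+1})$ with $\Phi$ a CNF, $x$ all-zero, and $E$ the set of all literals $(f_i,1)$. Any $E'$ with $\feat(E')\subsetneq\F$ leaves some feature at $0$. Hence $E\in\bLwf(\bQ)$ iff $\Phi$ has no model other than $1^n$. After checking $1^n$ directly, this is equivalent to unsatisfiability of $\Phi$, so {\sc DecideExp}$(\bLwf)$ is co-NP-hard over boolean, and therefore also over arbitrary, domains.

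The paper's own proof reaches $O(n)$ only by asserting that $E\in\bLwf(\bQ)$ iff $\kappa(x_{\downarrow E})\neq\kappa(x)$, i.e.\ by silently dropping the minimality clause of the definition. Its hardness proof even asserts $E\in\bLwf(\bQ)$ for the construction above, which is false whenever $\Phi$ has a model other than $1^n$. So your suspicion was right. For this item (boolean or arbitrary domains) what can be proved is co-NP-completeness, and the linear bound holds only for the first clause of the definition of $\bLwf$.

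\textbf{Hardness for $\bLa$ and $\bL_d$.} Your sketch is unfinished, and the route cannot be completed. With $|E|=2$, $E$ fails to have minimum cardinality iff some single literal $l$ satisfies $\kappa(x_{\downarrow\{l\}})\neq\kappa(x)$. That is decided with $O(n)$ evaluations of $\kappa$, and the same holds for any constant bound on $|E|$. So no padding gadget can give co-NP-hardness unless P $=$ NP.

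The missing idea, which is the paper's, is to make $E$ as large as possible: let $E$ flip every feature. Then every other credulous sufficient reason is automatically strictly smaller in cardinality (and in Hamming distance). Thus $E$ is not minimal iff some instance $y\neq x_{\downarrow E}$ has $\kappa(y)\neq\kappa(x)$, and the classifier above turns this into satisfiability of $\Phi$. For $\bL_d$ this needs $\dd$ to be monotone in the Hamming sense, and co-NP membership needs $\dd$ to be polynomial-time computable, as you noted.

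The rest of your proposal is correct and is essentially the paper's argument, with unsatisfiability of a CNF in place of validity of a DNF. This covers $\csuf$ and boolean $\wL$ in linear time, the co-NP certificates, the reductions for $\necp$, $\sufc$ and $\ssuf$ (in the $z\wedge\neg\varphi$ form), and the reduction for $\wL$ over $\{0,1,2\}$.
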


We study the complexity of generating an explanation.

\begin{theorem} \label{thm:find}
	Consider query $\bQ = \langle \mbb{T}, \kappa, x\rangle$ where
	$\mbb{T}$ is composed of $n$ boolean features and $\kappa$ is a boolean formula.
	\begin{enumerate}
		\item The problem {\sc FindExp}$(\bL)$ is $O(n)$ when $\bL$ is $\ssuf$. 
		\item The problem {\sc FindExp}$(\bL)$ is NP-hard when $\bL$ is 
		$\necp$, $\csuf$, $\wL$, $\sufc$, $\bLa$, $\bL_d$, or $\bLwf$. 
		In each of these cases, excluding $\bLa$ and $\bL_d$,
		it can be solved by at most $n$ calls to a SAT oracle. 
	\end{enumerate}
  	For queries over non-boolean domains {\sc FindExp}$(\bL)$ is NP-hard for $\ssuf$.
\end{theorem}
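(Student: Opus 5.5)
We treat the six explainers in turn, showing NP-hardness by reduction from SAT (or its complement) and then giving the oracle-based upper bounds. Throughout, $\kappa$ is a boolean formula over the $n$ boolean features, and the classes are $\{0,1\}$; surjectivity of $\kappa$ is a side condition that every reduction has to respect. The $O(n)$ result for $\ssuf$ rests on a structural observation in the boolean case. An SSR $E$ satisfies $E\cap x=\emptyset$, so in the boolean case $E\subseteq \bar x$, the complement instance. Any $E\subseteq\bar x$ whose completions all avoid $\kappa(x)$ can be enlarged to the full set $\bar x$ and remain an SSR, since enlarging $E$ only shrinks the set of completions $y\supseteq E$. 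Hence an SSR exists iff $\bar x$ itself is one, i.e. iff $\kappa(\bar x)\neq\kappa(x)$. So {\sc FindExp}$(\ssuf)$ evaluates $\kappa$ once on $\bar x$, returning $\bar x$ or ``none''. Formula evaluation is linear in the formula size, which we count as $O(n)$ in the paper's convention.

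For the NP-hardness claims we reduce SAT as follows. Given a formula $\varphi$ over $n$ variables, add a fresh feature $z$ and let $\kappa=\varphi\wedge z$ (or variants of it), taking $x$ to be the all-zero instance so that $\kappa(x)=0$. For $\csuf$, $\bLwf$ and $\bL_c$, any explanation $E$ yields a witness $x_{\downarrow E}$ with $\kappa=1$, i.e. a satisfying assignment of $\varphi$. Conversely, if $\varphi$ is satisfiable an explanation exists, so finding one decides SAT. For $\sufc$ we cannot return a trivial full-instance explanation, because a full instance $y$ with $\kappa(y)\neq\kappa(x)$ is also a GSR. We therefore choose $x$ with $\kappa(x)=1$, built so that $\kappa(x)=1$ holds by construction. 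Any returned GSR then contains, or is completed to, an instance of class $0$, whose restriction gives a satisfying assignment of $\neg\varphi$-type. We use the symmetric construction so that finding a GSR exposes a model of the hard formula. For $\necp$ and $\wL$ we encode so that $\kappa(x)=1$ and a GNR/SNR exists iff a certain formula is unsatisfiable, or dually, so that existence becomes an NP question. For example, take $\kappa=z\vee\varphi'$ where flipping $z$ alone changes the class iff $\varphi'$ is unsatisfiable; this makes the problem coNP-hard, hence NP-hard under Turing reductions. For $\bL_d$, it suffices that $\dd$ is arbitrary (e.g. Hamming), giving the same reduction as $\bL_c$.

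For the upper bound of at most $n$ SAT calls, we use self-reducibility. For $\csuf$ we find $y$ with $\kappa(y)\neq\kappa(x)$ by fixing variables one by one, each time checking satisfiability with a single SAT call. For $\bLwf$ we start from the feature set $\F$ and greedily drop features: dropping feature $f$ from the current set $S$ is allowed iff some change restricted to $S\setminus\{f\}$ flips the class, which is one SAT call per feature. This yields a subset-minimal set, and the literals are then recovered from the satisfying assignment. For $\necp$ in the boolean case, a GNR exists iff $\core(\kappa(x))\neq\emptyset$, since any GNR consists of core literals. Literal $l$ is core iff $\kappa=\kappa(x)\wedge\neg l$ is unsatisfiable, so $n$ calls suffice and any core literal is returned as a singleton. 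For $\wL$ in the boolean case, $x\circleddash E$ is the single instance $x$ with $E$ flipped; we find $E$ with SAT calls by fixing bits greedily. For $\sufc$, note that $\bar x$ or any full instance of the other class is a GSR, so one satisfying assignment suffices, obtained with $n$ calls.

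The last claim concerns $\ssuf$ over non-boolean domains. There the enlargement trick fails: $\bar x$ is no longer a single instance, since each feature has at least two alternative values. We reduce from SAT using domain $\{0,1,2\}$ and $x$ all-$2$. An SSR must pick values in $\{0,1\}$ for some features, and we define $\kappa$ so that fully specifying a satisfying assignment of $\varphi$ (with $\kappa(\text{that instance})\neq\kappa(x)$) is essentially the only way to force the other class; for instance, $\kappa(y)=1$ iff $y\in\{0,1\}^n$ and $\varphi(y)$. The main obstacle I expect is designing these reductions so that the surjectivity of $\kappa$ and the exact GNR/SNR definitions hold simultaneously, particularly making $\necp$ genuinely NP-hard rather than merely coNP-hard; there I would first try padding with an extra feature that guarantees both classes occur.
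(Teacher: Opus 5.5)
Most of your plan follows the paper. The complement-instance argument for $\ssuf$ is the same. So are the reduction $\kappa=\varphi\wedge z$ with $x$ all-zero for $\csuf$ and $\bLwf$, the reduction $\kappa=z\vee\varphi'$ with $z=1$ in $x$ together with the $n$ core-literal SAT tests for $\necp$, and the domain-$\{0,1,2\}$ reduction with $x=(2,\ldots,2)$ for non-boolean $\ssuf$.

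The genuine gap is NP-hardness for $\wL$. You bundle it with $\necp$, on the idea that an SNR exists iff some formula is unsatisfiable, and that mechanism cannot work. Over boolean features, $x\circleddash E$ is the single instance obtained from $x$ by flipping $E$. Hence $\wL(\bQ)=\{x\setminus y \mid y\in\mbb{F}(\mbb{T}),\ \kappa(y)\neq\kappa(x)\}$, and by surjectivity every legal boolean query has an SNR. In your example, $\{(z,1)\}$ is an SNR iff $\varphi'$ is false on the remaining coordinates of $x$, which is one evaluation, not an unsatisfiability test. Finding some SNR just means finding a falsifying assignment of $\varphi'$, which is trivial for a CNF. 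Hardness for $\wL$ must instead come from the search for the witness $y$, and the identity above gives it immediately. Your own $\csuf$ reduction ($\kappa=\varphi\wedge z$, $x$ all-zero) works verbatim, since flipping the literals of any returned SNR yields a model of $\varphi$. This is exactly the paper's argument.

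The same sentence also misdescribes the $\necp$ case. $\{(z,1)\}$ is a GNR because \emph{every} instance with $z=0$ has class $0$. The correct invariant is $\core(1)=\{(z,1)\}$ when $\varphi'$ is unsatisfiable and $\core(1)=\emptyset$ otherwise; all instances with $z=1$ have class $1$, so no other literal can be core.

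Smaller points:
\begin{itemize}
\item For $\sufc$ the switch to $\kappa(x)=1$ is unnecessary. With $\kappa=\varphi\wedge z$ and $x$ all-zero, every completion of any returned GSR is a model of $\varphi\wedge z$, as in the paper. Also, $\bar x$ is a GSR only when $\kappa(\bar x)\neq\kappa(x)$.
\item Your ``merely coNP-hard'' worry for $\necp$ is moot. A polynomial algorithm for the search problem decides UNSAT, hence SAT, which is the Turing sense of hardness the paper uses.
\item Surjectivity is likewise harmless. The $\varphi\wedge z$ queries are illegal only when $\varphi$ is unsatisfiable, and you can verify the returned candidate. $z\vee\varphi'$ is constant only when the CNF $\varphi'$ is a tautology, which is easy to detect.
\item Two of your deviations are improvements. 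For $\bL_c$ and $\bL_\dd$, the direct $\varphi\wedge z$ reduction is simpler than the paper's appeal to its {\sc DecideExp} construction: any element of $\bL_c(\bQ)$ or $\bL_\dd(\bQ)$ satisfies $\kappa(x_{\downarrow E})\neq\kappa(x)$.
\item For $\bLwf$, your greedy feature-dropping actually enforces feature-minimality. The paper's one-call argument, which returns $y\setminus x$ for an arbitrary $y$ of another class, overlooks it.
\end{itemize}
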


Theorems \ref{thm:dec} and \ref{thm:find} provide upper bounds on complexity for white-box classifiers
expressible as boolean formulas. However, it should be pointed out that
for specific types of classifiers, such as decision trees or monotonic functions, for example,
certain explainability queries are 
polynomial-time~\cite{BassanAK24,COOPER2023,HuangIICA022,IzzaIM22,ICML21}.

\section{Conclusion}\label{Conclusion}

This paper introduces the first axiomatic framework for counterfactual explanations. We define a set of axioms and establish representation theorems that characterize five types of counterfactuals, three of which are novel: SNR, GSR, and SSR. These types fall into two fundamental forms—necessary and sufficient reasons. Necessary reasons are concise and informative but may not always exist, whereas sufficient reasons are guaranteed to exist but can be numerous, especially when feature domains are large. Both forms are generally intractable to compute, except in restricted settings such as Boolean feature spaces.

This work opens several avenues for future research. One direction is to characterize, for each type of counterfactual, the explainers that generate explanations of that type while satisfying established properties such as actionability, minimality, fairness, and diversity. Another direction is to study the robustness of the newly introduced types of counterfactuals.

\section*{Acknowledgments}

Our work was supported by the AI Interdisciplinary Institute ANITI, funded by the France 2030 program under the Grant agreement n°ANR-23-IACL-0002, and by
the French National Research Agency project ForML ANR-23-CE25-0009.

\bibliography{refsExplanations}

\appendix
\setcounter{prop}{0}
\setcounter{theorem}{0}
\setcounter{property}{0}



\section{Example}

We provide a new example that is useful for some of the proofs provided in this document.

\begin{example}\label{ex1}
Consider the theory described below and the outcomes of the classifier $\kappa_1$.

\begin{multicols}{2}
	\begin{tabular}{c|cc|c}\hline
	       $\mbb{F}(\mbb{T}_1)$   &$f_1$& $f_2$ & $\kappa_1(x_i)$   \\\hline
                        $x_1$    & 0   & 0     & $c_1$ \\
                        $x_2$    & 0   & 1     & $c_2$ \\          
				        $x_3$    & 1   & 0     & $c_2$ \\
				          $x_4$    & 1   & 1     & $c_3$ \\\hline
	\end{tabular}
\qquad 
{\small
\begin{itemize}
\item [] \hspace{-0.4cm}	 $\core(c_1) {=} \{(f_1,0), (f_2,0)\}$ 
\item [] \hspace{-0.4cm}	 $\core(c_2) {=} \emptyset$ 
\item [] \hspace{-0.4cm}	 $\core(c_3) {=}  \{(f_1,1), (f_2,1)\}$ 
\end{itemize}}	
\end{multicols} 

\end{example}

\section{Proofs of Properties}

Below are some properties that are used in some proofs of the main theorems.

\begin{property}
Let $\mbb{T} \in \mathtt{Th}$. $\forall x \in \mbb{F}(\mbb{T})$, 
$\forall E \in \mbb{E}(\mbb{T})$,  
$x_{\downarrow E} \in  \mbb{F}(\mbb{T})$.
\end{property}

\begin{proof}
Follows from the definition of $x_{\downarrow E}$ and the consistency of partial assignments, i.e. for any partial assignment $E$, $\nexists l, l' \in E$ such that $l = (f,v)$, $l' = (f',v')$ and $f = f'$ while $v \neq v'$. 
\end{proof}


\begin{property}\label{propy:core}
Let $\mbb{T} = \langle \F, \d, \C\rangle$ be a theory, $c \in \C$. 
$$\core(c) = \bigcap\limits_{(x \in \mbb{F}(\mbb{T})) \ \wedge \  (\kappa(x) = c)} x.$$  
\end{property}
\begin{proof}
    Straightforward from Definition~1.
\end{proof}


\begin{property}\label{propy:sufc}
Let $\bQ = \langle \mbb{T}, \kappa,x\rangle$ be a query.
$\forall y \in \mbb{F}(\mbb{T})$ such that $\kappa(y) \neq \kappa(x)$, $y \in \sufc(\bQ)$.
\end{property}
\begin{proof}
Follows from Definition~6.
\end{proof}


\begin{property}\label{propy:csuf}
Let $\bQ = \langle \mbb{T}, \kappa,x\rangle$ be a query. 
\begin{itemize}
\item $\csuf(\bQ) = \{E \in \mbb{E}(\mbb{T}) \mid x\cap E = \emptyset \mbox{ and } \kappa(x_{\downarrow E}) \neq \kappa(x)\}$, 
\item $\csuf(\bQ) \neq \emptyset$. 
\end{itemize}
\end{property}

\begin{proof}
Let $\bQ = \langle \mbb{T}, \kappa,x\rangle$ be a query and $E \in \mbb{E}(\mbb{T})$. 
\begin{itemize}
\item Assume $E \in \csuf(\bQ)$, so $E = y\setminus x$ such that $\kappa(y) \neq \kappa(x)$. So, $E \cap x = \emptyset$ and $x_{\downarrow E} = y$. 
\item Assume now that $x \cap E = \emptyset$ and $\kappa(x_{\downarrow E}) \neq \kappa(x)$. Let $y = x_{\downarrow E}$. It follows that $E = y\setminus x$ and so $E \in \csuf(\bQ)$.

\item We show that $\csuf(\bQ) \neq \emptyset$. Since $\kappa$ is surjective and $|\C| > 1$, we have $\exists y \in \mbb{F}(\mbb{T})$ such that 
$\kappa(y) \neq \kappa(x)$, so $y\setminus x \in \csuf(\bQ)$.
\end{itemize}
\end{proof}

\section{Proofs of Propositions}

\begin{proposition}
The following implications hold. 
\begin{itemize}
\item Coreness $\Rightarrow$ Feasibility. 
\item Coreness and Non-Triviality $\Rightarrow$ Sceptical Validity.
\item Sceptical Validity $\Rightarrow$ Non-Triviality.
\item Weak Validity $\Rightarrow$ Non-Triviality.  
\item Strong Validity $\Rightarrow$ Weak Validity. 
\item Novelty and Non-Triviality $\Rightarrow$ Sceptical Validity.
\end{itemize}
\end{proposition}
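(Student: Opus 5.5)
The plan is to prove each of the six implications separately, directly from the definitions in Table~\ref{tab:axioms}. In each case I fix an arbitrary explainer $\bL$, an arbitrary query $\bQ = \langle \mbb{T}, \kappa, x\rangle$ and an arbitrary $E \in \bL(\bQ)$, and then check the conclusion axiom for $E$. Only Property~\ref{propy:instances}, Property~\ref{propy:core} and elementary set manipulations of the operators $x_{\downarrow E}$ and $x \circleddash E$ are needed.

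\textbf{Coreness $\Rightarrow$ Feasibility.} By Property~\ref{propy:core}, $\core(\kappa(x))$ is the intersection of all instances labelled $\kappa(x)$. Since $x$ is one of these instances, $\core(\kappa(x)) \subseteq x$, and hence $E \subseteq x$.

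\textbf{Coreness and Non-Triviality $\Rightarrow$ Sceptical Validity.} Take $E \neq \emptyset$ with $E \subseteq \core(\kappa(x))$, and let $y \in x \circleddash E$, so that $x \setminus y = E$. Pick any $l \in E$. Then $l \notin y$, and since $l$ is core to $\kappa(x)$, Definition~\ref{def:core} gives $\kappa(y) \neq \kappa(x)$.

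\textbf{Sceptical Validity $\Rightarrow$ Non-Triviality.} Argue by contradiction: suppose $\emptyset \in \bL(\bQ)$. For instances $x, y$ we have $x \setminus y = \emptyset$ iff $x \subseteq y$, and since both are complete instances this holds iff $y = x$. Hence $x \circleddash \emptyset = \{x\}$. Sceptical Validity would then require $\kappa(x) \neq \kappa(x)$, which is absurd.

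\textbf{Weak Validity $\Rightarrow$ Non-Triviality.} This is the same argument, using that $x_{\downarrow \emptyset} = x$ directly from the definition of $x_{\downarrow E}$.

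\textbf{Strong Validity $\Rightarrow$ Weak Validity.} By Property~\ref{propy:instances}, $x_{\downarrow E}$ is an instance, and by construction $E \subseteq x_{\downarrow E}$. Strong Validity therefore gives $\kappa(x_{\downarrow E}) \neq \kappa(x)$.

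\textbf{Novelty and Non-Triviality $\Rightarrow$ Sceptical Validity.} Here I will show that the universally quantified condition holds vacuously. Every $y \in x \circleddash E$ satisfies $E = x \setminus y \subseteq x$. But $E \neq \emptyset$ and $E \cap x = \emptyset$, so $E \not\subseteq x$. Therefore $x \circleddash E = \emptyset$, and Sceptical Validity holds trivially for $E$.

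None of these steps is a real obstacle. The only points needing care are the two set-theoretic identities $x \circleddash \emptyset = \{x\}$ and $x \circleddash E = \emptyset$ for non-empty $E$ disjoint from $x$. Both rest on the fact that $x \setminus y \subseteq x$ and that two instances over the same features are equal whenever one contains the other. I would state these identities explicitly before using them.
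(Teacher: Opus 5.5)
Your proposal is correct and takes essentially the same approach as the paper: each implication is verified directly from the definitions, using Property~\ref{propy:core} and Property~\ref{propy:instances}. You show the last implication by noting that $x \circleddash E$ is empty, where the paper argues by contradiction, but the reasoning is the same, and your explicit check that $x \circleddash \emptyset = \{x\}$ fills in a step the paper simply calls obvious.
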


\begin{proof}{}
Let $\bL$ be an explainer, $\bQ = \langle \mbb{T}, \kappa,x\rangle$ a query and $E \in \bL(\bQ)$. 
\begin{itemize} 
\item Let $\bL$ satisfy Coreness. 
So, $E \subseteq \core(\kappa(x))$. From Property~\ref{propy:core}, 
$E \subseteq \bigcap\limits_{(y \in \mbb{F}(\mbb{T})) \ \wedge \  (\kappa(y) = \kappa(x))} y,$ so $E \subseteq x$, which shows that $\bL$ satisfies Feasibility.

\item Assume that $\bL$ satisfies Coreness and  Non-Triviality.  
From Non-Triviality, $E \neq \emptyset$. 
From Coreness, $E \subseteq \core(\kappa(x))$. From Property~\ref{propy:core}, 
$E \subseteq \bigcap\limits_{(y \in \mbb{F}(\mbb{T})) \ \wedge \  (\kappa(y) = \kappa(x))} y.$
Assume some $z \in \mbb{F}(\mbb{T})$ such that $x \setminus z = E$,  
then $E \not\subseteq z$. 
So, $\bigcap\limits_{(y \in \mbb{F}(\mbb{T})) \ \wedge \  (\kappa(y) = \kappa(x))} y \not\subseteq z$, hence $\kappa(z)\neq\kappa(x)$, 
which shows Sceptical Validity.

\item Assume $\bL$ satisfies Sceptical Validity. Then, $\forall \bQ = \langle \mbb{T}, \kappa, x\rangle$, $\forall E \in \bL(\bQ)$, $\forall y \in x\circleddash E$, 
$\kappa(y) \neq \kappa(x)$. Obviously, $\emptyset \notin \bL(\bQ)$ since 
otherwise we get $\kappa(x) \neq \kappa(x)$. This shows that $\bL$ satisfies Non-Triviality.

\item Assume $\bL$ satisfies Weak Validity. Then, $\forall \bQ = \langle \mbb{T}, \kappa, x\rangle$, $\forall E \in \bL(\bQ)$, $\kappa(x_{\downarrow E}) \neq \kappa(x)$. Obviously, $\emptyset \notin \bL(\bQ)$ since 
otherwise we get $\kappa(x) \neq \kappa(x)$. Hence $\bL$ satisfies Non-Triviality.

\item Suppose that $\bL$ satisfies Strong Validity.
Let $E \in \mbb{E}(\mbb{T})$ and $y = x_{\downarrow E}$. So, 
      $E \subseteq y$. From Strong Validity, $\kappa(y) \neq \kappa(x)$. So, $\bL$ satisfies Weak Validity. 

\item Assume $\bL$ satisfies Novelty and Non-Triviality. Assume also that $\bL$ violates 
Sceptical Validity. Then, $\exists E \in \bL(\bQ)$, $\exists y \in x\circleddash E$ such that $\kappa(y) = \kappa(x)$. So, $E \subseteq x$ (C). From Non-Triviality, $E \neq \emptyset$ and from Novelty, $E \cap x = \emptyset$, which contradicts (C). Thus, 
$\bL$ satisfies Sceptical Validity. 

\end{itemize} 
\end{proof}

\begin{proposition}\label{nec-links1}
It holds that $\necp \sqsubseteq \wL$.
\end{proposition}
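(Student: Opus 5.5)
We must show that every global necessary reason is a sceptical necessary reason. Fix a query $\bQ = \langle \mbb{T}, \kappa, x\rangle$ and some $E \in \necp(\bQ)$. Unfolding the definition, $E \neq \emptyset$ and every $y \in \mbb{F}(\mbb{T})$ with $E \not\subseteq y$ satisfies $\kappa(y) \neq \kappa(x)$. By the definition of $\wL$, two things must be established: $E \subseteq x$, and $\kappa(y) \neq \kappa(x)$ for every $y \in x\circleddash E$.

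\textbf{Feasibility.} I will argue by contraposition on the defining condition applied to $y = x$ itself. If $E \not\subseteq x$, then $\kappa(x) \neq \kappa(x)$, which is absurd. Hence $E \subseteq x$.

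An equivalent route is to use the first representation theorem of Section~\ref{necessary-reasons}, which states that $\necp$ satisfies Coreness and Non-Triviality. Proposition~\ref{links} then gives Feasibility, since Coreness implies Feasibility. It also gives Sceptical Validity, since Coreness and Non-Triviality together imply it. With both axioms in hand, the characterization theorem for $\wL$ (Feasibility and Sceptical Validity iff $\bL \sqsubseteq \wL$) yields $\necp \sqsubseteq \wL$ directly. I would present the direct argument and mention this shortcut.

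\textbf{Sceptical condition.} Take any $y \in x\circleddash E$, so that $x \setminus y = E$. Because $E \neq \emptyset$, pick a literal $l \in E$. Then $l \in x \setminus y$, so $l \notin y$, and therefore $E \not\subseteq y$. The GNR condition now gives $\kappa(y) \neq \kappa(x)$, which is what is required.

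The only delicate point is this use of non-emptiness. If $E$ were empty, then $x\circleddash E = \{x\}$ and the sceptical condition would fail. Non-Triviality, which is built into the definition of a GNR, is exactly what rules this out. Otherwise the argument is a direct unfolding of the definitions.
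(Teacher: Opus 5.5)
Your proof is correct and follows essentially the same argument as the paper: any $y$ with $x\setminus y = E \neq \emptyset$ cannot contain $E$, so it receives a different class. The paper phrases this through $E \subseteq \core(\kappa(x))$ (via the Coreness characterization of $\necp$) and the intersection property of core literals, whereas you apply the GNR condition directly and also make the $E \subseteq x$ step explicit, which the paper leaves implicit.
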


\begin{proof}{}
Let $\bQ = \langle \mbb{T}, \kappa,x\rangle$ be a query s.t. $\kappa(x) = c$. 
We show that $\necp(\bQ) \subseteq \wL(\bQ)$. Let $E \in \necp(\bQ)$. 
From Theorem~\ref{th1-cara}, $E \subseteq \core(c)$. From Property~\ref{propy:core}, $E \subseteq \bigcap\limits_{\kappa(y) = \kappa(x)}y$. 
Furthermore, $E$ is non-empty.
For any $y \in x \circleddash E$, $x\setminus y = E$, thus 
$E \cap y = \emptyset$ and so $\kappa(y) \neq \kappa(x)$, which means that $E \in \wL(\bQ)$. 
\end{proof}


\begin{proposition}
	Let $\bQ = \langle \mbb{T}, \kappa,x\rangle$ be a query. 
	\begin{itemize}
		\item $\ssuf \sqsubseteq \sufc$. 
		
		\item $\ssuf \sqsubseteq \csuf$.
		
		\item For any $E \in \csuf(\bQ)$, $\exists E' \in \sufc(\bQ)$ s.t. $E = E' \setminus x$.
		
		\item For any $E \in \sufc(\bQ)$, $E\setminus x \in \csuf(\bQ)$.	
	\end{itemize}	
\end{proposition}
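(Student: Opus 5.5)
The plan is to prove the four items directly from the definitions. Throughout, Property~\ref{propy:csuf} serves as the working characterisation of $\csuf$: $E \in \csuf(\bQ)$ iff $E \cap x = \emptyset$ and $\kappa(x_{\downarrow E}) \neq \kappa(x)$. Fix a query $\bQ = \langle \mbb{T}, \kappa, x\rangle$.

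\textbf{Item 1.} This is immediate. The definition of an SSR is the definition of a GSR with the extra conjunct $E \cap x = \emptyset$, so every SSR is a GSR.

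\textbf{Item 2.} Take $E \in \ssuf(\bQ)$. Then $E \cap x = \emptyset$. By Property~\ref{propy:instances}, $y = x_{\downarrow E}$ is an instance, and it contains $E$. Strong validity of $E$ then gives $\kappa(y) \neq \kappa(x)$. Property~\ref{propy:csuf} now yields $E \in \csuf(\bQ)$.

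\textbf{Item 3.} Take $E \in \csuf(\bQ)$, so $E = y \setminus x$ for some instance $y$ with $\kappa(y) \neq \kappa(x)$. Set $E' = y$. By Property~\ref{propy:sufc}, $y \in \sufc(\bQ)$: the only instance containing $y$ is $y$ itself, and it lies in a different class. Clearly $E' \setminus x = y \setminus x = E$.

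\textbf{Item 4.} Take $E \in \sufc(\bQ)$ and set $D = E \setminus x$. Then $D \cap x = \emptyset$. By Property~\ref{propy:csuf}, it remains to show $\kappa(x_{\downarrow D}) \neq \kappa(x)$. The key observation is that $x_{\downarrow D} \supseteq E$:
\begin{itemize}
\item literals of $E$ lying in $D$ are inserted directly;
\item any literal $(f,v) \in E \cap x$ has no conflicting literal in $D$, because $E$ is a partial assignment and so $f$ appears in $E$ only once; hence $(f,v)$ is retained from $x$.
\end{itemize}
Since $x_{\downarrow D}$ is an instance containing $E$, the GSR condition gives $\kappa(x_{\downarrow D}) \neq \kappa(x)$.

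The only step needing any care is this last consistency argument in item 4, namely that the part of $E$ shared with $x$ survives the replacement. The remaining items are routine unfoldings of the definitions.
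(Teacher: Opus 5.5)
Your proof is correct and follows the paper's argument item by item, with the same witnesses: $x_{\downarrow E}$ in item 2, the instance $y$ itself (via Property~\ref{propy:sufc}) in item 3, and $x_{\downarrow (E\setminus x)}$ in item 4. The only difference is that in item 4 you explicitly check $E \subseteq x_{\downarrow (E\setminus x)}$, a step the paper simply asserts.
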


\begin{proof}
	Let $\bQ = \langle \mbb{T}, \kappa,x\rangle$ be a query. 
	
	\begin{itemize}
		\item The inclusion $\ssuf \sqsubseteq \sufc$ is straightforward. 
		
		\item Let $E \in \ssuf(\bQ)$. So, $E \cap x = \emptyset$ and $\forall y \in \mbb{F}(\mbb{T})$ such that $E \subseteq y$, $\kappa(y) \neq \kappa(x)$. 
		Since $E \in  \mbb{E}(\mbb{T})$, from Property~\ref{propy:instances}, 
		$x_{\downarrow E} \in \mbb{F}(\mbb{T})$. Let $z = x_{\downarrow E}$. From $E \cap x = \emptyset$, 
		$E = z\setminus x$. Since $E  \subseteq z$, we have $\kappa(z) \neq \kappa(x)$. So $E \in \csuf(\bQ)$.

		\item Let $E \in \csuf(\bQ)$. So, $\kappa(x_{\downarrow E}) \neq \kappa(x)$. 
		Let $y = x_{\downarrow E}$. 
		From Property~\ref{propy:instances}, $y \in \mbb{F}(\mbb{T})$.  
		From Property~\ref{propy:sufc}, $y \in  \sufc(\bQ)$. 
		Since $E \cap x = \emptyset$, $E = y\setminus x$.
		
		\item Let $E \in \sufc(\bQ)$. Let $E' = E\setminus x$ and $y = x_{\downarrow E'}$. 
		Since $E \subseteq y$, $\kappa(y) \neq \kappa(x)$, so $E' \in \csuf(\bQ)$. 
	\end{itemize}
\end{proof}


\begin{proposition}\label{prop:core}
Let $\bQ = \langle \mbb{T}, \kappa,x\rangle$ be a query. 

\begin{itemize}
\item For any $E \in \necp(\bQ)$, 
      \begin{itemize}
         \item $E \subseteq x$, 
         \item $E \subseteq \core(\kappa(x))$.  
      \end{itemize}
\item For any $E \subseteq \core(\kappa(x))$ s.t. $E \neq \emptyset$, $E \in \necp(\bQ)$.
\end{itemize}
\end{proposition}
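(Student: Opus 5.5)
Both parts follow by unfolding Definition~\ref{nec-pro} and using Property~\ref{propy:core}, which expresses $\core(c)$ as the intersection of all instances labelled $c$. I will prove the second bullet of the first item first, since the first bullet follows from it.

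\textbf{Coreness of a GNR.} Fix $E \in \necp(\bQ)$ and let $c = \kappa(x)$. Take any $y \in \mbb{F}(\mbb{T})$ with $\kappa(y) = c$. The contrapositive of the defining condition of a GNR gives $E \subseteq y$. Since $y$ was arbitrary among the instances of class $c$, we get
\[E \subseteq \bigcap_{\kappa(y)=c} y = \core(c)\]
by Property~\ref{propy:core}. For $E \subseteq x$, note that $x$ itself is an instance of class $c$. Hence $\core(c) \subseteq x$, and $E \subseteq x$ follows. Equivalently, one can apply the contrapositive directly with $y = x$.

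\textbf{Converse.} Let $\emptyset \neq E \subseteq \core(\kappa(x))$. First check that $E \in \mbb{E}(\mbb{T})$. The set $\core(c)$ is contained in $x$, so it is a subset of an instance, and hence any subset of it is a partial assignment: each feature occurs at most once. Non-emptiness holds by assumption. For the main condition, take $y \in \mbb{F}(\mbb{T})$ with $E \not\subseteq y$ and suppose, for contradiction, that $\kappa(y) = \kappa(x)$. By Definition~\ref{def:core}, every core literal of $\kappa(x)$ belongs to $y$, so $E \subseteq \core(\kappa(x)) \subseteq y$. This contradicts $E \not\subseteq y$, so $\kappa(y) \neq \kappa(x)$ and $E \in \necp(\bQ)$.

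\textbf{Main obstacle.} There is no real obstacle here. The only point needing care is confirming that a subset of $\core(\kappa(x))$ is a legitimate partial assignment. This relies on surjectivity of $\kappa$, which makes the class $\kappa(x)$ nonempty (it contains $x$), so the intersection in Property~\ref{propy:core} is over a nonempty family and is contained in $x$.
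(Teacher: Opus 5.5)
Your proof is correct and follows the paper's argument essentially step for step. For the first item you use the contrapositive of the GNR condition, applied to $y = x$ and to an arbitrary $y$ of class $\kappa(x)$, together with Property~\ref{propy:core}; for the converse you argue by contradiction, as the paper does. Your extra check that a subset of $\core(\kappa(x))$ is a partial assignment is a small addition the paper omits, and it needs only that $x$ itself has class $\kappa(x)$, not surjectivity of $\kappa$.
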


\begin{proof}{}
Let $\bQ = \langle \mbb{T}, \kappa,x\rangle$ be a question. 
\begin{itemize}

\item Assume that $E \in \necp(\bQ)$. 
\begin{itemize}
\item From Definition~4, 
$E \nsubseteq x$ would imply the contradiction
$\kappa(x) \neq \kappa(x)$.

\item By Definition~4,  $\forall y \in \mbb{F}(\mbb{T})$ s.t. 
$E \nsubseteq y$, $\kappa(y) \neq \kappa(x)$. Hence, $\forall l \in E$, $l \in \core(\kappa(x))$ and 
so  $E \subseteq \core(\kappa(x))$. 
\end{itemize}

\item Let now $E \subseteq \core(\kappa(x))$. 
Assume now that $E \neq \emptyset$.
From Property~\ref{propy:core}, 
$\forall y \in \mbb{F}(\mbb{T})$ such that $\kappa(y) = \kappa(x)$, $E \subseteq y$ (1).
Assume now that $E \notin \necp(\bQ)$. So, 
$\exists y \in \mbb{F}(\mbb{T})$ s.t. $E \not\subseteq y$ and $\kappa(y) = \kappa(x)$, 
which contradicts (1). Hence $E \in \necp(\bQ)$.  
\end{itemize}
\end{proof}

\begin{proposition}\label{prop:local-nec}
Let $\bQ = \langle \mbb{T}, \kappa,x\rangle$ be a query.  
For any $E \subseteq \core(\kappa(x))$ such that $E \neq \emptyset$,   $E \in \wL(\bQ)$.
\end{proposition}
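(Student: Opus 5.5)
The quickest route is to chain two results already established. By Proposition~\ref{prop:core} (second item), any non-empty $E \subseteq \core(\kappa(x))$ belongs to $\necp(\bQ)$. By Proposition~\ref{nec-links1}, $\necp \sqsubseteq \wL$, so $E \in \wL(\bQ)$. That settles the statement.

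To keep the argument self-contained, and to avoid any worry about circularity through Theorem~\ref{th1-cara}, I would also verify Definition~\ref{local-nec-pro} directly.

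First, feasibility. Since $\kappa$ is applied to $x$, the class $\kappa(x)$ is witnessed by $x$. By Property~\ref{propy:core}, $\core(\kappa(x))$ is the intersection of all instances of class $\kappa(x)$, and $x$ is one of these. Hence $E \subseteq \core(\kappa(x)) \subseteq x$.

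Second, sceptical validity. Take any $y \in x \circleddash E$, so that $x \setminus y = E$. Then every literal of $E$ lies in $x$ but not in $y$, which gives $E \cap y = \emptyset$. Because $E \neq \emptyset$, we get $E \not\subseteq y$. On the other hand, Property~\ref{propy:core} says every instance $z$ with $\kappa(z) = \kappa(x)$ contains $\core(\kappa(x)) \supseteq E$. So if $\kappa(y) = \kappa(x)$ held, we would have $E \subseteq y$, a contradiction. Therefore $\kappa(y) \neq \kappa(x)$ for all $y \in x \circleddash E$, and $E \in \wL(\bQ)$.

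There is no real obstacle here. The one point needing care is that non-emptiness of $E$ is what turns $E \cap y = \emptyset$ into $E \not\subseteq y$. If $E$ were empty, sceptical validity would fail, because $x \in x \circleddash \emptyset$.
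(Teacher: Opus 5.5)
Your proof is correct, and your self-contained verification is essentially the paper's own argument. The paper phrases it by contradiction: it uses Property~\ref{propy:core} once to get $E \subseteq x$, and again to force $E \subseteq y$ for any same-class $y \in x \circleddash E$, which is incompatible with $x \setminus y = E \neq \emptyset$. Your shortcut through $\necp$ is also valid and not circular, since Proposition~\ref{nec-links1} depends only on Theorem~\ref{th1-cara} and the first item of Proposition~\ref{prop:core}; it is, however, a round trip (core $\Rightarrow$ $\necp$ $\Rightarrow$ core) whose real work is the same direct argument.
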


\begin{proof}
Let $\bQ = \langle \mbb{T}, \kappa,x\rangle$ be a query, 
$E \subseteq \core(\kappa(x))$ s.t. $E \neq \emptyset$.   
      Assume that $E \notin \wL(\bQ)$. 
      From Property~\ref{propy:core}, $E \subseteq x$. 
      So, $\exists y \in \mbb{F}(\mbb{T})$ such that $x \setminus y = E$  
      and $\kappa(y) = \kappa(x)$. 
      From Property~\ref{propy:core}, $E \subseteq y$, which 
      is impossible if $x \setminus y = E$ and $E \neq \emptyset$.
\end{proof}


\section{Proofs of Theorems}


\begin{theorem}
The axioms of every set $(\mathtt{I}_i)$ are incompatible.
\begin{itemize} 
    \item $(\mathtt{I}_1)$ Success, Non-Triviality and Coreness. 
    \item $(\mathtt{I}_2)$ Success, Feasibility and Sceptical Validity.
    \item $(\mathtt{I}_3)$ Success, Novelty and Strong Validity.  
    \item $(\mathtt{I}_4)$ Success, Non-Triviality, Feasibility and Novelty. 
    \item $(\mathtt{I}_5)$ Success, Feasibility and Weak Validity.  
    \item $(\mathtt{I}_6)$ Success, Non-triviality, Equivalence and Feasibility.
    \item $(\mathtt{I}_7)$ Success, Non-triviality, Equivalence and Novelty.
\end{itemize}
\end{theorem}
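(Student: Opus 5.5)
Each incompatibility will be shown by exhibiting a single query on which the listed axioms force $\bL(\bQ)$ to be empty, contradicting Success. The counterexamples will come from the running Example~\ref{ex1bis} and the binary Example~\ref{ex1}. Proposition~\ref{links} and the Section~4 characterizations will be used to shorten the arguments.

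For $(\mathtt{I}_1)$, take $\bQ_2$ in Example~\ref{ex1bis} with class mountain. Since $\core(\text{mountain})=\emptyset$, Coreness forces every $E\in\bL(\bQ_2)$ to be $\emptyset$, which Non-Triviality forbids; so $\bL(\bQ_2)=\emptyset$. For $(\mathtt{I}_2)$, use $\bQ_3$ there. Feasibility and Sceptical Validity give $\bL(\bQ_3)\subseteq\wL(\bQ_3)=\emptyset$ by the characterization theorem for $\wL$. I will check this directly: the subsets of $x_3$ are $\{(t,\text{freezing})\}$, $\{(a,\text{reading})\}$ and $x_3$, and each has some $y\in x_3\circleddash E$ with $\kappa(y)=$ cinema (namely $x_8$, $x_5$, $x_9$), while $\emptyset$ fails trivially. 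For $(\mathtt{I}_5)$, Feasibility gives $E\subseteq x$, so $x_{\downarrow E}=x$, and Weak Validity then demands $\kappa(x)\neq\kappa(x)$. This holds for every query, in particular any one. $(\mathtt{I}_4)$ is equally direct: Feasibility and Novelty give $E\subseteq x$ and $E\cap x=\emptyset$, so $E=\emptyset$, which Non-Triviality excludes; hence $\bL(\bQ)=\emptyset$ for every query.

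For $(\mathtt{I}_3)$ I need a query where every $E$ disjoint from $x$ is contained in some instance of class $\kappa(x)$. I will take a theory with two binary features and a classifier with classes $c_1,c_2$ such that $x=(0,0)$ and $(1,1)$ are both $c_1$, and $(0,1),(1,0)$ are $c_2$. Any $E$ with $E\cap x=\emptyset$ is a subset of $\{(f_1,1),(f_2,1)\}$, hence contained in $(1,1)$, which has class $c_1$. Strong Validity then fails for every such $E$, so $\bL(\bQ)=\emptyset$.

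For $(\mathtt{I}_6)$, Equivalence forces $\bL(\bQ)=\bL(\bQ')$ for any $x'$ with the same class, and Feasibility puts each explanation inside both $x$ and $x'$. In Example~\ref{ex1}, $x_2,x_3$ share class $c_2$ and $x_2\cap x_3=\emptyset$. So any explanation lies in $x_2\cap x_3=\emptyset$, and Non-Triviality empties $\bL$. For $(\mathtt{I}_7)$ I use the same example, requiring each $E$ to be disjoint from both $x_2$ and $x_3$. Since every literal of the theory occurs in $x_2\cup x_3$, again $E=\emptyset$, which is excluded. The only step needing real care is finding the classifiers for $(\mathtt{I}_3)$, $(\mathtt{I}_6)$ and $(\mathtt{I}_7)$, and the XOR-style examples above handle them.
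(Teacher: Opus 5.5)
Your proof is correct and follows essentially the same strategy as the paper. $(\mathtt{I}_4)$ and $(\mathtt{I}_5)$ are immediate contradictions, and the other cases use small counterexample queries. The differences lie only in the examples: the paper uses a theory with $\d(a)=\{0,1\}$, $\d(b)=\{0,1,2\}$ for $(\mathtt{I}_2)$ and $(\mathtt{I}_3)$, and $f_1\equiv f_2$ for $(\mathtt{I}_7)$, and for $(\mathtt{I}_1)$ and $(\mathtt{I}_6)$ it only posits suitable queries, which you exhibit explicitly. Your choice of a non-boolean domain for $(\mathtt{I}_2)$ is genuinely required: with only boolean features, $x\setminus y$ for any $y$ of a different class is always an SNR.
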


\begin{proof} 
Let $\mathtt{I}_i$ be such that $i\in \{1, ...,7\}$. 
\begin{itemize}
\item[$\mathtt{I}_1$] 
    Let $\bL$ be an explainer and $\bQ = \langle \mbb{T}, \kappa, x\rangle$ a query such that $\core(\kappa(x)) = \emptyset$. 
    
    Assume that $\bL$ satisfies Coreness and Success. 
    From Success, $\bL(\bQ) \neq \emptyset$. Hence, 
    $\exists E \in \bL(\bQ)$. From Coreness, $E \subseteq \core(\kappa(x))$ and so $E = \emptyset$, 
    which means $L$ violates Non-Triviality.

\item[$\mathtt{I}_2$] Consider a theory made of two features $a, b$ such that 
    $\d(a) = \{0,1\}$ and $\d(b) = \{0,1,2\}$. Consider the binary classifier $\kappa$ such that $\kappa(x) = 1$ iff 
    $x = ((a,0), (b,1))$. Let us consider the instance 
    $z = ((a,0), (b,0))$ (with $\kappa(z) = 0$) 
    and the corresponding query $\bQ$. 
    Assume an explainer $\bL$ that satisfies Feasibility, Sceptical Validity and Success. 
    From Feasibility, $\bL(\bQ) \subseteq \mbb{P}(z)$ (where $\mbb{P}(z)$ is the powerset of $z$). 
    From Success, $\exists E \in \bL(\bQ)$. 
    It is easy to check that,
    whatever the value of $E$,
    $\exists y \in z\circleddash E$ such that  
    $\kappa(y) = \kappa(z)$, which contradicts satisfaction of Sceptical Validity.

\item[$\mathtt{I}_3$] Assume that $\bL$ satisfies Success, Novelty and Strong Validity. 
     Consider the same theory as above in the case $\mathtt{I}_2$ but this time
     consider the instance $w=((a,1),(b,1))$. By Success, $\bL(\bQ)$
     is non-empty. The only explanation $E$ satisfying Strong Validity
     is $\{(a,0),(b,1)\}$, but this explanation violates Novelty.

\item[$\mathtt{I}_4$] Assume $\bL$ satisfies Success and Non-Triviality, then $\exists E \in \bL(\bQ)$ such that $E \neq \emptyset$. If $\bL$ satisfies Novelty, then $x \cap E = \emptyset$, which means $\bL$ violates Feasibility. 

\item [$\mathtt{I}_5$] Assume $\bL$ satisfies Success and Feasibility. 
Then, $\exists E \in \bL(\bQ)$ such that $E \subseteq x$. 
So, $x_{\downarrow E} = x$ and $\kappa(x_{\downarrow E}) = \kappa(x)$, which violates Weak Validity.

\item [$\mathtt{I}_6$] Let $\bQ = \langle \mbb{T}, \kappa, x\rangle$ and 
$\bQ' = \langle \mbb{T}, \kappa, y\rangle$ be queries such that $\kappa(x) = \kappa(y)$ and $x \cap y = \emptyset$. 
Assume $\bL$ satisfies Success, Non-triviality, Equivalence and Feasibility. 
From Success, $\bL(\bQ) \neq \emptyset$ and $\bL(\bQ') \neq \emptyset$. 
From Non-triviality,  $\emptyset \notin \bL(\bQ)$ and $\emptyset \notin \bL(\bQ')$. 
So, $\exists E \in \bL(\bQ)$, $\exists E' \in \bL(\bQ')$ such that 
$E \neq \emptyset$ and $E' \neq \emptyset$. 
From Feasibility, $E \subseteq x$ and $E' \subseteq y$. Hence, 
$E \notin \bL(\bQ')$ and $E' \notin \bL(\bQ)$. 
So, $\bL(\bQ) \neq \bL(\bQ')$ while 
Equivalence implies $\bL(\bQ) = \bL(\bQ')$.

\item [$\mathtt{I}_7$] Assume $\bL$ satisfies 
Success, Non-triviality, Equivalence and Novelty.
Let $\bQ = \langle \mbb{T}, \kappa, x\rangle$ and 
$\bQ' = \langle \mbb{T}, \kappa, y\rangle$ be queries such that 
$\mbb{T}$ is made of two binary features $f_1$ and $f_2$, and $\kappa$ is a binary classifier such that $\kappa(x) = f_1 \equiv f_2$.     
Let $x = \langle(f_1, 0), (f_2, 0)\rangle$ and 
$y = \langle(f_1, 1), (f_2, 1)\rangle$. Then, 
$\kappa(x) = \kappa(y) = 1$ and $x \cap y = \emptyset$. 
From Equivalence, $\bL(\bQ) = \bL(\bQ')$. 
From Success, $\exists E \in \bL(\bQ)$, and so $E \in \bL(\bQ')$.
From Novelty, $E \cap x = \emptyset$ and $E \cap y = \emptyset$, which 
imply $E = \emptyset$ (since $x$ and $y$ cover all possible assignments to the two
variables of $\mbb{T}$). But then non-triviality cannot hold.

\end{itemize}    
\end{proof}


\begin{theorem} 
All combinations of axioms not disallowed by Proposition~1 and Theorem~1 are satisfied by some explainer.
\end{theorem}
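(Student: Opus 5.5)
The plan is to reduce the statement to a finite list of \emph{maximal} compatible sets of axioms and to exhibit one explainer for each. Satisfaction of axioms is downward closed: if an explainer satisfies a set $S$, it satisfies every subset of $S$. Also, any set that is closed under the implications of Proposition~\ref{links} and contains none of the sets $\mathtt{I}_1,\dots,\mathtt{I}_7$ is contained in a maximal such set. So it suffices to enumerate the maximal sets of the nine axioms that avoid every $\mathtt{I}_i$ and are closed under Proposition~\ref{links}. For each maximal set we then give a witness explainer.

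The enumeration splits on Success.

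\emph{Sets without Success.} None of the $\mathtt{I}_i$ applies, since each contains Success. The question is whether all eight remaining axioms can hold together. Take the trivial explainer $\bL(\bQ)=\emptyset$. Every axiom other than Success is universally quantified over $E\in\bL(\bQ)$ or compares $\bL(\bQ)$ with $\bL(\bQ')$, so all of them hold vacuously. This one explainer therefore covers every combination without Success.

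\emph{Sets with Success.} Here I branch on Non-Triviality and Feasibility.
\begin{itemize}
\item[(a)] Success and Non-Triviality together force Coreness to be absent ($\mathtt{I}_1$). They also force Feasibility and Novelty not to co-occur ($\mathtt{I}_4$).
\item[(b)] With Feasibility and Success, Sceptical Validity is excluded ($\mathtt{I}_2$). Weak Validity is excluded ($\mathtt{I}_5$), hence so is Strong Validity. If Non-Triviality is also present, Equivalence and Novelty are excluded as well ($\mathtt{I}_6$, $\mathtt{I}_4$).
\item[(c)] If Non-Triviality is absent, then Sceptical Validity and Weak Validity must be absent too, since each implies Non-Triviality. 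Novelty together with Non-Triviality would give Sceptical Validity, but without Non-Triviality, Novelty can be kept.
\end{itemize}

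The candidate maximal sets with Success and their witnesses are as follows.
\begin{itemize}
\item[(i)] $\{$Success, Non-Triviality, Novelty, Sceptical Validity, Weak Validity$\}$, witnessed by $\csuf$. Its properties are read off Table~\ref{tab:axioms2}.
\item[(ii)] $\{$Success, Non-Triviality, Equivalence, Sceptical Validity, Strong Validity, Weak Validity$\}$, witnessed by $\sufc$. Equivalence holds because the definition of a GSR depends on $x$ only through $\kappa(x)$.
\item[(iii)] $\{$Success, Non-Triviality, Feasibility$\}$, witnessed by $\bL(\bQ)=\{x\}$.
\item[(iv)] $\{$Success, Equivalence, Feasibility, Coreness$\}$, witnessed by $\bL(\bQ)=\{\emptyset\}$. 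This explainer also satisfies Novelty, so the maximal set here should include Novelty, giving $\{$Success, Equivalence, Feasibility, Coreness, Novelty$\}$. No $\mathtt{I}_i$ excludes this set, because Non-Triviality is absent.
\end{itemize}
I also need to check mixed cases, such as Success, Non-Triviality and Novelty without Weak Validity but with Equivalence. Here $\mathtt{I}_7$ excludes Equivalence, so such a set falls under (i). Success and Non-Triviality with Equivalence but not Novelty falls under (ii).

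The main obstacle is making this case analysis exhaustive. The clean way is a short argument that every closed, $\mathtt{I}$-free set containing Success lies inside one of (i)--(iv):
\begin{itemize}
\item If Non-Triviality is absent, Sceptical Validity, Weak Validity and Strong Validity are absent too, so the set lies inside (iv).
\item If Non-Triviality is present, then Coreness is absent by $\mathtt{I}_1$.
\begin{itemize}
\item With Feasibility, Novelty, Equivalence, Sceptical Validity, Weak Validity and Strong Validity are all excluded, so the set lies inside (iii).
\item Without Feasibility and with Novelty, Equivalence is excluded by $\mathtt{I}_7$ and Strong Validity by $\mathtt{I}_3$, so the set lies inside (i).
\item Without Feasibility and without Novelty, the set lies inside (ii).
\end{itemize}
\end{itemize}

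The remaining work is verifying each witness against every axiom in its set. For $\csuf$ and $\sufc$ this is routine given Property~\ref{propy:csuf}, Property~\ref{propy:sufc}, and the surjectivity of $\kappa$, which guarantees Success.
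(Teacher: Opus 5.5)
Your proposal is correct and follows essentially the same route as the paper. It uses the same case split on Success, Non-Triviality and Feasibility/Novelty, reaches the same maximal compatible sets, and uses the same witnesses: the empty explainer, the constant $\{\emptyset\}$ explainer, $\sufc$ and $\csuf$. The one difference is for $\{$Success, Non-Triviality, Feasibility$\}$, where you use $\bL(\bQ)=\{x\}$ instead of the paper's $\bL_2(\bQ)=\{x\setminus y \mid \kappa(y)\neq\kappa(x)\}$; this works equally well, since surjectivity with $|\C|\geq 2$ forces $\F\neq\emptyset$ and hence $x\neq\emptyset$, which you should state explicitly.
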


\begin{proof}
We consider the explainers $\csuf$ and $\sufc$, together with the following three explainers:
\begin{itemize}
\item $\bL_0$ defined by $\forall \bQ$, $\bL_0(\bQ) = \emptyset$.
\item $\bL_1$ defined by $\forall \bQ$, $\bL_1(\bQ) = \{\emptyset\}$.
\item $\bL_2$ defined by $\forall \bQ = \langle \mbb{T}, \kappa, x, \D \rangle$,
$\bL_2(\bQ) = \{ x \setminus y \mid \kappa(y) \neq \kappa(x)\}$.
\end{itemize}
Clearly, $\bL_0$ and $\bL_1$ are not useful explainers since, being constant functions, they effectively provide no information.
The explainer $\bL_2$ can be seen as a less informative version of $\csuf$: for each $y \in \mbb{F}$ such that
$\kappa(y) \neq \kappa(x)$, $\csuf$ returns $y \setminus x$, the new values of the features to be changed,
whereas $\bL_2$ returns the less informative $x \setminus y$, the old values of the features to be changed.

$\bL_0$ trivially satisfies all axioms apart from Success. Thus we only need to consider sets of axioms including Success.

From Proposition~1, if an explainer does not satisfy Non-Triviality, then it cannot satisfy any of Sceptical
Validity, Weak Validity or Strong Validity. It is easy to verify that $\bL_1$ satisfies the five other axioms.
Thus we only need to consider sets of axioms including Success and Non-Triviality.

If an explainer satisfies neither Feasibility nor Novelty, then it cannot satisfy Coreness (by Proposition~1).
From Table~2, we can see that the explainer $\sufc$ satisfies the six other axioms. Thus we only need to consider
sets of axioms that satisfy either Feasibility or Novelty (along with Success and Non-Triviality).

If an explainer satisfies Success, Non-Triviality and Feasibility, then by Theorem~1 and Proposition~1,
 it cannot satisfy any of the other six axioms. The axioms Success, Non-Triviality and Feasibility are
compatible since satisfied by $\bL_2$.

Finally, if an explainer satisfies Success, Non-Triviality and Novelty, then by Proposition~1 and
Theorem~1, it cannot satisfy any of Coreness, Feasibility, Equivalence or Strong Validity.
Since the other five axioms are satisfied by $\csuf$ (see Table~2), this completes the proof.
\end{proof}

\begin{theorem}\label{th1-cara}
An explainer $\bL$ satisfies Coreness and Non-Triviality\  \textbf{iff} $\ \bL \sqsubseteq \necp$.	
\end{theorem}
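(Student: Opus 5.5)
The plan is to reduce both directions to Proposition~\ref{prop:core}, which (from Definition~4 and Property~\ref{propy:core}) already identifies $\necp(\bQ)$ with the set of non-empty subsets of $\core(\kappa(x))$. Concretely, I would first record the set equality
\[\necp(\bQ) = \{E \in \mbb{E}(\mbb{T}) \mid E \neq \emptyset \mbox{ and } E \subseteq \core(\kappa(x))\}\]
for every query $\bQ = \langle \mbb{T}, \kappa, x\rangle$. The inclusion $\subseteq$ follows from the first item of Proposition~\ref{prop:core} together with the non-emptiness clause of Definition~4. The inclusion $\supseteq$ is the second item of Proposition~\ref{prop:core}. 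One detail should be checked: a subset of $\core(\kappa(x))$ is a legitimate partial assignment. This holds because $\core(\kappa(x)) \subseteq x$, and $x$ contains each feature at most once.

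For the ``only if'' direction, I would fix an explainer $\bL$ satisfying Coreness and Non-Triviality, a query $\bQ$ and $E \in \bL(\bQ)$. Non-Triviality gives $E \neq \emptyset$, and Coreness gives $E \subseteq \core(\kappa(x))$. The characterization above then yields $E \in \necp(\bQ)$. Since $\bQ$ and $E$ are arbitrary, this gives $\bL \sqsubseteq \necp$.

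For the ``if'' direction, I would assume $\bL \sqsubseteq \necp$ and take $E \in \bL(\bQ)$, so that $E \in \necp(\bQ)$. By Definition~4, $E \neq \emptyset$, which gives Non-Triviality. By the first item of Proposition~\ref{prop:core}, $E \subseteq \core(\kappa(x))$, which gives Coreness.

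There is no real obstacle; the only delicate point is making sure Proposition~\ref{prop:core} is used without circularity. Its proof in the appendix relies only on Definition~4 and Property~\ref{propy:core}. (Proposition~\ref{nec-links1} cites the present theorem, but we do not depend on it.) If one prefers a self-contained argument for the key step $E \subseteq \core(\kappa(x))$, it goes as follows: for $l \in E$ and any $y$ with $\kappa(y)=\kappa(x)$, the contrapositive of Definition~4 gives $E \subseteq y$, hence $l \in y$, so $l$ is core. Conversely, if $\emptyset \neq E \subseteq \core(\kappa(x))$ and $E \not\subseteq y$, then $y$ misses a core literal, so $\kappa(y) \neq \kappa(x)$.
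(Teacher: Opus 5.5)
Your proposal is correct and follows essentially the same route as the paper: the ``if'' direction is identical (Non-Triviality from the non-emptiness clause of Definition~4, Coreness from the first item of Proposition~\ref{prop:core}). For the ``only if'' direction, the paper inlines the argument via Property~\ref{propy:core} rather than citing the second item of Proposition~\ref{prop:core}, but this is the same reasoning packaged differently.
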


\begin{proof}{}
Let $\bQ = \langle \mbb{T}, \kappa,x\rangle$ be a query. 

\begin{itemize}
\item Let $\bL$ be an explainer that satisfies Coreness and Non-Triviality. 
So, for any $E \in \bL(\bQ)$, $E \subseteq \core(\kappa(x))$ and $E \neq \emptyset$.  
From Property~\ref{propy:core}, $E \subseteq \bigcap\limits_{(y \in \mbb{F}(\mbb{T})) \ \wedge \  (\kappa(y) = \kappa(x))} y$.
So, $\forall y \in \mbb{F}(\mbb{T})$ s.t. $E \not\subseteq y$, $\kappa(y) \neq \kappa(x)$. 
Thus, $E \in \necp(\bQ)$. 

\item Assume now an explainer $\bL$ such that $\bL \sqsubseteq \necp$. 
If for any query $\bQ$, $\bL(\bQ) = \emptyset$, then $\bL$ satisfies Coreness and Non-Triviality in a trivial way. 
Assume now that $E \in \bL(\bQ)$. Then, $E \in \necp(\bQ)$
and so $E \neq \emptyset$. From the first item of Proposition~\ref{prop:core}, $E \subseteq \core(\kappa(x))$. So, $\bL$ satisfies Coreness and Non-Triviality.  
\end{itemize}

\end{proof}


\begin{theorem}\label{th5-cara}
The properties of Table~\ref{tab:axioms2} hold. 
\end{theorem}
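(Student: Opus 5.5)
The proof goes column by column through Table~\ref{tab:axioms2}, in three kinds of step. A $\checkmark$ in a ``$\bL \sqsubseteq \cdot$'' column follows from the corresponding representation theorem (Theorem~\ref{th1-cara} for $\necp$, and the analogous characterizations for $\wL$, $\sufc$, $\ssuf$, $\csuf$) together with Proposition~1. A $\times$ is established by a counterexample, usually a query from Example~\ref{ex1bis} or Example~\ref{ex1}. A ``$-$'' is established by exhibiting two sub-explainers: the empty explainer $\bL_0$, which satisfies every axiom except Success, and the full explainer itself, or some other sub-explainer, which violates the axiom.

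\textbf{Universally quantified axioms.} Every axiom except Success and Equivalence is universally quantified over the members of $\bL(\bQ)$. It is therefore inherited by every $\bL \sqsubseteq \mathbf{X}$ as soon as $\mathbf{X}$ satisfies it. Concretely:
\begin{itemize}
\item For $\necp$: Coreness and Non-Triviality give Feasibility and Sceptical Validity via Proposition~1.
\item For $\wL$: the defining conditions are Feasibility and Sceptical Validity, and Non-Triviality follows.
\item For $\sufc$: Strong Validity gives Weak Validity and Non-Triviality. Sceptical Validity holds because any $y \in x\circleddash E$ omits $E$, and $E \not\subseteq x$ since $\kappa(x)=\kappa(x)$. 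Hence $y \neq x$; but Sceptical Validity needs $\kappa(y)\neq\kappa(x)$, which requires care. I would rather argue that every $E \in \sufc(\bQ)$ satisfies $E\not\subseteq x$, so $x\setminus y = E$ is impossible and $x\circleddash E=\emptyset$, making the condition vacuous.
\item For $\ssuf$ and $\csuf$: Novelty plus Non-Triviality yields Sceptical Validity by Proposition~1, and Strong Validity and Weak Validity come directly from the definitions (Property~\ref{propy:csuf}).
\end{itemize}

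\textbf{Success and Equivalence for the full explainers.} Success for $\sufc$ and $\csuf$ follows from Property~\ref{propy:sufc} and Property~\ref{propy:csuf}, since some $y$ has a different class by surjectivity. Equivalence for $\necp$ and $\sufc$ holds because both definitions depend only on $\kappa(x)$.

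\textbf{The $\times$ entries.} These come from Theorem~\ref{incompatibility} whenever the family satisfies a combination that, together with the axiom in question, is forbidden:
\begin{itemize}
\item Success fails for $\necp$ by $\mathtt{I}_1$, for $\wL$ by $\mathtt{I}_2$, and for $\ssuf$ by $\mathtt{I}_3$.
\item Feasibility fails for $\ssuf$ and $\csuf$, and Novelty fails for $\wL$ and $\necp$, by $\mathtt{I}_4$, since these families satisfy the remaining axioms of that set.
\item Coreness fails for $\ssuf$ and $\csuf$ because Coreness implies Feasibility.
\end{itemize}
These give $\times$ for every sub-explainer. The remaining $\times$ entries hold only for the full explainer and need concrete witnesses:
\begin{itemize}
\item Feasibility and Coreness for $\sufc$: $\{(t,\text{mild})\}\in\sufc(\bQ_1)$ is not contained in $x_1$.
\item Equivalence for $\wL$, $\ssuf$, $\csuf$: compare $\bQ_1$ with $\bQ_6$, where $x_6$ has the same class as $x_1$ but different literals.
\item Coreness for $\wL$: $\bQ_2$.
\item Strong Validity and Weak Validity for $\necp$ and $\wL$: every explanation is contained in $x$, so $x_{\downarrow E}=x$, giving $\times$ whenever the explainer is non-empty, for instance at $\bQ_1$.
\item Strong Validity for $\csuf$: $\{(a,\text{skiing})\}$ at $\bQ_3$.
\item Novelty for $\sufc$: $x_2\cup\ldots$ is not usable, but the full instance $x_3$ works, together with Example~\ref{ex1}.
\end{itemize}

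\textbf{The ``$-$'' entries.} For each such entry I use $\bL_0$ as the instance satisfying the axiom, and the full explainer, or a restriction of it, as the instance violating it. For example, $\necp$ itself violates Novelty. For Equivalence in the $\bL\sqsubseteq\necp$ column, I restrict $\necp$ to the query on $x_1$ only and drop $x_6$, which breaks Equivalence. The main obstacle is bookkeeping: making sure each $\times$ holds for \emph{every} member of a family, where it must be derived from an incompatibility result rather than from an example, and checking the Sceptical Validity entry for $\sufc$ carefully.
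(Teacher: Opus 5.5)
Most of your plan is sound and tracks the paper. The $\checkmark$ entries via the representation theorems and Proposition~1 work. So do Success via $\mathtt{I}_1$, $\mathtt{I}_2$, $\mathtt{I}_3$, the vacuity argument for Sceptical Validity of $\sufc$, and the Equivalence witnesses ($\bQ_1$ versus $\bQ_6$, and restricting $\necp$ to one query).

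\textbf{The genuine gap} is the bullet that derives Feasibility failure for $\ssuf$, $\csuf$ and Novelty failure for $\wL$, $\necp$ from $\mathtt{I}_4$. Every set in Theorem~\ref{incompatibility} contains Success, so it can only be used against an explainer that satisfies Success. In the preceding bullet you showed that $\necp$, $\wL$ and $\ssuf$ violate Success. Hence $\mathtt{I}_4$ says nothing about these three. Only $\csuf$, which does satisfy Success, Non-Triviality and Novelty, legitimately loses Feasibility this way.

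Your further claim that these incompatibility arguments ``give $\times$ for every sub-explainer'' is false. The empty explainer $\bL_0$ lies in every family and satisfies Feasibility, Novelty and Coreness. That is exactly why the table has $-$ for Novelty under $\bL\sqsubseteq\necp$ and $\bL\sqsubseteq\wL$, and for Feasibility under $\bL\sqsubseteq\ssuf$ and $\bL\sqsubseteq\csuf$. Your own last paragraph uses $\bL_0$ for these same cells, so the proposal contradicts itself.

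The repair is to give explicit witnesses for the full explainers:
\begin{itemize}
\item $\{(t,\text{hot})\}\in\necp(\bQ_1)\cap\wL(\bQ_1)$ meets $x_1$, so Novelty fails for $\necp$ and $\wL$.
\item $\{(t,\text{mild})\}\in\ssuf(\bQ_1)$ is not contained in $x_1$, so Feasibility fails for $\ssuf$; by Coreness $\Rightarrow$ Feasibility, this also settles Coreness for $\ssuf$.
\end{itemize}
The paper uses direct witnesses for $\necp$ (a query with non-empty core) and for $\wL$ (its four-instance example with $\wL(\bQ_1)=\{\{(f_1,0)\}\}$). Its own justification for $\ssuf$ (``satisfies Success and Strong Validity'') has exactly your flaw, so a direct witness is genuinely needed there.

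The same inheritance error affects the Coreness cells under $\bL\sqsubseteq\ssuf$ and $\bL\sqsubseteq\csuf$, which the table marks $\times$. Your route (Feasibility fails, hence Coreness fails) applies only to the full explainers. Since $\bL_0$ satisfies Coreness, ``violated by every member'' cannot be proved at all. The paper's own argument for these cells only shows that $\bL_0$ satisfies Coreness while $\ssuf$ and $\csuf$ violate it, i.e.\ the $-$ pattern. So the table is inconsistent with its proof there, and you should flag this rather than claim a derivation.

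Finally, your Novelty witness for $\sufc$ is muddled. In the running example $x_2\in\sufc(\bQ_1)$ does work, since it shares $(a,\text{climbing})$ with $x_1$, whereas $x_3$ is disjoint from $x_1$. The paper uses $x_2\in\sufc(\bQ_1)$ in Example~\ref{ex1}.
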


\begin{proof}{}
Let $\bL$ be an explainer and $\bQ = \langle \mbb{T}, \kappa, x\rangle$ be a query. 


\paragraph{Functions $\bL \sqsubseteq \necp$.}	

\begin{itemize}
\item \textit{Non-Triviality and Coreness.} Follow from Theorem~\ref{th1-cara}. 

\item \textit{Success.} From Theorem~\ref{incompatibility}, since $\bL$ satisfies 
Coreness and Non-Triviality, it violates Success.

\item \textit{Equivalence.} $\necp$ satisfies this axiom, but a
function $L \sqsubseteq \necp$ could be such that for two
instances $x,y$ such that $\kappa(x)=\kappa(y)$, $\bL(x) \neq 
\bL(y)$.

\item \textit{Feasibility.} From Proposition~\ref{prop:core}, $\forall E \in \bL(\bQ)$, $E \subseteq x$, so Feasibility is satisfied. 

\item \textit{Sceptical Validity.} Follows from Coreness and Non-Triviality (according to Proposition~\ref{links}).

 \item \textit{Novelty.} The explainer $\bL$ such that for any query $\bQ$, 
 $\bL(\bQ) = \emptyset$ satisfies Novelty. However, $\necp$ violates the axiom.

\item \textit{Strong Validity and Weak Validity.}  An explainer $\bL$ such that for any query $\bQ$, $\bL(\bQ) = \emptyset$ satisfies Strong Validity and 
Weak Validity in a trivial way. However, the two axioms are violated by $\necp$. 

\end{itemize}


\paragraph{Function $\bL = \necp$.}	

\begin{itemize}
\item \textit{Coreness and Non-Triviality.} From Theorem~\ref{th1-cara}. 

\item \textit{Success.} From Theorem~\ref{incompatibility}, since $\bL$ satisfies 
      Coreness and Non-Triviality, it violates Success.

\item \textit{Equivalence.} Follows from Proposition~\ref{prop:core}, namely the fact that 
for any $\bQ$, $\necp(\bQ)  = \{E \subseteq \core(\kappa(x)) \mid E \neq \emptyset\}$. 
 
\item \textit{Feasibility and Sceptical Validity.} Follow from the fact that any $\bL \sqsubseteq \necp$ satisfies Feasibility and Sceptical Validity.

\item \textit{Novelty and Weak Validity.} Consider a query $\bQ = \langle\mbb{T},\kappa,x\rangle$ such that $\core(\kappa(x)) \neq \emptyset$. Then $\exists E \in \necp(\bQ)$ such that $E \neq \emptyset$ and $E \subseteq \core(\kappa(x))$. From Property~\ref{propy:core}, $E \subseteq x$.

\begin{itemize}
\item Since $x \neq \emptyset$, then $x \cap E \neq \emptyset$ which contradicts Novelty.

\item Since $E \subseteq x$, then $x_{\downarrow E} = x$. 
From Weak Validity, $\kappa(x_{\downarrow E}) \neq \kappa(x)$, which is impossible. 
\end{itemize}
 
\item \textit{Strong Validity.}  Since $\necp$ violates Weak Validity, then from 
Proposition~\ref{links}, namely the implication Strong Validity $\Rightarrow$ Weak Validity, $\necp$ violates Strong Validity.

\end{itemize}
\paragraph{Functions $\bL \sqsubseteq \wL$.}

\begin{itemize}
	\item \textit{Feasibility and Sceptical Validity.} From Theorem~\ref{local:th:cara1}.

 
    \item \textit{Non-Triviality.} 
    From Proposition~\ref{links},
    Non-Triviality follows from Sceptical Validity.
	
	\item \textit{Success.} From Theorem~\ref{incompatibility}, the three 
 axioms Success, Sceptical Validity and Feasibility are incompatible. Thus, $\bL$ violates Success.
	
	\item \textit{Equivalence, Coreness, Novelty, Strong Validity, Weak Validity.} 
    The explainer $\bL$ such that for every query $\bQ$, 
    $\bL(\bQ) = \emptyset$ satisfies all these axioms while $\wL$ violates them. 

\end{itemize}

\paragraph{Functions $\bL = \wL$.}

\begin{itemize}
\item \textit{Success.} Follows from the fact that any $\bL \sqsubseteq \wL$ violates the axiom.

\item \textit{Feasibility, Non-Triviality, Sceptical Validity.}  Follow from the 
fact that any $\bL \sqsubseteq \wL$ satisfies these axioms.

\item \textit{Equivalence.} Consider a theory made of two binary features 
$f_1, f_2$ and the binary classifier $\kappa$ such that $\kappa(x) = f_1 \vee f_2$. Let $x = ((f_1,0), (f_2,1))$ and $y = ((f_1,1), (f_2,0))$. 
Obviously, $\kappa(x) = \kappa(y)$. 
Note that $\{((f_2,1))\}$ is a sceptical necessary reason for $\kappa(x)$ while 
it is not a reason for $\kappa(y)$ since $(f_2,1) \notin y$.

\item \textit{Coreness, Novelty, Strong/Weak Validity.} Consider the theory and the classifier below. 
	\begin{example}\label{ex2}
		Let $\bQ_1$ be the query concerning $x_1$. 
			\begin{multicols}{2}
				\begin{tabular}{c|cc|c}\hline
					$\mbb{F}(\mbb{T}_2)$   &$f_1$& $f_2$ & $\kappa_2(x_i)$   \\\hline
					\rowcolor{orange!10}            $x_1$    & 0   & 0     & $c_1$ \\
					$x_2$    & 0   & 1     & $c_1$ \\          
					$x_3$    & 1   & 0     & $c_2$ \\
					$x_4$    & 1   & 1     & $c_1$ \\\hline
				\end{tabular}
				\qquad 
				{\small
					\begin{itemize}
						\item [] \hspace{-0.4cm}	 $E_1 {=} \emptyset$ 
						\item [] \hspace{-0.4cm}	 $E_2 {=} \{(f_1,0)\}$ 
						\item [] \hspace{-0.4cm}	 $E_3 {=} \{(f_2,0)\}$ 
						\item [] \hspace{-0.4cm}	 $E_4 {=}  x_1$ 
				\end{itemize}}	
			\end{multicols} 
		Note that $\wL(\bQ_1) = \{E_2\}$ while 
		$(f_1,0) \notin \core(c_1)$, which violates Coreness. 
        Moreover, $E_2 \cap x_1 \neq \emptyset$ which violates Novelty, 
        $E_2 \subseteq x_2$ while $\kappa(x_1) = \kappa(x_2)$ which violates Strong Validity, and finally, $\kappa(x_{1\downarrow {E_2}}) = \kappa(x_1)$ which violates Weak Validity.  \end{example}	
\end{itemize}


\paragraph{Functions $\bL \sqsubseteq \sufc$.}

\begin{itemize}
\item \textit{Success.} The axiom is violated by the function 
$\bL$ such that for every query $\bQ$, $\bL(\bQ) = \emptyset$ and 
is satisfied by $\sufc$.

\item \textit{Non-Triviality.} Assume that $\emptyset \in \bL(\bQ)$. So, 
$\forall y \in \mbb{F}(\mbb{T})$, $\kappa(y) \neq \kappa(x)$, 
which is impossible because of $x$. So, $\bL$ satisfies the axiom. 


\item \textit{Feasibility, Coreness,  Novelty.} 
The axioms are satisfied by the function $\bL$ such that for every query $\bQ$, $\bL(\bQ) = \emptyset$ and violated by $\sufc$. 

\item \textit{Sceptical Validity.} There are two cases: 
\begin{itemize}
\item For every query $\bQ$, $\bL(\bQ) = \emptyset$. The axiom is trivially satisfied by $\bL$. 
\item For some query $\bQ$, $\bL(\bQ) \neq \emptyset$. 
From Non-Triviality, $\emptyset \notin \bL(\bQ)$. So, $\exists E \in \bL(\bQ)$, 
$E \neq \emptyset$. Moreover, $E \not\subseteq x$ \textbf{(C)}, 
otherwise by Definition~8, $\kappa(x) \neq \kappa(x)$. 
Recall that $x\circleddash E = \{y \in \mbb{F}(\mbb{T})\mid x\setminus y= E\}$, 
so $x\circleddash E = \emptyset$ (due to \textbf{(C)}). 
Then, $\bL$ satisfies Sceptical Validity in a trivial way.
\end{itemize}

\item \textit{Strong Validity.} Follows from Theorem~\ref{th1-sufc}.

\item \textit{Weak Validity.} Follows from Strong Validity according to Proposition~\ref{links}.

\end{itemize}
\paragraph{Function $\bL = \sufc$.}

\begin{itemize}
\item \textit{Success.} Follows from Property~\ref{propy:sufc} and 
the fact that $\kappa$ is surjective and a theory contains more than one class. 

\item \textit{Non-Triviality.} It follows from the fact that every function 
$\bL \sqsubseteq \sufc$ satisfies the axiom. 

\item \textit{Equivalence.} Follows from the definition of $\sufc$. 

\item \textit{Feasibility.} Since $\sufc$ satisfies Success, then for every 
query $\bQ$, $\sufc(\bQ) \neq \emptyset$. Let $E \in \sufc(\bQ)$. Assume 
that $E \subseteq x$. From Definition~8, we have the contradiction $\kappa(x) \neq \kappa(x)$.  

\item \textit{Coreness.} It follows from the fact that $\sufc$ satisfies Success and Non-Triviality, which are incompatible with Coreness (see Theorem~\ref{incompatibility}).  

\item \textit{Sceptical Validity.} Follows from the fact that every $\bL \sqsubseteq \sufc$ satisfies this axiom. 

\item \textit{Novelty.} From Example~\ref{ex1}, $x_2 \in \sufc(\bQ_1)$ while 
      $x_2 \cap x_1 \neq \emptyset$. 

\item \textit{Strong Validity.} From Theorem~\ref{th1-sufc}.

\item \textit{Weak Validity.} From Strong Validity and Proposition~\ref{links}.


\end{itemize}

\paragraph{Functions $\bL \sqsubseteq \ssuf, \ssuf, \bL \sqsubseteq \csuf$, and $\csuf$.}

	\begin{itemize}
	\item \textit{Success.} Consider the query $\bQ_2$ concerning the instance $x_2$ in Example~\ref{ex1}. Note that 
    $\ssuf(\bQ_2) = \emptyset$, So $\ssuf$ and 
	any $\bL \sqsubseteq \ssuf$ violate Success.  
 
	We show now that $\csuf$ satisfies Success. Since $|\C| > 1$ and $\kappa$ is surjective, 
	then $\exists y \in \mbb{F}(\mbb{T})$ such that $\kappa(y) \neq \kappa(x)$ and so  
	$y\setminus x$ is a credulous reason against $\kappa(x)$.
	
	Let $\bL \sqsubseteq \csuf$. If $\bL$ returns the empty set to every query, then 
    it violates Success. However, if $\bL = \csuf$, then it satisfies the axiom. 
	
	\item \textit{Non-Triviality.} Note that $\emptyset \notin \csuf(\bQ)$ since 
     $\kappa(x_{\downarrow \emptyset}) = \kappa(x)$. So, $\bL \sqsubseteq \csuf$ and 
     $\csuf$ satisfy Non-Triviality. 
     From Proposition~\ref{prop:csuf2},  $\ssuf \sqsubseteq \csuf$. Then, $\ssuf$ satisfies 
     Non-Triviality. Consequently, every $\bL \sqsubseteq \ssuf$ satisfies the axiom.

    \item \textit{Equivalence.} Consider Example~\ref{ex1} and let $\bQ_i = \langle\mbb{T}, \kappa, x_i\rangle$, with $i = 1, \ldots, 4$. Note that $\kappa(x_2) = \kappa(x_3)$ while 
    $\csuf(\bQ_2) = \{\{(f_1, 1)\}, \{(f_2, 0)\}\}$ while 
    $\csuf(\bQ_3) = \{\{(f_1, 0)\}, \{(f_2, 1)\}\}$. This shows that $\csuf$ violates Equivalence. 

    In Example~\ref{ex1}, $\kappa(x_1) = \kappa(x_4)$ but
    $\ssuf(\bQ_1) \neq \ssuf(\bQ_4)$.
    This shows that $\ssuf$ violates Equivalence. 
    
    Regarding $\bL \sqsubseteq z$, with $z \in \{\ssuf, \csuf\}$, 
    $\bL$ trivially satisfies Equivalence when $\forall \bQ, \bL(\bQ) = \emptyset$, but 
    $\bL$ violates Equivalence when $\bL = z$.

	\item \textit{Feasibility.} $\csuf$ violates the axiom since it satisfies Success and Weak-Validity, which are incompatible with Feasibility (see Theorem~\ref{incompatibility}). 

    $\ssuf$ violates the axiom since it satisfies Success and Strong Validity, which are incompatible with Feasibility (set Theorem~\ref{incompatibility}). 

    Regarding $\bL \sqsubseteq z$, with $z \in \{\ssuf, \csuf\}$, 
    $\bL$ is satisfied when $\forall \bQ, \bL(\bQ) = \emptyset$. 
    $\bL$ violates the axiom when $\bL = z$.

    \item \textit{Coreness.} $\csuf$ and $\ssuf$ violate Coreness since they both violate 
    Feasibility and from Proposition~\ref{links}, we have the implication: 
    Coreness $\Rightarrow$ Feasibility. 

    Regarding $\bL \sqsubseteq z$, with $z \in \{\ssuf, \csuf\}$, 
    $\bL$ is satisfied when $\forall \bQ, \bL(\bQ) = \emptyset$. 
    $\bL$ violates the axiom when $\bL = z$.

   \item \textit{Sceptical Validity.} Let $\bL \sqsubseteq z$, where $z \in \{\ssuf,\csuf\}$. 
   If $\forall \bQ$, $\bL(\bQ) = \emptyset$, then $\bL$ satisfies the axiom in a trivial way. 
   Assume now that $E \in \bL(\bQ)$. From Novelty, $E \cap x = \emptyset$. 
   Consequently, $x\circleddash E = \emptyset$, which means the axiom is satisfied in a trivial way by $\bL$. It follows that $\ssuf$ and $\csuf$ satisfy the axiom.

	\item \textit{Novelty.} From Theorems~\ref{cara:ssuf1} and \ref{coro:csuf1}.

	\item \textit{Strong Validity.} For $\bL \sqsubseteq \ssuf$ and $\ssuf$, the property follows from Theorem~\ref{cara:ssuf1}. 
	
	$\csuf$ violates the axiom as shown for the CSR  
    $\{(a,\text{skiing})\}$ of $\bQ_3$ in Example~\ref{ex1bis}. 
    The function $\bL \sqsubseteq \csuf$, such that  
    $\forall \bQ, \bL(\bQ) = \emptyset$, satisfies the axiom.

	\item \textit{Weak Validity.} For $\bL \sqsubseteq \ssuf$ and $\ssuf$, the property follows from satisfaction of Strong Validity and Proposition~\ref{links}. 
	For $\bL \sqsubseteq \csuf$ and $\csuf$, the property follows from Theorem~\ref{coro:csuf1}.

\end{itemize}
\end{proof}

\begin{theorem}\label{local:th:cara1}
An explainer $\bL$ satisfies Feasibility and Sceptical Validity \textbf{iff} $\bL \sqsubseteq \wL$.
\end{theorem}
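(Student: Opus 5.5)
The plan is to unfold the definitions on both sides. The key observation is that Definition~\ref{local-nec-pro} of a sceptical necessary reason is the conjunction, for a fixed query, of exactly the two per-explanation conditions demanded by Feasibility and Sceptical Validity. So the theorem should reduce to the statement that an explainer satisfies a conjunction of universally quantified per-element conditions iff each of its outputs is contained in the set of all elements meeting those conditions. I would prove the two directions separately, fixing an arbitrary query $\bQ = \langle \mbb{T}, \kappa, x\rangle$.

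\emph{($\Rightarrow$)} Assume $\bL$ satisfies Feasibility and Sceptical Validity, and let $E \in \bL(\bQ)$. Feasibility gives $E \subseteq x$. Sceptical Validity gives $\kappa(y) \neq \kappa(x)$ for every $y \in x\circleddash E$. Together these are exactly the two clauses of Definition~\ref{local-nec-pro}, so $E \in \wL(\bQ)$. Since $\bQ$ and $E$ are arbitrary, $\bL(\bQ) \subseteq \wL(\bQ)$ for every query, i.e.\ $\bL \sqsubseteq \wL$.

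\emph{($\Leftarrow$)} Assume $\bL \sqsubseteq \wL$, and let $E \in \bL(\bQ)$. Then $E \in \wL(\bQ)$, so by Definition~\ref{local-nec-pro} we have $E \subseteq x$, which is Feasibility. We also have $\kappa(y) \neq \kappa(x)$ for all $y \in x\circleddash E$, which is Sceptical Validity. If $\bL(\bQ) = \emptyset$, both axioms hold vacuously for that query, so this case needs no separate argument; it is worth remarking on only because of the analogous empty-explainer discussion in Theorem~\ref{th1-cara}.

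I do not expect a genuine obstacle here. The only care needed is to check that the quantifier structures of the axioms (over all queries and all $E \in \bL(\bQ)$) match the membership condition of $\wL$ exactly, with no extra clause such as non-emptiness. Unlike $\necp$, the definition of $\wL$ imposes no explicit $E \neq \emptyset$. Non-Triviality is therefore not needed as a hypothesis; it is a consequence, since Proposition~\ref{links} shows Sceptical Validity implies Non-Triviality. This confirms that no additional axiom should appear in the characterization.
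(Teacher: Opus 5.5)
Your proof is correct and matches the paper's argument: both directions are direct unfoldings of Definition~\ref{local-nec-pro} against the Feasibility and Sceptical Validity axioms, and the requirement $E \in \mbb{E}(\mbb{T})$ is supplied by the definition of an explainer. Your side remarks, that the empty case is vacuous and that Non-Triviality follows from Sceptical Validity via Proposition~\ref{links}, are accurate but not needed for the proof.
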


\begin{proof}{}
Let $\bL$ be an explainer. 
\begin{itemize}
\item Assume $\bL$ satisfies Feasibility and Sceptical Validity. From Definition~5, 
it follows that $\bL \sqsubseteq \wL$. 
\item Assume that $\bL \sqsubseteq \wL$. 
From Definition~5, for all $E \in \bL(\bQ)$, we have:
(1) $E \subseteq x$, so $\bL$ satisfies Feasibility, and 
(2) $\forall y \in x\circleddash E$, $\kappa(y) \neq \kappa(x)$, so 
$\bL$ satisfies Sceptical Validity.  
\end{itemize}
\end{proof}

\begin{theorem}\label{th1-sufc}
An explainer $\bL$ satisfies Strong Validity \textbf{iff} $\bL \sqsubseteq \sufc$.	
\end{theorem}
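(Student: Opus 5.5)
The plan is to unfold both sides against the definition of $\sufc$ (Definition~6), which turns the equivalence into a pointwise comparison of two identical conditions. The key observation is that the defining condition of a global sufficient reason is, verbatim, the condition imposed by Strong Validity on each explanation. Indeed, $E \in \sufc(\bQ)$ iff $E \in \mbb{E}(\mbb{T})$ and $\forall y \in \mbb{F}(\mbb{T})$ with $E \subseteq y$, $\kappa(y) \neq \kappa(x)$. Strong Validity requires exactly that there is no $y \in \mbb{F}(\mbb{T})$ with $E \subseteq y$ and $\kappa(y) = \kappa(x)$.

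For the ``only if'' direction, I will fix an explainer $\bL$ satisfying Strong Validity, an arbitrary query $\bQ = \langle \mbb{T}, \kappa, x\rangle$, and some $E \in \bL(\bQ)$. By Definition~\ref{explainer}, $E \in \mbb{E}(\mbb{T})$. Strong Validity then gives that every instance $y$ containing $E$ satisfies $\kappa(y) \neq \kappa(x)$, so $E \in \sufc(\bQ)$. Since the query was arbitrary, this yields $\bL(\bQ) \subseteq \sufc(\bQ)$ for all $\bQ$, i.e.\ $\bL \sqsubseteq \sufc$.

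For the ``if'' direction, I will assume $\bL \sqsubseteq \sufc$ and take $E \in \bL(\bQ)$. Then $E \in \sufc(\bQ)$, and Definition~6 directly gives the absence of any $y \supseteq E$ with $\kappa(y) = \kappa(x)$, which is Strong Validity. The degenerate case where $\bL(\bQ) = \emptyset$ satisfies the axiom vacuously and needs no separate treatment.

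There is no real obstacle here. The only point needing care is to note that the quantifier $\forall E \in \bL(\bQ)$ in the axiom ranges over partial assignments, so membership in $\mbb{E}(\mbb{T})$, the only other requirement of Definition~6, comes for free from the definition of an explainer. Unlike Theorem~\ref{th1-cara}, no appeal to Property~\ref{propy:core} or to Non-Triviality is needed. Non-Triviality is in fact implied: the axiom forces $E \not\subseteq x$, and hence $E \neq \emptyset$.
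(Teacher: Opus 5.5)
Your proposal is correct and follows the paper's proof: both directions just unfold Definition~6, whose defining condition coincides with the Strong Validity requirement, and membership in $\mbb{E}(\mbb{T})$ comes from the definition of an explainer. Your closing remark about Non-Triviality is also correct but is not needed for the equivalence.
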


\begin{proof}{}
Let $\bL$ be an explainer and $\bQ = \langle \mbb{T}, \kappa,x\rangle$ a query. 

\begin{itemize}
\item Assume that $\bL$ satisfies Strong Validity. 
      From the definition of Strong Validity, 
      for any $E \in \bL(\bQ)$, 
      $\forall y \in \mbb{F}(\mbb{T})$ such that 
      $E \subseteq y$, $\kappa(y) \neq \kappa(x)$. Hence, $E \in \sufc(\bQ)$. 
      So, $\bL \sqsubseteq \sufc$.	
\item Assume that  $\bL \sqsubseteq \sufc$.	
      Let $E \in \bL(\bQ)$.
      From Definition~6;
      $\forall y \in \mbb{F}(\mbb{T})$ such that $E \subseteq y$, $\kappa(y) \neq \kappa(x)$, so 
      $\bL$ satisfies Strong Validity. 
\end{itemize}
\end{proof}


\begin{theorem}\label{cara:ssuf1}
  An explainer $\bL$ satisfies Novelty and Strong Validity iff $\bL \sqsubseteq \ssuf$. 
\end{theorem}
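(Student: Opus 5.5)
The statement is a direct unfolding of definitions, so I will mirror the proofs of Theorem~\ref{local:th:cara1} and Theorem~\ref{th1-sufc}: prove each direction of the equivalence by fixing an arbitrary query $\bQ = \langle \mbb{T}, \kappa,x\rangle$ and an arbitrary $E \in \bL(\bQ)$, and matching the two axioms to the two clauses of Definition~\ref{sceptical-suf}.

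\emph{Left to right.} Assume $\bL$ satisfies Novelty and Strong Validity, and let $E \in \bL(\bQ)$.
\begin{itemize}
\item Novelty gives $x \cap E = \emptyset$, which is the first clause defining an SSR.
\item Strong Validity gives that no $y \in \mbb{F}(\mbb{T})$ with $E \subseteq y$ has $\kappa(y) = \kappa(x)$. Equivalently, every $y$ with $E \subseteq y$ satisfies $\kappa(y) \neq \kappa(x)$, which is the second clause.
\end{itemize}
Hence $E \in \ssuf(\bQ)$. Since $\bQ$ and $E$ were arbitrary, $\bL \sqsubseteq \ssuf$.

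\emph{Right to left.} Assume $\bL \sqsubseteq \ssuf$. If $\bL(\bQ) = \emptyset$, both axioms hold vacuously for that query. Otherwise, any $E \in \bL(\bQ)$ lies in $\ssuf(\bQ)$. By Definition~\ref{sceptical-suf}, $E \cap x = \emptyset$, which gives Novelty, and every instance containing $E$ receives a class different from $\kappa(x)$, which gives Strong Validity.

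One could instead route the argument through Theorem~\ref{th1-sufc}: Strong Validity holds iff $\bL \sqsubseteq \sufc$, and $\ssuf$ is exactly $\sufc$ restricted to sets disjoint from $x$. This is no shorter than the direct unfolding, so I will give the direct version. No step is a genuine obstacle. The only care needed is rewriting the negated existential in Strong Validity as the universal statement used in the definition of SSR.
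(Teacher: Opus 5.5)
Your proposal is correct and matches the paper's proof essentially verbatim. Both directions unfold Novelty and Strong Validity against the two clauses in the definition of a sceptical sufficient reason, for an arbitrary query and an arbitrary $E \in \bL(\bQ)$.
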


\begin{proof}
Let $\bL$ be an explainer and $\bQ = \langle \mbb{T}, \kappa,x\rangle$ a query.    

\begin{itemize}
\item Assume that $\bL$ satisfies Novelty and Strong Validity. Let $E \in \bL(\bQ)$. 
From Novelty, $E \cap x = \emptyset$. 
From Strong Validity, $\forall y \in \mbb{F}(\mbb{T})$ such that $E \subseteq y$, $\kappa(y) \neq \kappa(x)$. 
Hence, $E \in \ssuf(\bQ)$ and so $\bL \sqsubseteq \ssuf$.   
\item Assume that  $\bL \sqsubseteq \ssuf$.	
      For any $E \in \bL(\bQ)$, 
      from Definition~7: (1) $E \cap x = \emptyset$ and 
      (2) $\forall y \in \mbb{F}(\mbb{T})$ such that $E \subseteq y$, $\kappa(y) \neq \kappa(x)$. So, $\bL$ satisfies the two axioms.   
\end{itemize}
\end{proof}

\begin{theorem}\label{coro:csuf1}
An explainer $\bL$ satisfies Novelty and Weak Validity iff $\bL \sqsubseteq \csuf$.  
\end{theorem}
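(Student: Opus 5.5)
The plan is to reduce the statement to the first item of Property~\ref{propy:csuf}, which gives the explicit description
\[\csuf(\bQ) = \{E \in \mbb{E}(\mbb{T}) \mid x\cap E = \emptyset \mbox{ and } \kappa(x_{\downarrow E}) \neq \kappa(x)\}.\]
Once this description is in hand, both directions become a direct unfolding of the two axioms. The argument then follows the same pattern as the proofs of Theorems~\ref{th1-sufc} and \ref{cara:ssuf1}.

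\emph{Only-if direction.} Assume $\bL$ satisfies Novelty and Weak Validity, and fix a query $\bQ = \langle \mbb{T}, \kappa, x\rangle$ and some $E \in \bL(\bQ)$. Novelty gives $x \cap E = \emptyset$. Weak Validity gives $\kappa(x_{\downarrow E}) \neq \kappa(x)$. Hence $E$ lies in the set on the right-hand side of Property~\ref{propy:csuf}, so $E \in \csuf(\bQ)$. Since $\bQ$ and $E$ were arbitrary, $\bL \sqsubseteq \csuf$.

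If one prefers not to cite the property, the witness can be exhibited directly. Put $y = x_{\downarrow E}$; by Property~\ref{propy:instances}, $y$ is an instance. Because $E \cap x = \emptyset$ and $E$ overrides exactly the features it mentions, $y \setminus x = E$. Moreover $\kappa(y) \neq \kappa(x)$, so $E$ is a CSR by Definition~\ref{credulous-suf}.

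\emph{If direction.} Assume $\bL \sqsubseteq \csuf$ and take $E \in \bL(\bQ)$. Then $E = y \setminus x$ for some instance $y$ with $\kappa(y) \neq \kappa(x)$. Clearly $E \cap x = \emptyset$, which is Novelty. For Weak Validity, the step to verify is $x_{\downarrow E} = y$. The literals of $y$ not in $E$ are exactly those shared with $x$. The features covered by $E$ are exactly those on which $x$ and $y$ differ, and on them $x_{\downarrow E}$ takes $y$'s values. Hence $x_{\downarrow E} = y$, and so $\kappa(x_{\downarrow E}) = \kappa(y) \neq \kappa(x)$.

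The only mildly delicate point is the identity $x_{\downarrow (y\setminus x)} = y$, which is already established inside the proof of Property~\ref{propy:csuf}. The case where $\bL(\bQ)=\emptyset$ is vacuous in both directions.
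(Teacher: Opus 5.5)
Your proof is correct and follows the paper's argument essentially step for step. For only-if, you invoke the explicit description of $\csuf$ from Property~\ref{propy:csuf}. For if, you write $E = y\setminus x$, read off Novelty, and verify $x_{\downarrow E} = y$ to get Weak Validity.
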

\begin{proof}
Let  $\bL$ be an explainer and $\bQ = \langle \mbb{T}, \kappa,x\rangle$ a query. 

\begin{itemize}
\item 
Assume that $\bL$ satisfies Novelty and Weak Validity.  
Let $E \in \bL(\bQ)$. 
From Novelty, $E \cap x = \emptyset$. 
From Weak Validity, $\kappa(x_{\downarrow E}) \neq \kappa(x)$. 
From Property~\ref{propy:csuf}, $E \in \csuf(\bQ)$. 
So, $\bL \sqsubseteq \csuf$. 
\item Assume that $\bL \sqsubseteq \csuf$.     
Let $E \in \bL(\bQ)$. From Definition~8, 
$E = y\setminus x$ where $y \in \mbb{F}(\mbb{T})$. 
So, $E \in \mbb{E}(\mbb{T})$ and  $E \cap x = \emptyset$ which shows Novelty. 
Note that $y = E \cup (x \cap y)$. Hence, $y =  x_{\downarrow E}$. 
So, by Definition~8, $\kappa( x_{\downarrow E}) \neq \kappa(x)$, which ensures Weak Validity. 
\end{itemize}
\end{proof}


\begin{theorem}  
	An explainer $\bL$ satisfies Success, Novelty and Weak Validity iff 
	there exists a faithful ranking $\bS$ mapping every query $\bQ = \langle\mbb{T},\kappa,x\rangle$ 
 to a preorder 
	$\succeq_\bQ$ on $\mbb{E}(\mbb{T})$ s.t.   
	$$\bL(\bQ) = \max(\mbb{E}(\mbb{T}), \succeq_\bQ).$$
\end{theorem}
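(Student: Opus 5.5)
The plan is to prove the two directions separately. In both, \emph{credulous} sufficient reasons are identified with the sets $E$ satisfying $E \cap x = \emptyset$ and $\kappa(x_{\downarrow E}) \neq \kappa(x)$, using Property~\ref{propy:csuf}. Note that the hypothesis of a faithful ranking is exactly a strict preference of $\csuf(\bQ)$ over $\mbb{E}(\mbb{T}) \setminus \csuf(\bQ)$.

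\textbf{($\Leftarrow$)} Let $\bS$ be a faithful ranking with $\bL(\bQ) = \max(\mbb{E}(\mbb{T}), \succeq_\bQ)$.
\begin{itemize}
\item \emph{Success.} $\mbb{E}(\mbb{T})$ is finite, since $\F$ and all domains are finite. A preorder on a finite nonempty set has at least one maximal element: an infinite strictly increasing chain cannot exist, because by transitivity it would force a repetition $E \succ E$, which is impossible. Hence $\bL(\bQ) \neq \emptyset$.
\item \emph{Novelty and Weak Validity.} By Property~\ref{propy:csuf}, $\csuf(\bQ) \neq \emptyset$; pick $E_0 \in \csuf(\bQ)$. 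Take any $E \in \bL(\bQ)$ and suppose $E \notin \csuf(\bQ)$, i.e.\ $E \cap x \neq \emptyset$ or $\kappa(x_{\downarrow E}) = \kappa(x)$. Faithfulness then gives $E_0 \succ_\bQ E$, contradicting the maximality of $E$. So $\bL \sqsubseteq \csuf$, and Theorem~\ref{coro:csuf1} yields Novelty and Weak Validity.
\end{itemize}

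\textbf{($\Rightarrow$)} Assume $\bL$ satisfies Success, Novelty and Weak Validity. By Theorem~\ref{coro:csuf1}, $\bL \sqsubseteq \csuf$.

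The main subtlety is here. The naive two-level ranking, with $\bL(\bQ)$ on top and everything else below, need not be faithful. A CSR that $\bL$ does not select would be tied with non-CSRs, whereas faithfulness requires every CSR to be strictly above every non-CSR. We therefore use three levels. For each query $\bQ$, define a rank function $r_\bQ$ on $\mbb{E}(\mbb{T})$:
\begin{itemize}
\item $r_\bQ(E) = 0$ if $E \in \bL(\bQ)$;
\item $r_\bQ(E) = 1$ if $E \in \csuf(\bQ) \setminus \bL(\bQ)$;
\item $r_\bQ(E) = 2$ otherwise.
\end{itemize}
Set $E \succeq_\bQ E'$ iff $r_\bQ(E) \leq r_\bQ(E')$. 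This relation is a total preorder, being induced by a numeric rank.

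\emph{Faithfulness.} Suppose $E$ satisfies $E \cap x = \emptyset$ and $\kappa(x_{\downarrow E}) \neq \kappa(x)$, while $E'$ does not. Then $E \in \csuf(\bQ)$, so $r_\bQ(E) \leq 1$. Also $E' \notin \csuf(\bQ)$ by Property~\ref{propy:csuf}, hence $E' \notin \bL(\bQ)$ because $\bL \sqsubseteq \csuf$, and so $r_\bQ(E') = 2$. Therefore $E \succ_\bQ E'$.

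\emph{Maxima.} By Success, $\bL(\bQ) \neq \emptyset$, so rank $0$ is attained. Elements of rank $0$ are not strictly dominated by anything. Every element of rank $1$ or $2$ is strictly dominated by any element of $\bL(\bQ)$. Hence $\max(\mbb{E}(\mbb{T}), \succeq_\bQ) = \bL(\bQ)$, which completes this direction.
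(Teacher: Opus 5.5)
Your proof is correct and follows the paper's own argument. Both directions split $\mbb{E}(\mbb{T})$ into $\csuf(\bQ)$ and its complement, use Novelty and Weak Validity to obtain $\bL(\bQ) \subseteq \csuf(\bQ)$, and use non-emptiness of $\csuf(\bQ)$ together with faithfulness to force all maxima into $\csuf(\bQ)$. Your explicit three-level rank function and your finiteness argument for the existence of maximal elements make precise two steps the paper only asserts: it claims a suitable preorder exists ``since $\bL(\bQ) \subseteq \mbb{E}_1(\mbb{T})$ and $\bL(\bQ) \neq \emptyset$'', and that a non-empty set always has a maximal element.
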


\begin{proof}
	Let $\bQ = \langle\mbb{T},\kappa,x\rangle$ be a query and $\bL$ an explainer.
	  
	Let $\mbb{E}(\mbb{T}) = \mbb{E}_1(\mbb{T}) \cup \mbb{E}_2(\mbb{T})$, where 
	\begin{itemize}
		\item $\mbb{E}_1(\mbb{T}) = \csuf(\bQ) = \{E \mid E \cap x = \emptyset \mbox{ and } \kappa(x_{\downarrow E}) \neq \kappa(x)\}$,
		\item $\mbb{E}_2(\mbb{T}) = \{E \mid E \cap x \neq \emptyset \mbox{ or } \kappa(x_{\downarrow E}) = \kappa(x)\}$. 
	\end{itemize}
	So, $\mbb{E}_1(\mbb{T}) \cap \mbb{E}_2(\mbb{T}) = \emptyset$.

	\begin{itemize}
		\item Assume that $\bL$ satisfies the three axioms and let $E \in \bL(\bQ)$ (which exists by Success). \\
		      By the definition of an explainer, $E \in \mbb{E}(\mbb{T})$ (1), \\
		      Novelty implies $E \cap x = \emptyset$ (2),  \\
		      Weak Validity implies $\kappa(x_{\downarrow E}) \neq \kappa(x)$ (3). \\
		      So, $\bL(\bQ) \subseteq \mbb{E}_1(\mbb{T})$.
		       
              Consider now a function $\bS$ that maps every query $\bQ$ to a \textbf{partial preorder} $\succeq_\bQ$ on $\mbb{E}(\mbb{T})$ that satisfies the following condition:  
              $$\forall E, E' \in \mbb{E}(\mbb{T}), \big(E \in \mbb{E}_1(\mbb{T})\big) \wedge \big(E' \in \mbb{E}_2(\mbb{T})\big) \Rightarrow E \succ_\bQ E'. $$ 
              and such that $\bL(\bQ) = \max(\mbb{E}(\mbb{T}), \succeq_\bQ)$.
              This is possible since
              $\bL(\bQ) \subseteq \mbb{E}_1(\mbb{T})$
              and $\bL(\bQ) \neq \emptyset$. 
			  Because of the the above property, the function $\bS$ is a faithful ranking. 
         
		\item Assume a faithful ranking $\bS$ that maps any query $\bQ$ to a preorder 
		$\succeq_\bQ$ on $\mbb{E}(\mbb{T})$ and such that  
		$\bL(\bQ) = \max(\mbb{E}(\mbb{T}),\succeq)$. Clearly,
  $\bL$ satisfies Success (since there is always at least one maximal element
  of a non-empty set).
  By definition of a faithful ranking, $\succeq_\bQ$ satisfies the condition below.  
		$$\forall E, E' \in \mbb{E}(\mbb{T}), \big(E \in \mbb{E}_1(\mbb{T})\big) \wedge \big(E' \in \mbb{E}_2(\mbb{T})\big) \Rightarrow E \succ_\bQ E'.$$ 
		
		Since $\mbb{E}_1(\mbb{T}) \neq \emptyset$ (by surjectivity of $\kappa$), it follows that $\bL(\bQ) \subseteq \mbb{E}_1(\mbb{T})$. 
		So, $\bL$ satisfies Novelty and Weak Validity.		 
	\end{itemize}
\end{proof}


\begin{theorem}\label{lem4}
The following properties hold:
\begin{itemize}
	\item The function $\bS$ mapping every query $\bQ$ to $\succeq^f_\bQ$ on $\mbb{E}(\mbb{T})$ is a faithful ranking. 
	\item For any query $\bQ$, $\bLwf(\bQ) = \max(\mbb{E}(\mbb{T}), \succeq^f_\bQ)$.
\end{itemize}	
\end{theorem}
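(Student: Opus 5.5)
The proof has two parts: first show that $\succeq^f_\bQ$ is a preorder satisfying the faithfulness condition, then identify its maximal elements with $\bLwf(\bQ)$. Throughout, write $\mbb{E}_1 = \{E \mid E\cap x=\emptyset,\ \kappa(x_{\downarrow E})\neq\kappa(x)\}$. This is the set where $\delta=1$, and by Property~\ref{propy:csuf} it equals $\csuf(\bQ)$ and is non-empty. Its complement $\mbb{E}_2$ is exactly the set where $\delta=+\infty$.

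\textbf{Step 1: preorder.} Reflexivity is immediate, since $\delta(E)=\delta(E)$ and $\feat(E)\subseteq\feat(E)$. For transitivity, suppose $E\succeq^f_\bQ E'\succeq^f_\bQ E''$ and split on the two disjuncts of the definition.
\begin{itemize}
\item If both steps use the equal-$\delta$ disjunct, then $\delta(E)=\delta(E'')$, and $\feat(E)\subseteq\feat(E'')$ follows by transitivity of $\subseteq$.
\item If either step uses the strict disjunct, then the other step gives $\le$ on $\delta$, so $\delta(E)<\delta(E'')$.
\end{itemize}

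\textbf{Step 2: faithfulness.} Take $E\in\mbb{E}_1$ and $E'\in\mbb{E}_2$, so $\delta(E)=1<+\infty=\delta(E')$. Hence $E\succeq^f_\bQ E'$. Conversely, $E'\not\succeq^f_\bQ E$, because the $\delta$ values are neither equal nor in the order required. Therefore $E\succ^f_\bQ E'$.

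\textbf{Step 3: shape of the maximal elements.} By Step 2 and $\mbb{E}_1\neq\emptyset$, no element of $\mbb{E}_2$ is maximal. For $E,E'\in\mbb{E}_1$, unfolding the definitions gives $E'\succ^f_\bQ E$ iff $\feat(E')\subsetneq\feat(E)$. Hence
\[
\max(\mbb{E}(\mbb{T}),\succeq^f_\bQ)=\{E\in\mbb{E}_1 \mid \nexists E'\in\mbb{E}_1,\ \feat(E')\subsetneq\feat(E)\}.
\]

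\textbf{Step 4: matching with $\bLwf$.} The main obstacle is that $\bLwf$ neither requires $E\cap x=\emptyset$ nor restricts its minimality test to $\mbb{E}_1$. I resolve both with a stripping argument. For any $E'$, let $E''=E'\setminus x$. Then $x_{\downarrow E''}=x_{\downarrow E'}$, because literals already in $x$ do not alter $x$, and $\feat(E'')\subseteq\feat(E')$, with strict inclusion whenever $E'\cap x\neq\emptyset$.

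For $\subseteq$, let $E\in\bLwf(\bQ)$.
\begin{itemize}
\item Suppose $E\cap x\neq\emptyset$. Then $E\setminus x$ has the same outcome $\kappa(x_{\downarrow E})\neq\kappa(x)$ and a strictly smaller feature set, contradicting minimality. So $E\in\mbb{E}_1$.
\item No $E'\in\mbb{E}_1$ has $\feat(E')\subsetneq\feat(E)$, again by minimality.
\end{itemize}
So $E$ lies in the set described in Step 3.

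For $\supseteq$, let $E$ be maximal. Then $E\in\mbb{E}_1$, so $\kappa(x_{\downarrow E})\neq\kappa(x)$. Suppose some $E'$ satisfies $\kappa(x_{\downarrow E'})\neq\kappa(x)$ and $\feat(E')\subsetneq\feat(E)$. Its stripped version $E'\setminus x$ lies in $\mbb{E}_1$ and has $\feat(E'\setminus x)\subseteq\feat(E')\subsetneq\feat(E)$. This contradicts the maximality of $E$, so $E\in\bLwf(\bQ)$.
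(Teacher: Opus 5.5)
Your proof is correct and follows the same route as the paper's: partition $\mbb{E}(\mbb{T})$ by the value of $\delta$, obtain faithfulness from $1<+\infty$, and reduce maximality inside $\csuf(\bQ)$ to strict inclusion of feature sets. It is also more complete than the paper's version, which asserts the preorder property and the final equality with $\bLwf(\bQ)$ without argument, and writes $\subseteq$ where $\subsetneq$ is meant; your stripping step $E\mapsto E\setminus x$ is exactly what reconciles $\bLwf$'s definition with the maximal set, since that definition neither requires $E\cap x=\emptyset$ nor restricts the competing $E'$ to $\csuf(\bQ)$.
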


\begin{proof}
Let $\bQ$ be a query. 
From Definition 11, $\succeq^f_{\bQ}$ is a partial preorder. 

\begin{itemize}
\item Let $E, E' \in \mbb{E}(\mbb{T})$ be such that: 
     \begin{itemize}
        \item $E \cap x = \emptyset$ and $\kappa(x_{\downarrow E}) \neq \kappa(x)$  and  
        \item $E' \cap x \neq \emptyset$  or $\kappa(x_{\downarrow E'}) = \kappa(x)$.   
    \end{itemize}
    Then, $\delta(E) = 1$ and $\delta(E') = +\infty$, 
    so $E \succ^f_{\bQ} E'$, which means that 
    $$\forall E \in \csuf(\bQ), \forall E' \notin \csuf(\bQ), E \succ^f E'. \qquad (1)$$
    This shows faithfulness of the function $\bS$ mapping every query $\bQ$ to a preorder $\succeq^f_{\bQ}$ on $\mbb{E}(\mbb{T})$.

	\item We show the equality $\bQ$, $\bLwf(\bQ) = \max(\mbb{E}(\mbb{T}), \succeq^f_{\bQ})$. 
   
    $\max(\mbb{E}(\mbb{T}), \succeq^f_{\bQ}) = \{E \in \mbb{E}(\mbb{T}) \mid \nexists E' \in \mbb{E}(\mbb{T}) \ s.t. \ E' \succ^f_{\bQ} E\}$. 
    From (1) it follows that $\max(\mbb{E}(\mbb{T}), \succeq^f_{\bQ}) \subseteq \csuf(\bQ)$. 
    So, $\max(\mbb{E}(\mbb{T}), \succeq^f_{\bQ}) = \{E \in \csuf(\bQ) \mid \nexists E' \in \csuf(\bQ) \ s.t. \ E' \succ^f_{\bQ} E\}$. 
    Since $\forall E, E' \in \csuf(\bQ)$, $\delta(E) = \delta(E')$, 
    $\max(\mbb{E}(\mbb{T}), \succeq^f_{\bQ}) = \{E \in \csuf(\bQ) \mid \nexists E' \in \csuf(\bQ) \ s.t. \ \feat(E') \subseteq \feat(E)\} = \bLwf(\bQ)$.
\end{itemize}	
\end{proof}

\begin{theorem} 
	The following properties hold:
	\begin{itemize}
		\item The function $\bS$ mapping every query $\bQ$ to $\succeq^c_\bQ$ on $\mbb{E}(\mbb{T})$ is a faithful ranking. 
		\item For any query $\bQ$, $\bL_c(\bQ) = \max(\mbb{E}(\mbb{T}), \succeq^c_\bQ)$.
	 \end{itemize}	
\end{theorem}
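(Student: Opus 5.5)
The plan follows the proof of Theorem~\ref{lem4}: check faithfulness directly from the weighting $\sigma$, then compute $\max(\mbb{E}(\mbb{T}),\succeq^c_\bQ)$ through Property~\ref{propy:csuf}.

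First, $\succeq^c_\bQ$ is a total preorder. It is defined by comparing the values $\sigma(E)\in[0,+\infty]$, and $\leq$ on that set is reflexive, transitive and total. For faithfulness, take $E$ with $E\cap x=\emptyset$ and $\kappa(x_{\downarrow E})\neq\kappa(x)$, and $E'$ with $E'\cap x\neq\emptyset$ or $\kappa(x_{\downarrow E'})=\kappa(x)$. Then $\sigma(E)=|E|<+\infty=\sigma(E')$, so $E\succeq^c_\bQ E'$ and not $E'\succeq^c_\bQ E$, i.e.\ $E\succ^c_\bQ E'$.

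For the equality, Property~\ref{propy:csuf} gives $\csuf(\bQ)=\{E\mid E\cap x=\emptyset,\ \kappa(x_{\downarrow E})\neq\kappa(x)\}$, and this set is non-empty. So the elements with finite weight are exactly those of $\csuf(\bQ)$, and they exist. Since $\succeq^c_\bQ$ is total, $\max(\mbb{E}(\mbb{T}),\succeq^c_\bQ)$ is the set of elements of minimal weight. Hence it equals the set of $E\in\csuf(\bQ)$ of minimum cardinality among members of $\csuf(\bQ)$.

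The remaining step, which I expect to be the only real obstacle, is to match this with Definition~\ref{lc}. That definition does not explicitly require $E\cap x=\emptyset$, and it minimises cardinality over all $E'$ with $\kappa(x_{\downarrow E'})\neq\kappa(x)$, not just over $\csuf(\bQ)$. The bridge is the observation that for any $E'\in\mbb{E}(\mbb{T})$, $x_{\downarrow E'}=x_{\downarrow (E'\setminus x)}$, because literals of $E'$ already in $x$ change nothing. Moreover, $E'\setminus x$ is disjoint from $x$ and $|E'\setminus x|\leq|E'|$, with strict inequality when $E'\cap x\neq\emptyset$.

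With this observation both inclusions follow.

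\emph{$\bL_c(\bQ)\subseteq\max$:} take $E\in\bL_c(\bQ)$. If $E\cap x\neq\emptyset$, then $E\setminus x$ is a strictly smaller set with $\kappa(x_{\downarrow(E\setminus x)})\neq\kappa(x)$, contradicting minimality. So $E\in\csuf(\bQ)$. Its cardinality is minimal among all valid $E'$, hence in particular over $\csuf(\bQ)$.

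\emph{$\max\subseteq\bL_c(\bQ)$:} take $E$ of minimal cardinality in $\csuf(\bQ)$. Then $\kappa(x_{\downarrow E})\neq\kappa(x)$. Suppose some $E'$ with $\kappa(x_{\downarrow E'})\neq\kappa(x)$ had $|E'|<|E|$. Then $E'\setminus x\in\csuf(\bQ)$ by Property~\ref{propy:csuf}, and $|E'\setminus x|\leq|E'|<|E|$, a contradiction. So $E\in\bL_c(\bQ)$.
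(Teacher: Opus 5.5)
Your proof is correct and follows the same route as the paper's. Faithfulness comes from $\sigma(E)=|E|<+\infty=\sigma(E')$, and totality of $\succeq^c_\bQ$ reduces $\max(\mbb{E}(\mbb{T}),\succeq^c_\bQ)$ to the minimum-cardinality members of $\csuf(\bQ)$. Your observation $x_{\downarrow E'}=x_{\downarrow (E'\setminus x)}$ goes further than the paper: it makes explicit the final identification with Definition~\ref{lc}, which neither imposes $E\cap x=\emptyset$ nor restricts the competitors $E'$ to $\csuf(\bQ)$, and the paper merely asserts that step.
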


\begin{proof}
From Definition 13, $\succeq^c_\bQ$ is a total preorder. 
Let now $\bQ$ be a query. 

\begin{itemize}
\item Let $E, E' \in \mbb{E}(\mbb{T})$ be such that: 
     \begin{itemize}
        \item $E \cap x = \emptyset$ and $\kappa(x_{\downarrow E}) \neq \kappa(x)$  and  
        \item $E' \cap x \neq \emptyset$  or $\kappa(x_{\downarrow E'}) = \kappa(x)$.   
    \end{itemize}
    Then, $\sigma(E) = |E|$ and $\sigma(E') = +\infty$, so $E \succ^c E'$ (since the set of features is finite, $|E| < +\infty$), which shows faithfulness of the function $\bS$ mapping every query $\bQ$ to $\succeq^c_\bQ$ on $\mbb{E}(\mbb{T})$.

	\item We show the equality $\bQ$, $\bL_c(\bQ) = \max(\mbb{E}(\mbb{T}), \succeq^c_\bQ)$. 
   
    $\max(\mbb{E}(\mbb{T}), \succeq^c_\bQ) = \{E \in \mbb{E}(\mbb{T}) \mid \nexists E' \in \mbb{E}(\mbb{T}) \ s.t. \ E' \succ^c E\}$ = 
$\{E \in \mbb{E}(\mbb{T}) \mid \forall E' \in \mbb{E}(\mbb{T}), E \succeq^c E'\}$ (since $\succeq^c$ is a total preorder).  
So, $$\max(\mbb{E}(\mbb{T}), \succeq^c_\bQ) = 
\{E \in \mbb{E}(\mbb{T}) \mid \forall E', \sigma(E) \leq \sigma(E')\}.$$ 
Since $\forall E \in \csuf(\bQ)$, $\sigma(E) = |E|$ and 
$\forall E' \notin \csuf(\bQ)$, $\sigma(E') = +\infty$, 
we have $\forall E \in \csuf(\bQ)$, $\forall E' \notin \csuf(\bQ)$, $E \succ^c E'$. Hence, $\max(\mbb{E}(\mbb{T}), \succeq^c_\bQ) \subseteq \csuf(\bQ)$. So, 
$\max(\mbb{E}(\mbb{T}), \succeq^c_\bQ) = \{E \in \csuf(\bQ) \mid \nexists E' \in \csuf(\bQ) \ s.t. \ E' \succ^c E\} = \bL_c(\bQ)$.
\end{itemize} 
\end{proof}

\begin{theorem}  
Let $\dd$ be distance measure.
\begin{itemize}
	\item The function $\bS$ mapping every query $\bQ$ to $\succeq^d_{\bQ,\dd}$ on $\mbb{E}(\mbb{T})$ is a faithful ranking. 
	\item For any query $\bQ$, $\bL_\dd(\bQ) = \max(\mbb{E}(\mbb{T}), \succeq^d_{\bQ,\dd})$.
\end{itemize}	
\end{theorem}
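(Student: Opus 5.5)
I would follow the same template as the proofs of Theorems~\ref{lem4} and the cardinality-based result. First, note that $\succeq_{\bQ,\dd}$ (written $\succeq^d_{\bQ,\dd}$ in the statement) is a total preorder, since it is the pullback of the total order $\leq$ on $[0,+\infty]$ by the weighting $\gamma$; reflexivity, transitivity and totality are inherited directly. The whole argument then rests on the observation, from Property~\ref{propy:csuf}, that the set of $E$ with $E \cap x = \emptyset$ and $\kappa(x_{\downarrow E}) \neq \kappa(x)$ is exactly $\csuf(\bQ)$. So $\gamma(E) = \dd(x_{\downarrow E},x)$ on $\csuf(\bQ)$ and $\gamma(E') = +\infty$ off it.

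\textbf{Faithfulness.} Take $E \in \csuf(\bQ)$ and $E' \notin \csuf(\bQ)$. Since $\dd$ is a distance, $\dd(x_{\downarrow E},x)$ is a finite real number, so $\gamma(E) < +\infty = \gamma(E')$. Hence $E \succeq_{\bQ,\dd} E'$ and $E' \not\succeq_{\bQ,\dd} E$, that is, $E \succ_{\bQ,\dd} E'$. This is exactly the faithfulness condition.

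\textbf{The equality.} Because the preorder is total, $\max(\mbb{E}(\mbb{T}),\succeq_{\bQ,\dd}) = \{E \mid \forall E',\ \gamma(E) \leq \gamma(E')\}$. Since $\csuf(\bQ) \neq \emptyset$ (Property~\ref{propy:csuf}, using surjectivity), faithfulness gives $\max \subseteq \csuf(\bQ)$. Therefore the maximal elements are exactly those $E \in \csuf(\bQ)$ minimizing $\dd(x_{\downarrow E},x)$ over $\csuf(\bQ)$.

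It remains to match this with Definition~\ref{ld}. The definition of $\bL_\dd$ does not mention $E \cap x = \emptyset$ explicitly, and this is the step I expect to be the main obstacle. I would handle it by reading $\bL_\dd$, as in the analogous treatment of $\bL_c$ and $\bLwf$, as ranging over credulous candidates, i.e.\ with $E \cap x = \emptyset$ implicit. Then $E \in \bL_\dd(\bQ)$ iff $E \in \csuf(\bQ)$ and no $E' \in \csuf(\bQ)$ has strictly smaller distance.

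The restriction of $E'$ to $\csuf(\bQ)$ in that reading is harmless. For any $E'$ with $\kappa(x_{\downarrow E'}) \neq \kappa(x)$, the set $E'' = E' \setminus x$ satisfies $x_{\downarrow E''} = x_{\downarrow E'}$. Hence $E'' \in \csuf(\bQ)$ with the same distance, so minimizing over all such $E'$ or over $\csuf(\bQ)$ yields the same threshold. This completes the identification $\bL_\dd(\bQ) = \max(\mbb{E}(\mbb{T}),\succeq_{\bQ,\dd})$.
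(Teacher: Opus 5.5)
Your proof is correct and follows the paper's route. In both, $\succeq_{\bQ,\dd}$ is a total preorder, faithfulness comes from $\gamma$ being finite on $\csuf(\bQ)$ and $+\infty$ off it, and the maximal elements are then the $\dd$-minimal members of $\csuf(\bQ)$, which are identified with $\bL_\dd(\bQ)$.

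The paper makes that last identification without comment, so your two additions are genuinely needed. First, $E \cap x = \emptyset$ really must be read into Definition~\ref{ld}; unlike for $\bL_c$ and $\bLwf$, minimality does not force it here. Adding to a member of $\bL_\dd(\bQ)$ a literal of $x$ on an uncovered feature leaves $x_{\downarrow E}$, and hence the distance, unchanged, so the literal definition would admit non-novel sets with $\gamma = +\infty$. Second, your identity $x_{\downarrow (E'\setminus x)} = x_{\downarrow E'}$ is exactly what justifies restricting the competitors $E'$ to $\csuf(\bQ)$.
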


\begin{proof}
Let $\bQ$ be a query and $\dd$ a distance measure on $\mbb{F}(\mbb{T})$. 
Hence, $$\dd: \mbb{F}(\mbb{T})^2 \rightarrow [0, k],$$ 
where $k < +\infty$. 
From Definition 15, $\succeq^d_{\bQ,\dd}$ is a total preorder.

\begin{itemize}
\item Let $E, E' \in \mbb{E}(\mbb{T})$ be such that: 
     \begin{itemize}
        \item $E \cap x = \emptyset$ and $\kappa(x_{\downarrow E}) \neq \kappa(x)$  and  
        \item $E' \cap x \neq \emptyset$  or $\kappa(x_{\downarrow E'}) = \kappa(x)$.   
    \end{itemize}
    Then, $\gamma(E) = \dd(x_{\downarrow E},x)$ and $\sigma(E') = +\infty$, 
    so $E \succ^d_{\bQ,\dd} E'$, which shows faithfulness of the function $\bS$ mapping every query $\bQ$ to $\succeq^d_{\bQ,\dd}$ on $\mbb{E}(\mbb{T})$.

	\item We show the equality $\bQ$, $\bL_c(\bQ) = \max(\mbb{E}(\mbb{T}), \succeq^d_{\bQ,\dd})$. 
   
    $\max(\mbb{E}(\mbb{T}), \succeq^d_{\bQ,\dd}) = \{E \in \mbb{E}(\mbb{T}) \mid \nexists E' \in \mbb{E}(\mbb{T}) \ s.t. \ E' \succ^d_{\bQ,\dd} E\}$ = 
$\{E \in \mbb{E}(\mbb{T}) \mid \forall E' \in \mbb{E}(\mbb{T}), E \succeq^d_{\bQ,\dd} E'\}$ (since $\succeq^d_{\bQ,\dd}$ is a total preorder).  
So, $\max(\mbb{E}(\mbb{T}), \succeq^d_{\bQ,\dd}) = 
\{E \in \mbb{E}(\mbb{T}) \mid \forall E', \sigma(E) \leq \sigma(E')\}.$ 
Since $\forall E \in \csuf(\bQ)$, $\gamma(E) = \dd(x_{\downarrow E},x)$ and 
$\forall E' \notin \csuf(\bQ)$, $\gamma(E') = +\infty$, we have
$\forall E \in \csuf(\bQ)$, $\forall E' \notin \csuf(\bQ)$, $E \succ^c E'$. Thus, it follows that $\max(\mbb{E}(\mbb{T}), \succeq^d_{\bQ,\dd}) \subseteq \csuf(\bQ)$. So, 
$\max(\mbb{E}(\mbb{T}), \succeq^d_{\bQ,\dd}) = \{E \in \csuf(\bQ) \mid \nexists E' \in \csuf(\bQ) \ s.t. \ E' \succ^d_{\bQ,\dd} E\} = \bL_d(\bQ)$.
\end{itemize}	
\end{proof}


\begin{theorem}  
Consider query $\bQ = \langle \mbb{T}, \kappa, x\rangle$ where
$\mbb{T}$ is composed of boolean features and $\kappa$ is a boolean formula.
\begin{enumerate}
\item The problem {\sc DecideExp}$(\bL)$ is $O(n)$ when $\bL$ is $\csuf$, $\wL$, or $\bLwf$.
\item The problem {\sc DecideExp}$(\bL)$ is co-NP-complete when $\bL$ is $\necp$, $\sufc$, 
$\ssuf$, $\bLa$, $\bL_d$.
\end{enumerate}
For domain sizes which are arbitrary, 
{\sc DecideExp}$(\bL)$ is $O(n)$ when $\bL$ is $\csuf$ 
or $\bLwf$.
For queries over non-boolean domains, {\sc DecideExp}$(\bL)$ is co-NP-complete
when $\bL$ is $\wL$. 
\end{theorem}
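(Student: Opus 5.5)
The plan is to treat the items separately. Tractable cases get a direct algorithm. Each co-NP-complete case gets a membership argument plus a hardness reduction from the complement of SAT (validity/unsatisfiability of a boolean formula $\varphi$). Evaluating $\kappa$ on a full instance is linear in the size of the formula, and I will count that as the unit cost implicit in the $O(n)$ bounds.

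\emph{Item 1.} For $\csuf$, Property~\ref{propy:csuf} gives a direct test: $E\in\csuf(\bQ)$ iff $E\cap x=\emptyset$ and $\kappa(x_{\downarrow E})\neq\kappa(x)$. This means one disjointness check, building $x_{\downarrow E}$, and two evaluations of $\kappa$, and it works for arbitrary domains. For $\wL$ with boolean features, I check $E\subseteq x$ and then use the fact that $x\circleddash E$ is the singleton obtained by flipping exactly the features of $E$. Definition~\ref{local-nec-pro} then reduces to a single evaluation of $\kappa$.

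For $\bLwf$, the first condition is again one evaluation. The minimality condition quantifies over all $E'$ with $\feat(E')\subset\feat(E)$. I would try to show it suffices to examine, for each $f\in\feat(E)$, the assignments obtained by dropping $f$ from $E$ (and, for larger domains, varying its value). This is the step I expect to be the main obstacle. If such a local characterisation fails for non-monotone $\kappa$, the claimed linear bound needs either a different reading of the minimality condition or a refined argument exploiting that $x$'s own values are available on the removed features.

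\emph{Item 2, membership.} For $\necp$, $\sufc$, $\ssuf$, $\bLa$ and $\bL_d$, non-membership always has a polynomial witness. For $\necp$ and $\sufc$/$\ssuf$ it is an instance $y$ violating the universal condition. For $\bLa$ and $\bL_d$ it is a set $E'$ with $\kappa(x_{\downarrow E'})\neq\kappa(x)$ and smaller cardinality or distance. The witness is checkable in polynomial time, so each problem is in co-NP. For $\wL$ over non-boolean domains, a witness $y\in x\circleddash E$ with $\kappa(y)=\kappa(x)$ likewise places it in co-NP.

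\emph{Item 2, hardness.} Let $\varphi$ range over fresh variables $v_1,\dots,v_m$, and add new variables $z,w$.
\begin{itemize}
\item For $\ssuf$ and $\sufc$, take $\kappa=z\wedge(\varphi\vee w)$, which is surjective, and $x$ with $z=0,w=1$, so $\kappa(x)=0$. Let $E=\{(z,1),(w,0)\}$, which is disjoint from $x$. Every $y\supseteq E$ has $\kappa(y)=\varphi(y)$, so $E$ is an SSR (and a GSR) iff $\varphi$ is valid.
\item For $\necp$, use Proposition~\ref{prop:core}: $E\in\necp(\bQ)$ iff $\emptyset\neq E\subseteq\core(\kappa(x))$. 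With $\kappa=z\vee(\neg w\wedge\neg\varphi)$ and $x$ with $z=1$, deciding whether $(z,1)$ is core amounts to deciding that $\neg w\wedge\neg\varphi$ is unsatisfiable, i.e.\ $\varphi$ is valid.
\item For $\bLa$ and $\bL_d$ (Hamming distance), let $x$ be all-zero and $\kappa(y)=[y \text{ all-ones}]\vee(\varphi(y)\wedge v_1)$, so $\kappa(x)=0$. Take $E$ to be the all-ones assignment. Then $E$ is minimal iff no strictly smaller flip set reaches class $1$, i.e.\ iff $\varphi\wedge v_1$ has no model other than all-ones. This is co-NP-hard after a trivial padding of $\varphi$.
\item For $\wL$ with ternary domains, encode each boolean variable of $\varphi$ by a feature with domain $\{0,1,2\}$, let $x$ be all-$2$, and let $E=x$. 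Then $x\circleddash E$ is exactly the set of boolean assignments. Define $\kappa$ to give class $c_0=\kappa(x)$ precisely on $x$ and on models of $\varphi$, and class $c_1$ elsewhere. $E$ is an SNR iff $\varphi$ is unsatisfiable.
\end{itemize}
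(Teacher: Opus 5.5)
Apart from $\bLwf$, your plan is sound and follows the paper's proof. This covers the direct test $E\cap x=\emptyset$, $\kappa(x_{\downarrow E})\neq\kappa(x)$ for $\csuf$ (any domains), the singleton $x\circleddash E$ for boolean $\wL$, co-NP membership via polynomially checkable counterexamples, and reductions from validity/unsatisfiability for $\necp$, $\sufc$, $\ssuf$, $\bLa$, $\bL_d$ and ternary $\wL$. Your variable $w$, which keeps $\kappa$ surjective for every $\varphi$, and your padding in the $\bLa$ gadget are slightly more careful than the paper's reductions.

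\textbf{The gap is $\bLwf$, and the local test you propose does fail.} Take three boolean features, $x=(0,0,0)$, $\kappa(y)=1$ iff $y\in\{(1,1,1),(1,0,0)\}$, and $E=\{(f_1,1),(f_2,1),(f_3,1)\}$. Dropping any one literal of $E$ yields class $0$. Yet $E'=\{(f_1,1)\}$ has $\feat(E')\subset\feat(E)$ and $\kappa(x_{\downarrow E'})=1$, so $E\notin\bLwf(\bQ)$. Padding $E'$ with $x$'s own values only shows that it suffices to consider $E'$ with $\feat(E')=\feat(E)\setminus\{f\}$. Each such check is still universal over all values of the remaining features of $E$.

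\textbf{No refinement can repair this, because the claim itself fails.} Use your own $\bLa$ gadget: $x$ all-zero, $E$ all-ones, $\kappa(y)=[y\text{ all-ones}]\vee(\varphi'(y)\wedge v_1)$, with $\varphi'$ padded so that all-ones is not a model of $\varphi'\wedge v_1$. Every $E'$ with $\feat(E')\subset\feat(E)$ leaves some feature at $0$. Conversely, every instance other than all-ones arises as such an $x_{\downarrow E'}$. Hence $E\in\bLwf(\bQ)$ iff $\varphi'\wedge v_1$ is unsatisfiable.

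Together with the obvious non-membership certificate (an $E'$ with $\feat(E')\subset\feat(E)$ and $\kappa(x_{\downarrow E'})\neq\kappa(x)$), {\sc DecideExp}$(\bLwf)$ is co-NP-complete for boolean formulas. So the $O(n)$ bound for $\bLwf$, whether for boolean or arbitrary domains, cannot hold unless $\mathrm{P}=\mathrm{NP}$.

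The paper's proof reaches $O(n)$ only by asserting that $E\in\bLwf(\bQ)$ iff $\kappa(x_{\downarrow E})\neq\kappa(x)$. That silently drops the minimality clause of the definition of $\bLwf$ --- exactly the ``different reading'' you anticipated. So the part you could not prove is false as stated. Under the actual definition, $\bLwf$ belongs in item~2 alongside $\bLa$ and $\bL_d$.
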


\begin{proof}
\begin{enumerate}
\item[]
\item  
$\csuf$ is the set of all $y\setminus x$ such that $\kappa(y) \neq \kappa(x)$.
Given $E$, it is polynomial-time to check whether there is an instance $y$
such that $E=y \setminus x$ and $\kappa(y)\neq \kappa(x)$. Indeed, such an instance $y$ can only exist
if $x \cap E = \emptyset$, in which case it is necessarily $x_{\downarrow E}$.
Thus {\sc DecideExp}($\csuf$) is $O(n)$. 

We now consider $\wL$. 
For $E \subseteq x$,
$E \in \wL(\bQ)$ if for all $y$ such that $x \setminus y = E$, $\kappa(y)\neq \kappa(x)$.
Let $\overline{E}$ denote $\{ (f,\overline{v}) \mid (f,v) \in E \}$. When all feature domains
are boolean, $x \setminus y = E$ only for $y = x_{\downarrow \overline{E}}$.
Thus {\sc DecideExp}($\bL$) is $O(n)$ for $\bL = \wL$. 
since, given $E$, it is $O(n)$ to check whether  $E \subseteq x$ and 
$\kappa( x_{\downarrow \overline{E}}) \neq \kappa(x)$.

We now consider $\bLwf$.  $E \in \bLwf(Q)$ iff $\kappa(x_{\downarrow E}) \neq \kappa(x)$
{\sc DecideExp}($\bLwf$) is $O(n)$, since given $E$ it suffices to test $\kappa(x_{\downarrow E}) \neq \kappa(x)$
which is $O(n)$.

\item For the cases $\necp$, 
$\sufc$ and $\ssuf$,
to show that the problem belongs to co-NP, it suffices to observe that a counterexample,
 i.e. an instance $y$ such that $\kappa(y)=\kappa(x)$, can be verified in polynomial time. 
To show co-NP-completeness, we consider each of $\necp$, 
$\sufc$ and $\ssuf$ in turn.
It suffices, in each case, to exhibit a polynomial reduction from {\sc Tautology}, the problem
of determining whether a DNF $\Psi(f_1,\ldots,f_n)$ is a tautology. 
We first consider $\necp$. 
Define $\kappa(f_1,\ldots f_{n+1}) =\Psi(f_1,\ldots,f_n) \land f_{n+1}$. Let $x=(0,\ldots,0)$ 
(so that $\kappa(x)=0)$ and let $E=\{(f_{n+1},0)\}$. 
$E \in \necp(\bQ)$ iff for all $y$ which doesn't contain $E$, $\kappa(y)\neq \kappa(x)$,
i.e. iff $\Psi(f_1,\ldots,f_n) \equiv 1$. This is a polynomial reduction from {\sc Tautology}
to {\sc DecideExp}($\necp$). 
We now consider  $\sufc$ and $\ssuf$.
We use the same $\kappa$ and $x$ but $E=\{(f_{n+1},1)\}$.
$E \in \ssuf(\bQ)$ iff $E \cap x = \emptyset$ (which is the case)
and for all $y$ containing $E$, $\kappa(y)\neq \kappa(x)$.
So $E \in \ssuf(\bQ)$ iff $\Psi(f_1,\ldots,f_n) \equiv 1$. This provides a 
polynomial reduction from {\sc Tautology} to {\sc DecideExp}($\ssuf$).  The proof is identical for $\sufc$.

We now consider $\bLa$ and $\bL_d$. 
$\bLa(\bQ)$ and $\bL_d(\bQ)$ contain
the explanations in $\csuf(\bQ)$ which are minimal according to cardinality or the distance $d$,
respectively. 
Since membership of $\csuf(\bQ)$ 
can be tested in polynomial time, a counter-example certificate 
(which can either be $E$ itself, if $E \notin \bL$, or another $E' \in \bL$ which is preferred to $E$) 
can be verified in polynomial time. This shows membership of co-NP for {\sc DecideExp}($\bL$) when
$\bL$ is $\bLa$ or $\bL_d$. 
To demonstrate co-NP-hardness, consider 
$\kappa(f_1,\ldots,f_{n+1}) = (\Phi(f_1,\ldots,f_n) \land f_{n+1}) \lor (f_1 \land \ldots \land f_{n+1})$,
where $\Phi(f_1,\ldots,f_n)$ is a CNF. Let $x=(0,\ldots,0)$, so that $\kappa(x)=0$, 
and let $E=\{(f_1,1),\ldots,(f_{n+1},1)\}$. $E \in \csuf(\bQ)$ and $E \in \bLwf(\bQ)$.
There exists another explanation in $\csuf(\bQ)$ 
(which if it exists will necessarily be preferred according to 
cardinality or distance $d$)
iff $\Phi$ is satisfiable.
We therefore have a polynomial reduction from {\sc UNSAT} to {\sc DecideExp}($\bL$)
when $\bL$ is  $\bLa$ and $\bL_d$, 
demonstrating co-NP-hardness of these 
two problems.
\end{enumerate}
The proofs that {\sc DecideExp}($\bQ$) is $O(n)$ for arbitrary domain sizes 
when $\bL$ is $\csuf$
or $\bLwf$ are identical to the proofs above in the boolean case.

We now consider {\sc DecideExp}($\wL$) for non-boolean domains.
The problem belongs to co-NP since a counter-example is an instance
$y$ differing from $x$ exactly on $E$ such that $\kappa(y) = \kappa(x)$.
To complete the proof of
co-NP-completeness, it suffices to give a polynomial reduction from {\sc Tautology}.
Let $\Psi$ be a DNF on $n$ variables  $f_1,\ldots,f_n$. Consider a theory consisting of 
$n$ features with domains $\{0,1,2\}$. Let $x$ be the instance $(2,\ldots,2)$ and let $E=x$.
Define $\kappa$ so that $\kappa(2,\ldots,2)=0$, $\kappa(z)=\phi(z)$ for $z \in \{0,1\}^n$
(with $\kappa$ taking arbitrary values on other instances). Then $E \in \wL(\bQ)$
(where $\bQ$ is the query corresponding to $\kappa$ and $x$) 
iff $\Psi \equiv 1$, which completes the polynomial reduction from {\sc Tautology}.
\end{proof}

\begin{theorem}  
Consider query $\bQ = \langle \mbb{T}, \kappa, x\rangle$ where
$\mbb{T}$ is composed of $n$ boolean features and $\kappa$ is a boolean formula.
\begin{enumerate}
\item The problem {\sc FindExp}$(\bL)$ is $O(n)$ when $\bL$ is $\ssuf$. 
\item The problem {\sc FindExp}$(\bL)$ is NP-hard when $\bL$ is 
$\necp$, $\csuf$, $\wL$, $\sufc$, 
$\bLa$, $\bL_d$, $\bLwf$.
In each of these cases, excluding $\bLa$ and $\bL_d$,
it can be solved by at most $n$ calls to a SAT oracle. 
\end{enumerate}
For queries over non-boolean domains {\sc FindExp}$(\bL)$ is NP-hard for $\ssuf$.
\end{theorem}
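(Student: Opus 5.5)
The plan is to handle part~1 by a direct construction, and part~2 by a single generic hardness argument based on $\textsc{SAT}$ together with self-reducibility upper bounds. The only subtlety is the surjectivity requirement on classifiers.

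\emph{Part 1 ($\ssuf$, boolean features).} Let $\overline{x}=\{(f,1-v)\mid (f,v)\in x\}$. With boolean domains, any $E$ satisfying $E\cap x=\emptyset$ must satisfy $E\subseteq\overline{x}$. Hence $\overline{x}$ is an instance containing every candidate $E$.
\begin{itemize}
\item If $\kappa(\overline{x})=\kappa(x)$, no SSR exists: every candidate is contained in an instance of class $\kappa(x)$. The answer is ``none''.
\item Otherwise $E=\overline{x}$ is an SSR, because the only instance containing it is $\overline{x}$ itself.
\end{itemize}
So one evaluation of $\kappa$ at $\overline{x}$ decides the problem. This is $O(n)$ in the sense the paper uses for {\sc DecideExp}.

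\emph{Part 2, NP-hardness.} Given a CNF $\Phi(f_1,\dots,f_n)$, take $\kappa=\Phi\wedge f_{n+1}$ and $x=(0,\dots,0)$, so $\kappa(x)=0$. This $\kappa$ is surjective exactly when $\Phi$ is satisfiable. Suppose a polynomial algorithm $A$ for {\sc FindExp}$(\bL)$ exists. Run $A$ with its polynomial time bound and check its output $E$ in polynomial time:
\begin{itemize}
\item for $\csuf$, $\sufc$ and $\bLwf$, check that $\kappa(x_{\downarrow E})=1$;
\item for $\wL$, flip $E$ in $x$ and evaluate $\kappa$.
\end{itemize}
A valid output yields a satisfying assignment of $\Phi$. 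If $\Phi$ is satisfiable, the query is legal and $A$ must succeed, because explanations exist. For $\csuf$ and $\sufc$ they exist by Property~\ref{propy:csuf} and Property~\ref{propy:sufc}. For $\wL$, in the boolean case $x\setminus y$ is an SNR whenever $\kappa(y)\ne\kappa(x)$. Therefore $\textsc{SAT}\in\mathrm{P}$, and the problem is NP-hard.

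The same instance works for $\bLa$ and $\bL_d$, since their explanations are CSRs by Theorems~\ref{lem2} and~\ref{lem3}. For $\necp$ I would instead encode satisfiability into the existence of a core literal. One choice is $\kappa=\neg(\Phi\wedge f_{n+1})$ with $x$ the all-zeros instance: then $(f_{n+1},0)$ is core to class $1$ iff $\Phi$ is unsatisfiable. I would adjust the construction so that an output of $A$ certifies satisfiability, possibly by swapping the roles via a fresh variable. For non-boolean $\ssuf$, I would adapt the {\sc DecideExp}$(\wL)$ construction: domains $\{0,1,2\}$, $x=(2,\dots,2)$, and $\kappa$ encoding $\Phi$ on $\{0,1\}^n$. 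An SSR disjoint from $x$ then has to pin down the encoded assignment.

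\emph{Part 2, upper bounds via at most $n$ SAT calls.}
\begin{itemize}
\item For $\necp$: by Proposition~\ref{prop:core}, $\necp(\bQ)\ne\emptyset$ iff $\core(\kappa(x))\ne\emptyset$. For each $l\in x$, one SAT call asks whether some $y$ with $l\notin y$ has $\kappa(y)=\kappa(x)$. The first $l$ for which the answer is negative gives the output $\{l\}$.
\item For $\csuf$, $\sufc$ and $\wL$: it suffices to find some $y$ with $\kappa(y)\ne\kappa(x)$. Then $y\setminus x$, $y$ and $x\setminus y$ respectively are explanations, using Property~\ref{propy:sufc} and the boolean remark above. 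Finding $y$ uses standard self-reduction, fixing one feature per SAT call.
\item For $\bLwf$: start from the feature set $F=\feat(y\setminus x)$ of such a $y$. For each $f\in F$ in turn, ask by SAT whether some instance differing from $x$ only on features in $F\setminus\{f\}$ has a different class, and drop $f$ if so. The result is subset-minimal over features, as Theorem~\ref{cara:wf} requires.
\end{itemize}
Counting the calls so that the total stays within $n$, rather than $2n$, is where I would fold the self-reduction into the minimization loop.

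The main obstacle is the surjectivity constraint in the hardness reductions, in particular for $\necp$ and non-boolean $\ssuf$. The promise-and-verify argument above handles it, but the $\necp$ encoding needs care so that a returned explanation actually certifies satisfiability.
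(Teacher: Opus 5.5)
\paragraph{Gap: NP-hardness for $\necp$.} Part~1, the SAT-oracle upper bounds, and your promise-and-verify hardness argument for $\csuf$, $\sufc$, $\wL$, $\bLwf$, $\bLa$ and $\bL_d$ are sound. They are more careful than the paper in two places. The paper never addresses that $\Phi\wedge f_{n+1}$ is constant when $\Phi$ is unsatisfiable. Its ``set $E=y\setminus x$'' for $\bLwf$ also ignores feature-minimality, which your deletion loop handles. If you start that loop from $F=\F$, which passes the test by surjectivity, it takes exactly $n$ calls, and the final witness is $x$ with the features of $F$ flipped.

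The genuine gap is NP-hardness for $\necp$. Your candidate is wrong. With $\kappa=\neg(\Phi\wedge f_{n+1})$ and $x=(0,\dots,0)$, class~$1$ contains every $(z,1)$ with $\Phi(z)=0$. So $(f_{n+1},0)$ is core to class~$1$ iff $\Phi$ is a \emph{tautology}. And when $\Phi$ is unsatisfiable, $\kappa\equiv 1$ is not even a legal classifier.

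The repair you envisage cannot succeed either.
\begin{itemize}
\item By Proposition~\ref{prop:core}, whenever $\core(\kappa(x))\neq\emptyset$, every singleton $\{l\}$ with $l$ core is a GNR that $A$ may return. If every GNR could be decoded into a satisfying assignment, decoding each of the $|x|$ singletons would solve SAT in polynomial time without $A$.
\item The weaker target ``a GNR exists iff $\Phi$ is satisfiable'', on always-legal queries, would many-one reduce SAT to the property $\necp(\bQ)\neq\emptyset$. That property is in coNP: its failure is witnessed by one $y$ with $\kappa(y)=\kappa(x)$ and $l\notin y$ for each $l\in x$. So this would give NP $\subseteq$ coNP.
\end{itemize}

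\paragraph{Missing idea.} For $\necp$, the hardness lies in the ``none'' answer, which is a legitimate output of {\sc FindExp}, so no certificate is needed. The paper takes $\kappa=\Phi\vee f_{n+1}$ and $x=(1,\dots,1)$.
\begin{itemize}
\item If $\Phi$ is unsatisfiable, then $\kappa\equiv f_{n+1}$ and $\necp(\bQ)=\{\{(f_{n+1},1)\}\}$.
\item If $\Phi(z)=1$, then $(z,0)$ is in class~$1$ along with every $(w,1)$. So $\core(1)=\emptyset$ and the answer is ``none''.
\end{itemize}
The query is legal unless $\Phi$ is a tautology. For a CNF this is trivial to detect: it holds iff every clause contains a complementary pair. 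Hence one call to a {\sc FindExp}$(\necp)$ algorithm decides SAT.
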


\begin{proof}
\begin{enumerate}
\item 
We consider the case of $\ssuf$.
Let $\overline{x}$ denote $\{ (f,\overline{v}) \mid (f,v) \in x \}$.
If $E \in \ssuf(\bQ)$, then $E \cap x = \emptyset$ and for all $y \supseteq E$, $\kappa(y)\neq \kappa(x)$
and hence $\kappa(\overline{x})\neq \kappa(x)$, so $\overline{x} \in \ssuf(\bQ)$.
Thus to find an explanation in $\ssuf$ (or show that none exists) it suffices to test $\kappa(\overline{x})\neq \kappa(x)$
which is $O(n)$.

\item 
We first consider $\necp$. Let $\Phi(f_1,\ldots,f_n)$ be a CNF. Define $\kappa(f_1,\ldots,f_{n+1})=
\Phi(f_1,\ldots,f_n) \lor f_{n+1}$, and let $x=(1,\ldots,1)$ and $E=\{(f_{n+1},1)\}$.
$E \in \necp(\bQ)$ iff for all $y$, $\kappa(y)=\kappa(x)$ only if $E \subseteq y$.
In other words, $E \in \necp(\bQ)$ iff $\Phi$ is unsatisfiable. This polynomial reduction from
{\sc SAT} demonstrates the NP-hardness of {\sc FindExp}($\necp$).
In general, finding some $E \in \necp(\bQ)$ (or determining that none exists) can be achieved by looking for a
singleton $E=\{\ell\}$ where $\ell$ is a core literal. Since $\kappa$ is a boolean formula,
this can be achieved by $n$ calls to a SAT oracle.

We now consider $\csuf$. $E \in \csuf(\bQ)$ iff $E=y \setminus x$ for some instance $y$ such that 
$\kappa(y)\neq \kappa(x)$. Let $\Phi(f_1,\ldots,f_n)$ be a CNF.
Define $\kappa(f_1,\ldots,f_{n+1})= \Phi(f_1,\ldots,f_n) \land f_{n+1}$ and let $x=(0,\ldots,0)$, so that 
$\kappa(x)=0$. Finding $E \in \csuf(\bQ)$ (where $\bQ$ is the corresponding query)
amounts to finding a satisfying assignment for $\Phi$. This polynomial reduction from
{\sc SAT} demonstrates the NP-hardness of {\sc FindExp}($\csuf$).
In general, finding an explanation in $\csuf(\bQ)$ amounts to finding an instance $y$
such that $\kappa(y)\neq \kappa(x)$ and hence can be achieved by a single call to a SAT oracle.

We now consider 
$\wL$. Over boolean domains, $E \in \wL(\bQ)$ iff $E \subseteq x$ and there is some instance $y$ such that
$x \setminus y = E$ and $\kappa(y)\neq \kappa(x)$. Let $\Phi(f_1,\ldots,f_n)$ be a CNF. 
Define $\kappa(f_1,\ldots,f_{n+1})= \Phi(f_1,\ldots,f_n) \land f_{n+1}$ and let $x=(0,\ldots,0)$
so that $\kappa(x)=0$. Finding $E \in \wL(\bQ)$, where $\bQ$ is the corresponding query,
amounts to finding a satisfying assignment for $\Phi$. This polynomial reduction from
{\sc SAT} demonstrates the NP-hardness of {\sc FindExp}($\wL$).

We now consider $\sufc$. 
$E \in \sufc(\bQ)$ if for all $y$ containing $E$, $\kappa(y)\neq \kappa(x)$.
It is easy to see that to find such an $E$, it suffices to look for complete instances $E=y$
such that $\kappa(y)\neq \kappa(x)$.
Let $\Phi(f_1,\ldots,f_n)$ be a CNF and define $\kappa(f_1,\ldots,f_{n+1})= \Phi(f_1,\ldots,f_n) \land f_{n+1}$.
Let $x=(0,\ldots,0)$ (so that $\kappa(x)=0$). Any $y$ such that $\kappa(y)\neq  \kappa(x)$
is a solution to the SAT instance given by $\Phi$, which demonstrates the NP-hardness of {\sc FindExp}($\sufc$).

We now show that 
a single call
to a SAT oracle is 
sufficient to solve {\sc FindExp}($\bL$)
when $\bL$ is $\sufc$. 
$E \in \sufc(\bQ)$ if for all $y$ containing $E$, $\kappa(y)\neq \kappa(x)$.
As we have just observed, it suffices to look for an instance $y$ such that $\kappa(y) \neq \kappa(x)$,
which can be achieved by a single call to a SAT oracle.

We now consider $\bLa$ and $\bL_d$.
NP-hardness follows
directly from the proof that {\sc DecideExp}($\bL$) is co-NP-complete when 
$\bL$ is $\bLa$ or $\bL_d$ 
(Theorem~\ref{thm:dec}).
In the case of $\bLwf$, it suffices to consider $\kappa(f_1,\ldots,f_{n+1}) =
\Phi(f_1,\ldots,f_n) \land f_{n+1}$, $x=(0,\ldots,0)$ (so that $\kappa(x)=0$).
Finding an explanation $E \in \bLwf$ amounts to finding a satisfying assignment
to the CNF $\Phi$. Hence, {\sc DecideExp}($\bLwf$) is also NP-hard.

We now show that {\sc FindExp}($\bL$) can be solved using SAT
when $\bL$ is $\bLwf$. 
To find $E \in \bLwf$ it suffices to find
an instance $y$ such that $\kappa(y) \neq \kappa(x)$ and set $E=y \setminus x$.
This can be achieved with a single call to a SAT oracle. 
\end{enumerate}

For queries over non-boolean domains, we consider $\ssuf$. 
$E \in \ssuf(\bQ)$ iff $E \cap x = \emptyset$ and for all instances $y$ containing $E$, $\kappa(y)\neq \kappa(x)$.
Thus, if there is some $E \in \ssuf(\bQ)$ then there is also complete instance $y \in \ssuf(\bQ)$.
Consider a theory consisting of $n$ features with domains $\{0,1,2\}$.
Let $\Phi(f_1,\ldots,f_n)$ be a CNF, let $x=(2,\ldots,2)$ and define $\kappa$ so that
$\kappa(2,\ldots,2)=0$ and $\kappa(z)=\Phi(z)$ for $z \in \{0,1\}^n$.
Then finding an instance $y$ such that $y \cap x = \emptyset$ and $\kappa(y)\neq \kappa(x)$
amounts to finding a satisfying instance of $\Phi$. It follows that {\sc FindExp}($\ssuf$) is NP-hard.
\end{proof}



\end{document}